\numberwithin{equation}{section}
\newtheorem{thm}{Theorem}[section]
\newtheorem{lemma}[thm]{Lemma}
\newtheorem{pro}[thm]{Proposition}
\newtheorem{corollary}[thm]{Corollary}
\newtheorem{rem}[thm]{Remark}
\newtheorem{as}{Assumption}
\newtheorem{alg}{Algorithm}
\newcommand{\be}{\begin{equation}}
\newcommand{\ee}{\end{equation}}
\newcommand{\bea}{\begin{eqnarray*}}
\newcommand{\eea}{\end{eqnarray*}}
\newcommand{\mR}{\mathbb{R}}
\newcommand{\mN}{\mathbb{N}}
\newcommand{\mE}{\mathbb{E}}
\newcommand{\mcE}{\mathcal{E}}
\newcommand{\mcN}{\mathcal{N}}
\newcommand{\mcT}{\mathcal{T}}
\newcommand{\mcS}{\mathcal{S}}
\newcommand{\mcX}{\mathcal{X}}
\newcommand{\mcV}{\mathcal{V}}
\newcommand{\mcH}{\mathcal{H}}
\newcommand{\mcK}{\mathcal{K}}
\newcommand{\mbK}{\mathbf{K}}
\newcommand{\mbR}{\mathbf{R}}
\newcommand{\mbb}{\mathbf{b}}
\newcommand{\mbx}{\mathbf{x}}
\newcommand{\mby}{\mathbf{y}}
\newcommand{\TK}{\mathcal{T}}
\newcommand{\TX}{\mathcal{T}_{\bf x}}
\newcommand{\TXt}{\mathcal{T}_{\tilde{\bf x}}}
\newcommand{\SX}{\mathcal{S}_{\bf x}}
\newcommand{\TKL}{\mathcal{T}_{\lambda}}
\newcommand{\TXL}{\mathcal{T}_{{\bf x}\lambda}}
\newcommand{\LK}{\mathcal{L}}
\newcommand{\IK}{\mathcal{I}_{\rho}}
\newcommand{\Ptx}{P_{\mbtx}}
\newcommand{\mbtx}{\mathbf{\tilde{x}}}
\newcommand{\TTX}{\mathcal{T}_{\mbtx}}
\newcommand{\STX}{\mathcal{S}_{\mbtx}}
\newcommand{\HK}{H_K}
\newcommand{\LR}{L_{\rho}^2}
\newcommand{\HKx}{H_{\mbtx}}
\newcommand{\J}{{\bf J}}
\newcommand{\FH}{f_{\mathcal{H}}}
\DeclareMathOperator{\spn}{span}
\DeclareMathOperator{\rank}{rank}
\DeclareMathOperator*{\argmin}{arg\,min}
\DeclareMathOperator*{\sign}{sign}
\DeclareMathOperator*{\diag}{diag}
\newcommand{\tr}{\operatorname{tr}}
\newcommand{\la}{\langle}
\newcommand{\ra}{\rangle}
\newcommand{\eref}[1] {(\ref{#1})}
\begin{document}
\title{Optimal Rates for Learning with Nystr\"{o}m Stochastic Gradient Methods\thanks{This work was done when J.L. was working in the LCSL, IIT@MIT. J.L. is now with the LIONS, EPFL. (jhlin5@hotmail.com)}}

\renewcommand\Affilfont{\tiny}
\author{Junhong Lin$^\dag\qquad$ Lorenzo Rosasco$^{\dag\ddag}$\\
{\small{\em junhong.lin@iit.it}$~~\qquad$ {\em lrosasco@mit.edu}}$~\quad$\\
{\scriptsize $^\dag$LCSL, Massachusetts Institute of Technology and Istituto Italiano di Tecnologia, Cambridge, MA 02139, USA}\\
{\scriptsize $^\ddag$DIBRIS, Universit\`a degli Studi di Genova, Via Dodecaneso 35, Genova, Italy}\\
}
\maketitle \baselineskip 16pt

\begin{abstract}
 In the setting of nonparametric regression, we propose and study a combination of  stochastic gradient methods with Nystr\"{o}m subsampling, allowing multiple passes over the data and mini-batches.
 Generalization error bounds for the studied algorithm are provided. Particularly, optimal learning rates are  derived considering different possible choices of the step-size, the mini-batch size, the number of iterations/passes, and the subsampling level. In comparison with state-of-the-art algorithms such as the classic stochastic gradient methods
 and kernel ridge regression with Nystr\"{o}m, the studied algorithm has  advantages on the computational complexity, while achieving the same optimal learning rates. Moreover, our results indicate that using mini-batches can reduce the total computational cost, while achieving the same optimal statistical results. 
\end{abstract}


\section{Introduction}
In supervised learning, given  a sample of $n$ pairs of inputs and outputs, the goal is to estimate a function to be used to predict future outputs based on observing only the corresponding inputs. The quality of an estimate is often  measured in terms of the mean-squared prediction error, in which case the regression function is optimal.

Since the properties of the function to be estimated are not known a priori, nonparametric techniques,  that can adapt their complexity to the problem at hand,  are often key to good results.
Kernel methods \cite{friedman2001elements,shawe2004kernel} are probably the most common nonparametric approaches to learning. They are based on choosing a reproducing kernel Hilbert space (RKHS) as the hypothesis space in the design of learning algorithms.
A classical learning algorithm using kernel methods to perform learning tasks is  kernel ridge regression (KRR), which is based on minimizing
the sum of a data-fitting term and an explicit penalty term. The  penalty term is used for regularization, and controls the complexity of the solution, preventing overfitting.
The statistical properties of KRR have  been studied extensively, see e.g. \cite{caponnetto2007optimal,steinwart2008support}, and are known to be optimal in a minmax sense \cite{Tsybakov2008introduction}. The drawbacks of KRR are mainly computational. 
Indeed, a standard implementation of KRR requires the computation of a linear system defined by a kernel matrix, which thus requires costs $O(n^3)$ in time and $O(n^2)$ in memory, where $n$ is the number of points. Such scalings are prohibitive when in large scale scenario, where the sample size $n$ is large. A possible alternative is considering learning algorithms based on  iterative procedure \cite{engl1996regularization,zhang2005boosting,yao2007early}. In this  kind of learning algorithms, an empirical objective function is optimized in an iterative way with no explicit constraint or penalization, and the regularization against overfitting is realized by early-stopping the empirical procedure. Early-stopping has certain computational advantage over KRR, as it does not require the computation of the inverse of a kernel matrix.
Indeed, if the algorithm stops after $T$ iterations, the aggregate time complexity is $O(Tn^2)$ for gradient descent \cite{yao2007early,raskutti2014early} and conjugate gradient methods \cite{blanchard2010optimal}, while $O(Tn)$ for stochastic gradient methods (SGM) \cite{rosasco2015learning,lin2016optimal}.

Although the statistical aspects of early-stopping procedures are well understood, either the computation or the storage of these algorithms can be challenging for large datasets.
Indeed, the storage and/or computational cost of these algorithms, are/is at least quadratic in the number of training examples, due to the storage and/or calculation of a fully empirical kernel matrix.
To avoid storing and/or computing a large kernel matrix, a natural approach is to replace the standard kernel matrix with a smaller matrix obtained by subsampling \cite{smola2000sparse,williams2000using}. Such an approach, referred to as Nystr\"{o}m method in machine learning,
  provides one of the main approaches towards kernel methods with large scale learning. Particularly, Nystr\"{o}m techniques are successfully used together with KRR \cite{rudi2015less,yang2015randomized} while achieving optimal statistical results \cite{rudi2015less} in the random design setting. Moreover, it has recently been combined with early-stopping on batch gradient methods,
   and optimal statistical results in the fixed design setting are provided \cite{camoriano2016nytro}.

In this paper, we investigate stochastic gradient methods with  Nystr\"{o}m subsampling (named as NySGM) in the nonparametric regression setting.
At each iteration, NySGM updates its current solution by subtracting a scaled gradient estimate over a mini-batch of points drawn uniformly at random from the sample, and subsequently projecting onto an ``empirically subsampling" space.
The subsampling level, the number of iterations/passes, the step-size and the mini-batch size are then the free parameters to be determined. Our main results show how can these parameters be chosen so that the corresponding solutions achieve optimal learning errors in a variety of settings.
In comparisons with state-of-the-art algorithms such as Nystr\"{o}m KRR and classic SGM, NySGM has the advantage either on the computation or on the storage, while achieving the same optimal error bounds, see Section \ref{sec:dis} for details.
For example, in the special case  that no benign assumptions on the problem \cite{zhang2005learning} are made,
NySGM with suitable choices of parameters can lead to optimal learning rates $O(n^{-0.5})$ after one pass over the data, where the costs are $O(n^{1.5})$ in time and $O(n^{1.5})$ in memory,
compared to $O(n^{2})$ in time and $O(n^{1.5})$ in memory for Nystr\"{o}m KRR.  Moreover, as will be seen in Section \ref{sec:main}, our results indicate that using mini-batches can reduce the total computational cost, while achieving the same optimal statistical results. Such a result is somewhat surprising, as it is well-known that using mini-batches does not reduce the total computational cost for classical SGM.
The proof for our main results is based on tools from concentration inequalities, operator theory and convex analysis,
and it borrows idea from, e.g., \cite{yao2007early,smale2007learning,bauer2007regularization,ying2008online,rudi2015less}.

The rest of this paper is organized as follows. In the next section, we introduce the nonparametric regression setting and NySGM.
In Section \ref{sec:main}, we present our theoretical results following with simple discussions. In Section \ref{sec:dis}, we discuss and compare our results with related work.
All proofs for related results and equalities of this paper are given in Section \ref{sec:proofs} and the appendix.

 \section{Learning with Nystr\"{o}m Stochastic Gradient Methods}\label{sec:learning}
In this section, we first describe the learning setting and then introduce the studied algorithm.
\subsection{Learning Problems}
We consider a supervised learning problem.
Let $\rho$ be a probability measure on a measure space $Z=X\times \mR,$ where  $X$ is the input space and $\mR$ is the output space. Here, $\rho$ is fixed but unknown. Its information can be only known through a sample
$\mathbf z=\{z_i=(x_i, y_i)\}_{i=1}^n$ of $n\in\mN$ points, which we assume to be i.i.d..

 Kernel methods are based on choosing a hypothesis space as a reproducing kernel Hilbert space (RKHS) associated with a reproducing kernel. Recall that a reproducing kernel $K$ is a symmetric function $K: X
\times X \to \mR$ such that $(K(u_i, u_j))_{i, j=1}^\ell$ is
positive semidefinite for any finite set of points
$\{u_i\}_{i=1}^\ell$ in $X$. The reproducing kernel $K$ defines a RKHS $(\HK, \|\cdot\|_{\HK})$ as the
completion of the linear span of the set $\{K_x(\cdot):=K(x,\cdot):
x\in X\}$ with respect to the inner product $\la K_x,
K_u\ra_{K}:=K(x,u).$

Given only the sample $\bf z$, the goal is to solve the following expected risk minimization problem,
\be\label{erm}
\inf_{f \in \HK} \mcE(f), \quad \mcE(f) = \int_{Z} (f(x) - y)^2 d\rho(z).
\ee

\subsection{Nystr\"{o}m Stochastic Gradient Method}
To solve the expected risk minimization problem,  in this paper, we propose the following SGM, using mini-batches and Nystr\"{o}m subsampling.
For $t \in \mN,$ the set $\{1, ..., t\}$ of the first $t$ positive integers is denoted by $[t]$.
\begin{alg}\label{alg:1}
Let $b \in \mN.$
Given any $\bf z$, let $\mbtx = \{{x}_1, {x}_2,\cdots,{x}_m\}$ with $m \leq n$. Let $\Ptx$ be the projection operator with its range as the subspace $\HKx=\spn\{K_{{x}_i}: i \in [m]\}.$
The Nystr\"{o}m stochastic gradient method (abbreviated as NySGM) is defined by $f_1 =0$ and
\be\label{eq:alg1}
f_{t+1}=f_t - \eta_t {1 \over b} \sum_{i= b(t-1)+1}^{bt} (f_t(x_{j_i}) - y_{j_i}) \Ptx (K_{x_{j_i}}) , \qquad t=1, \ldots, T, \ee
where $\{\eta_{t}>0\}_{t\in \mN}$ is a step-size sequence.  Here, $j_1,j_2,\cdots,j_{bT}$ are i.i.d. random variables from the uniform distribution on $[n]$. \footnote{The random variables $j_1,\cdots, j_{bT}$ are conditionally independent given the sample $\bf z$.}
\end{alg}
At each iteration, the above algorithm updates its current solution by subtracting a scaled gradient estimate and then projecting onto $\HKx$.
In comparison with the classic SGM from \cite{lin2016optimal}, the studied algorithm has an extra projection step in its iterative relationship.  The projection step is a result of the subsampling technique. It ensures that the learning sequence always lies in $\HKx$, a smaller space than $\mbox{span} \{K_{{\bf x}_i}: i \in [n]\}$. When $m=n$, the above algorithm is exactly the classic SGM studied in \cite{lin2016optimal}.

Note that there are not any explicit penalty terms in \eref{eq:alg1}, in which case one does not need to tune the penalty parameter, and the only free parameters are the subsampling level $m$, the step-size $\eta_t$, the mini-batch size $b$ and the total number of iterations $T$. Different choices of these parameters can lead to different strategies. In the coming subsection, we are particularly interested in the fixed step-size setting, i.e., $\eta_t = \eta$ for some $\eta>0$, with $b=1$ or $\sqrt{n}$.

The total number of iterations $T$ can be bigger than the sample size $n$, which means that the algorithm can use the data more than once, or in another words, we can run the algorithm with multiple passes over the data. Here and in what follows, the number of `passes' over the data is referred to $\lceil {bt \over m} \rceil$ at $t$-th iteration of the algorithm.

The aim of this paper is to derive generalization error bounds, i.e., the excess risk $\mcE(f_{T+1}) - \inf_{f\in \HK} \mcE(f),$ for the above algorithm.
Throughout this paper, we assume that $\{\eta_t\}_t$ is non-increasing and $T \in \mN$ with $T \geq 3$.
We denote by $\J_t$ the set $\{j_l: l=b(t-1)+1,\cdots,bt\}$ and by $\J$ the set $\{j_l: l=1,\cdots,bT\}$.

\subsection{Numerical Realizations}
Algorithm 1 has different equivalent forms, which are easier to be implemented for numerical simulations. For any finite subsets $\mbx$ and $\mbx'$ in $X$, denote
 the cardinality of the set $\bf x$ by $| \bf x|,$ and
the $|{\mbx}| \times |\mbx'|$ kernel matrix $[K(x,x')]_{x\in \mbx,x'\in\mbx'}$ by
$\mbK_{\mbx \mbx'}$.
 Let $\mbR \in \mR^{m \times \rank(\mbK_{\mbtx\mbtx})}$ be such that $\mbR\mbR^\top = \mbK_{\mbtx\mbtx}^\dag $.
Then as will be shown in Subsection \ref{equivalent}, Algorithm \ref{alg:1} is equivalent to,
with $\mathbf{b}_1 = \mathbf{0} \in \mR^{\rank(\mbK_{\mbtx\mbtx})},$
\be\label{eq:algNumRea}
\begin{cases}
f_{t} = \sum_{i=1}^m \mbR(i,:) \mathbf{b}_{t} K_{x_i}\\
  \mbb_{t+1} = \mbb_{t} - {\eta_t \over b}\mbR^{\top} \sum_{i= b(t-1)+1}^{bt} (\mbK_{\mbtx x_{j_i}} \mbK_{\mbtx x_{j_i}}^{\top}  \mbR \mbb_{t}   - y_{j_i} \mbK_{\mbtx x_{j_i}})  .
\end{cases}
\ee
Here, $\mbb_t \in \mR^{\rank(\mbK_{\mbtx\mbtx})}$,  and $\mbR(i,:)$ denotes the $i$-th row of the matrix $\mbR$. Assuming that $\rank(\mbK_{\mbtx\mbtx}) \simeq m$ and that the cost of evaluating the kernel on a pair of sample points is $O(d)$, if the computer computes and stores $\mbR$ and $\mbK_{\mbtx \mbx}$ in the preprocessing and then updates $\mbb_t$ by \eref{eq:algNumRea} based on $\mbK_{\mbtx \mbx},\mbR$ and  ${\bf y},$
the space and time complexities for training this algorithm are
 \be\label{eq:cost1}
 O(nm)\quad \mbox{and} \quad O(nmd + m^3 + m^2 T + m b T),
 \ee
 respectively.
 Alternatively, if the computer computes and stores $\mbR$  in the preprocessing and then updates $\mbb_t$ by \eref{eq:algNumRea} based on $\mbR$ and  ${\bf z},$
 the space and time complexities for training are
 \be\label{eq:cost2}
 O(m^2 + nd)\quad \mbox{and}\quad O(m^3 + m^2 d + m^2 T + mdbT),
 \ee
respectively.

To see the performance of Algorithm \ref{alg:1}, we carried out some simple numerical simulations on a simple problem.
We constructed $n=100$ i.i.d. training examples of the form $y= f_{\rho}(x_i) + \omega_i$.
Here, the regression function is $f_{\rho}(x) = |x - 1/2| - 1/2,$
the input $x_i$ is uniformly distributed in $[0,1],$ and $\omega_i$ is a Gaussian noise with zero mean and standard deviation $1,$ for each $i\in [n].$
For all the simulations, the RKHS is associated with a Gaussian kernel $K(x,x') = \exp(-(x-x')^2/(2\sigma^2))$ where $\sigma=0.2$,
the mini-batch size $b=1$, and the step-size $\eta_t = {1/(8n)}$, as suggested by Corollary \ref{cor:1} in Section \ref{sec:main}.
For each subsampling level $m \in \{2,4,6,8,10,12\}$, we ran NySGM 50 times.
The mean and the standard deviation of the computed generalization errors over $50$ trials with respect to the number of passes are depicted in Figure \ref{fig:1}.
Here, the (approximated) generalization errors were computed over an empirical measure with $2000$ points.
As we see from the plots, NySGM performs well when the subsampling level $m \geq 8.$
Moreover, the minimal generalization error is achieved after some number of passes,
and it is comparable with $0.281$ of KRR using cross-validation.
\begin{figure}[h]
    \centering
    \begin{subfigure}[]
    {0.3\textwidth} 	
    \includegraphics[width=\textwidth]{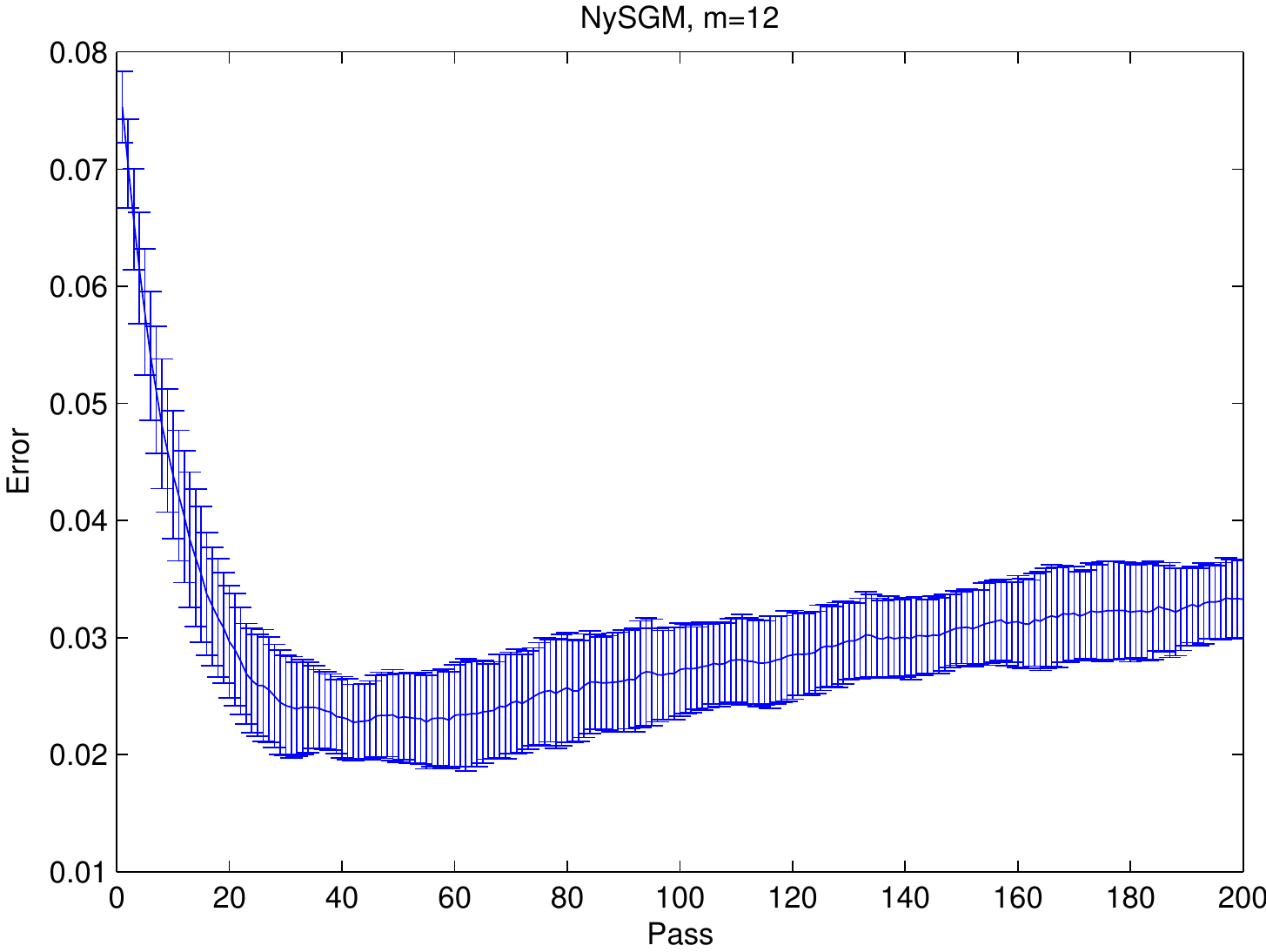}
    \end{subfigure}
     ~
    \begin{subfigure}[]
    {0.3\textwidth}
    \includegraphics[width=\textwidth]{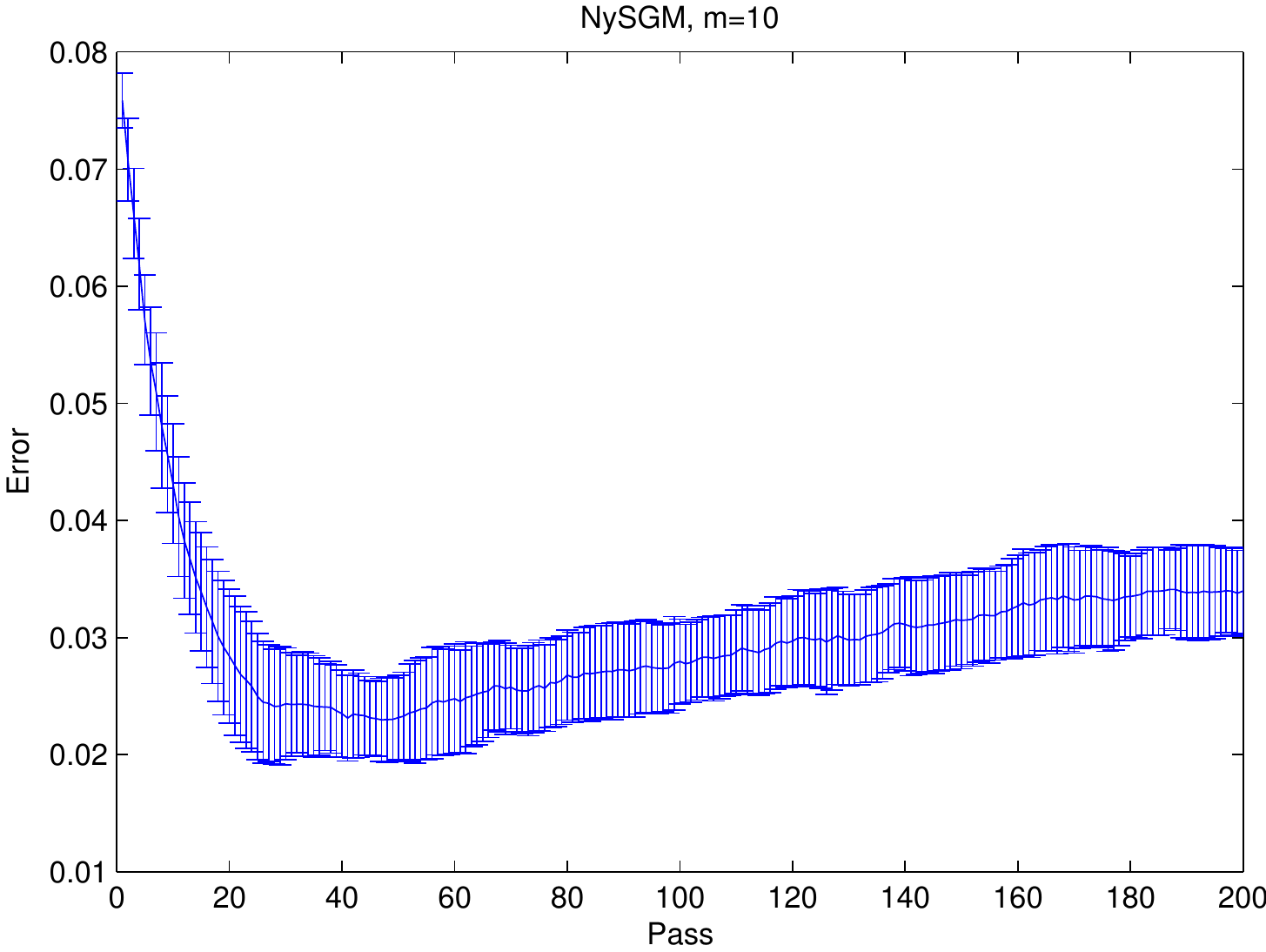}
    \end{subfigure}
    ~
    \begin{subfigure}[]
    {0.3\textwidth}
    \includegraphics[width=\textwidth]{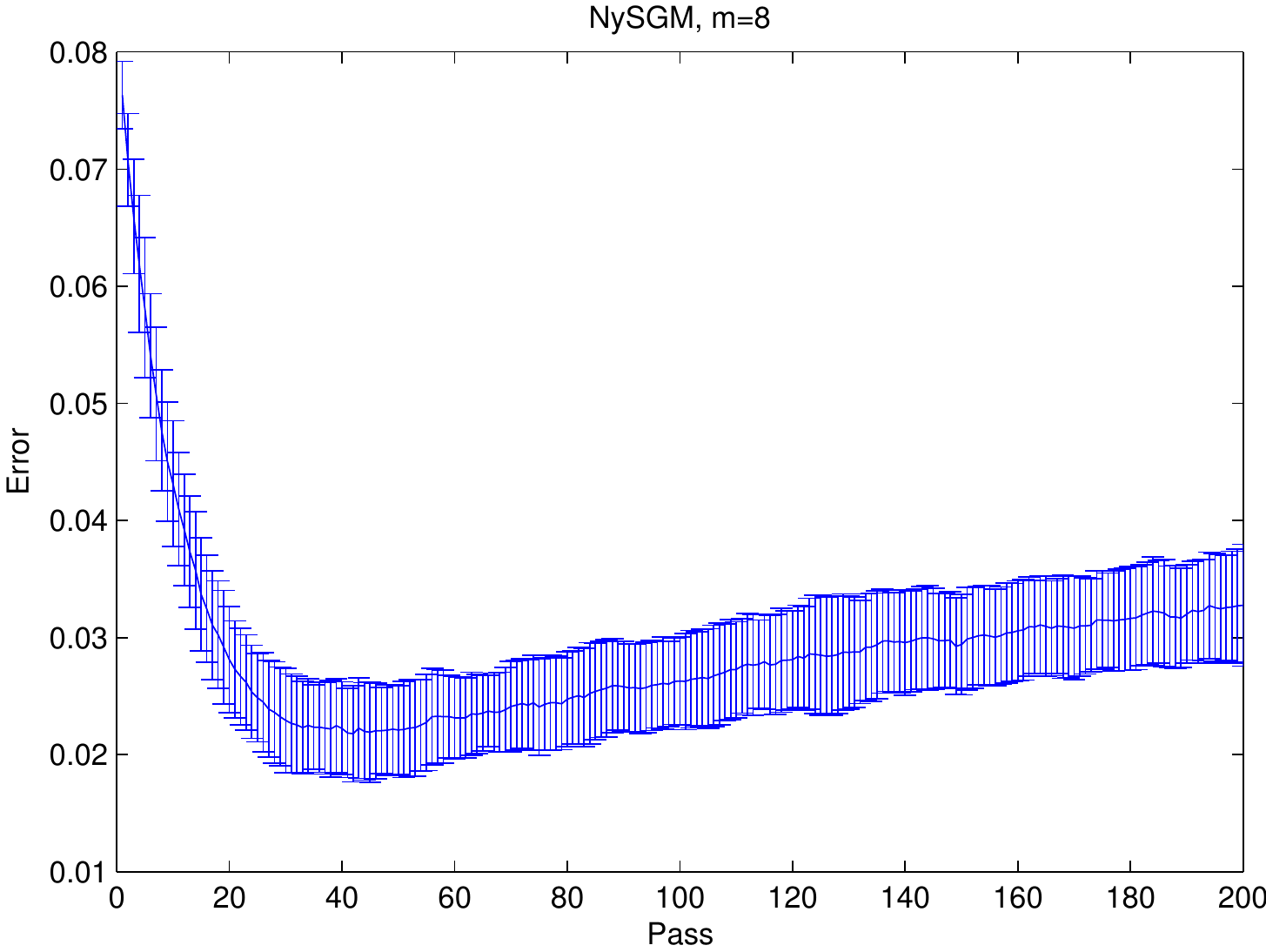}
    \end{subfigure}
    \\
    \begin{subfigure}[]
    {0.3\textwidth} 	
    \includegraphics[width=\textwidth]{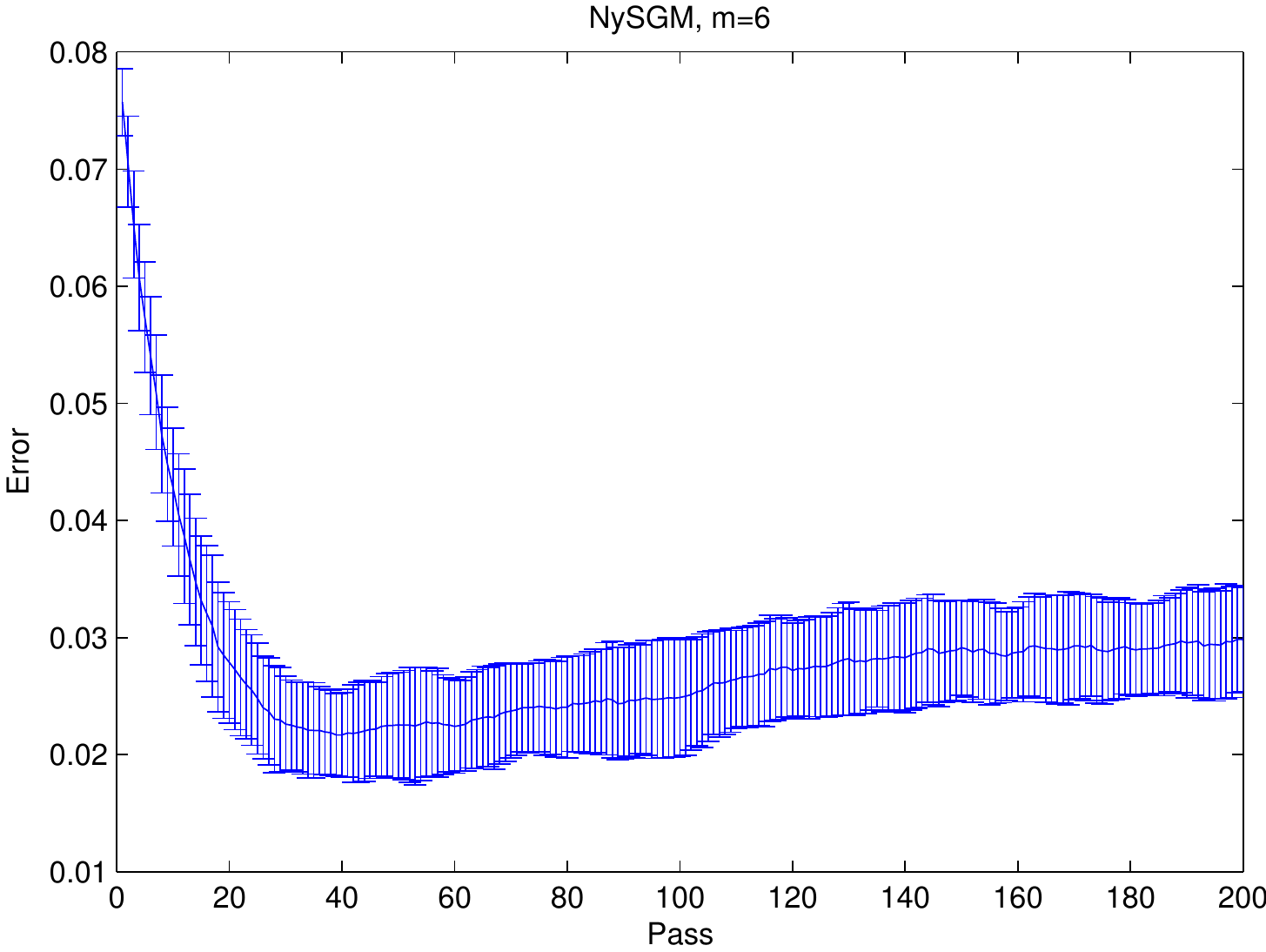}
    \end{subfigure}
     ~
    \begin{subfigure}[]
    {0.3\textwidth}
    \includegraphics[width=\textwidth]{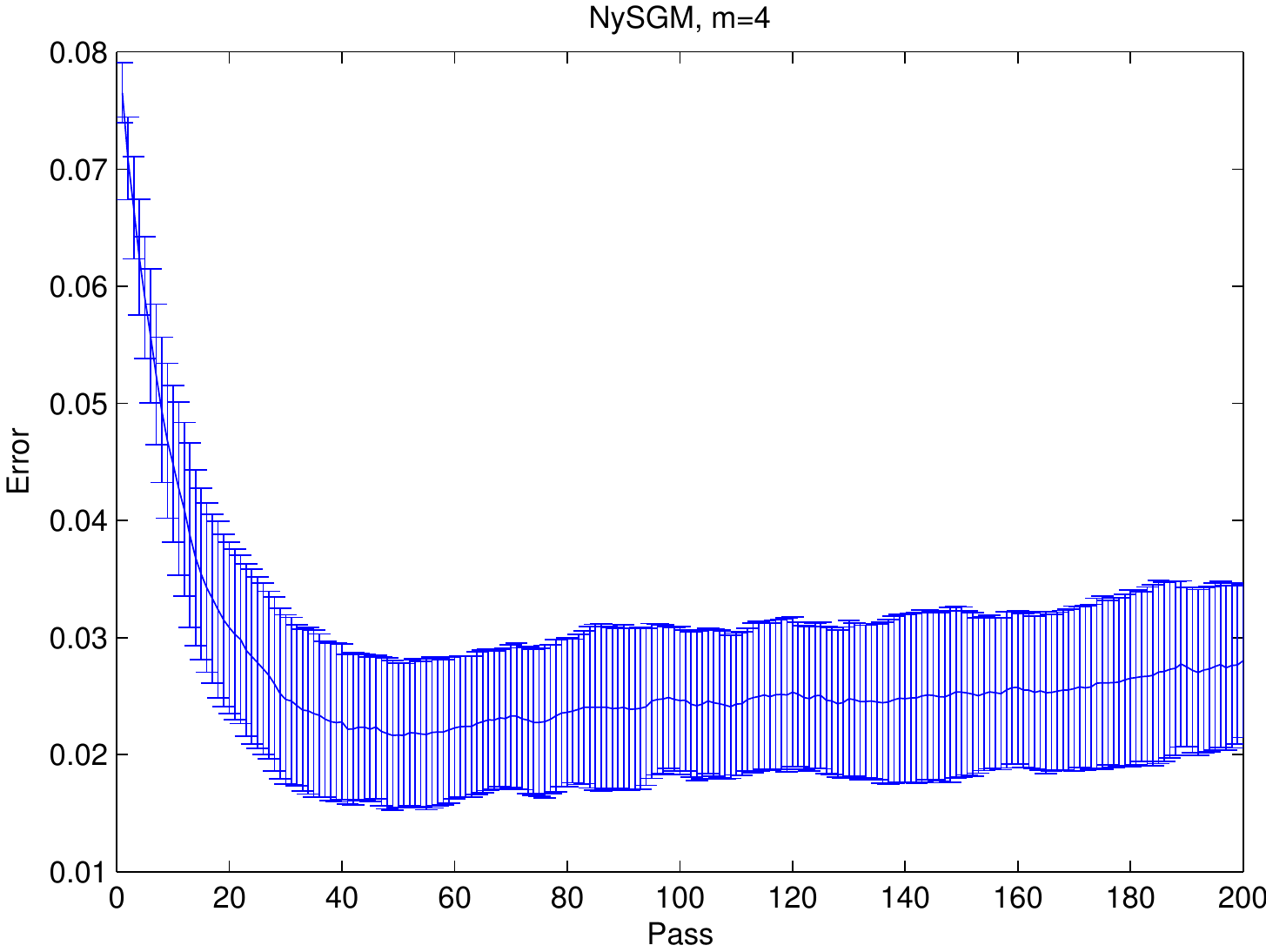}
    \end{subfigure}
    ~
    \begin{subfigure}[]
    {0.3\textwidth}
    \includegraphics[width=\textwidth]{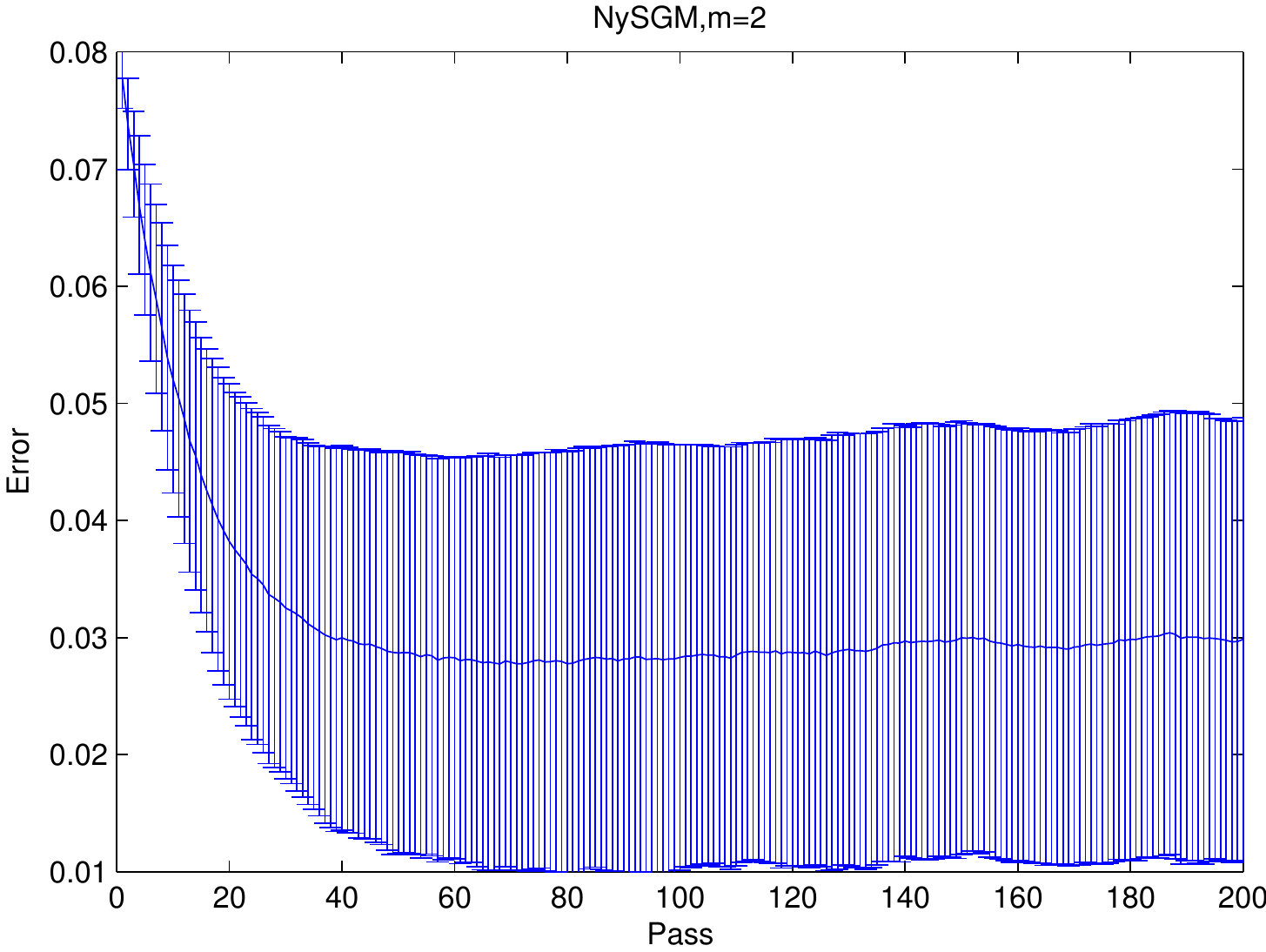}
    \end{subfigure}
    \caption{Approximated Generalization Errors for NySGM on {\em toy data}, with Different Subsampling Level $m=\{2,4,6,8,10,12\}$.}
    \label{fig:1}
\end{figure}

\section{Generalization Error Bounds for NySGM}\label{sec:main}
In this section, we present our main results on generalization errors for NySGM, followed by some simple discussions. Throughout this paper, we make the following basic assumptions.
\begin{as}\label{as:basic}
$\HK$ is separable, $K$ is measurable and
  there exists a constant $\kappa \in [1,\infty[$, such that for all $x \in X,$
\be\label{eq:HK}
  K(x,x) \leq \kappa^2 .
\ee
Furthermore, Problem \eref{erm} has at least a solution $\FH \in \HK$.
\end{as}
 The boundedness assumption \eref{eq:HK} is fairly common in standard learning theory. It can be satisfied, for example when the kernel is a Gaussian kernel.
 The condition on the existence of at least one minimizer in $\HK$ is for the sake of easy presentation. Such a condition can be relaxed, by using a more involved analysis as that in \cite{lin2016optimal}.

 Under these basic assumptions, we can state our first theorem as follows. It provides generalization error bounds for the studied algorithms with different choices of the step-size, the mini-batch size and the total number of iterations.

\begin{thm}\label{thm:simple}
  Let $|y| \leq M$ a.s., $\delta \in (0,1)$,
$n \gtrsim 1$ and $m \gtrsim \sqrt{n} \log n$. Consider Algorithm \ref{alg:1} with either of the following choices on $\eta_t$, $b$ and $T_*$:\\
  I) $\eta_t \simeq (\log n)^{-1}$ for all $t \in [T_*]$, $b=T_* = \lceil \sqrt{n} \rceil;$\\
  II)  $\eta_t \simeq n^{-1/2}$ for all $t \in [T_*]$, $b=1, T_* = n.$\\
  Then with probability at least $1-\delta,$
   \be\label{eq:errBound}
\mE_{\J}[\mcE(f_{T_*+1})]  - \mcE(\FH)
\lesssim  n^{-1/2} \log n.
\ee
Here, we use the notations $a_1 \lesssim a_2$ to mean $a_1 \leq C a_2$
 for some positive constant $C$ which is depending only (a polynomial function)
 on $\kappa, M, \|\TK\|, \|\FH\|_{\HK}$ and $\log{1\over \delta},$ and $a_1 \simeq a_2$  to mean $a_2 \lesssim a_1 \lesssim a_2$.
\end{thm}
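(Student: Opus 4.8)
The plan is to reduce the excess risk to a weighted Hilbert-space norm and then run a bias--variance analysis over the mini-batch randomness $\J$. Writing $\SX$ for the empirical sampling operator (so $\TX=\SX^*\SX$) and using $\|g\|_{\rho}^2=\la\TK g,g\ra_{\HK}=\|\TK^{1/2}g\|^2$ together with the fact that $\FH$ minimizes $\mcE$ over $\HK$, one has $\mcE(f)-\mcE(\FH)=\|\TK^{1/2}(f-\FH)\|^2$, so it suffices to control $\mE_{\J}\|\TK^{1/2}(f_{T_*+1}-\FH)\|^2$. Because the least-squares gradient is affine in $f_t$ and the indices $\J_t$ drawn at step $t$ are independent of $f_t$, the conditional mean iterate $g_{t+1}:=\mE_{\J}[f_{t+1}]$ satisfies the deterministic projected full-batch recursion $g_{t+1}=g_t-\eta_t\,\Ptx\TX\Ptx g_t+\eta_t\,\Ptx\SX^*\mby$ with $g_1=0$, whence the Pythagorean identity
\be
\mE_{\J}[\mcE(f_{T_*+1})]-\mcE(\FH)=\underbrace{\|\TK^{1/2}(g_{T_*+1}-\FH)\|^2}_{\text{bias}}+\underbrace{\mE_{\J}\|\TK^{1/2}(f_{T_*+1}-g_{T_*+1})\|^2}_{\text{variance}}.
\ee

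Next I would introduce the implicit regularization level of the iteration, $\lambda\simeq(\eta\,T_*)^{-1}$. In both regimes this equals $n^{-1/2}$ up to logarithmic factors (regime I: $\eta\simeq 1/\log n$, $T_*\simeq\sqrt n$; regime II: $\eta\simeq n^{-1/2}$, $T_*=n$), which is exactly the target rate in the worst case where no source/capacity conditions are imposed. I would then split the bias by inserting the population regularized solution $f_\lambda=(\TK+\lambda)^{-1}\TK\FH$ and its projected-empirical analogue, decomposing $\TK^{1/2}(g_{T_*+1}-\FH)$ into three pieces: a \emph{computational} term measuring the gap between the projected gradient iteration and its minimizer (controlled by operator-calculus estimates on the residual polynomial $\prod_{t}(I-\eta_t\Ptx\TX\Ptx)$ and on the spectral filter $\lambda(\TK+\lambda)^{-1}$), a \emph{Nystr\"om} term built from $\TK^{1/2}(I-\Ptx)$, and a \emph{sample} term comparing $\TX$ with $\TK$ and $\SX^*\mby$ with $\TK\FH$. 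The variance term I would bound, following the classic SGM analysis of \cite{lin2016optimal}, by a step-size weighted sum of the form $\tfrac{M^2}{b}\sum_{t=1}^{T_*}\eta_t^2\,\tr\!\big(\TK\,R_t\big)$, whose leading order is governed by the effective dimension $\mcN(\lambda)=\tr(\TK(\TK+\lambda)^{-1})$ and the budget $\eta\sum_t\eta_t/b$.

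The transfer between the population geometry $\|\TK^{1/2}\cdot\|$ and the projected-empirical geometry is where the probabilistic content enters. Borrowing the concentration estimates of \cite{rudi2015less,lin2016optimal}, with $\lambda\simeq n^{-1/2}$ and $n\gtrsim 1$ the quantities $\|(\TK+\lambda)^{1/2}(\TX+\lambda)^{-1/2}\|$ and $\|(\TK+\lambda)^{-1/2}(\SX^*\mby-\TX\FH)\|$ concentrate at the right scale with probability $1-\delta$, and the Nystr\"om residual obeys $\|(I-\Ptx)(\TK+\lambda)^{1/2}\|\lesssim\sqrt\lambda$ precisely when $m\gtrsim\mcN_\infty(\lambda)\log(n/\delta)$; in the worst case $\mcN_\infty(\lambda)\lesssim\kappa^2/\lambda\simeq\sqrt n$, which is exactly the hypothesis $m\gtrsim\sqrt n\log n$. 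Feeding these bounds into the three bias contributions and the variance, each is seen to be $\lesssim\lambda\simeq n^{-1/2}$ up to logarithmic factors, and summing gives the claimed $n^{-1/2}\log n$. The role of the mini-batch $b=\lceil\sqrt n\rceil$ in regime I is to cut the number of outer iterations --- hence the variance accumulation --- without altering the product $\eta\,T_*$, so that the same balance is maintained.

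The main obstacle I anticipate is controlling the coupling between the projection $\Ptx$ and the stochastic recursion. Since $\Ptx\TX\Ptx$ does not commute with $\TK$, the clean spectral-filter estimates available for plain gradient descent must be re-derived for the projected operator and then transported to the $\|\cdot\|_{\rho}$ norm through the concentration inequalities above; this forces one to track products of non-commuting operators $(I-\eta_t\Ptx\TX\Ptx)$ against $\TK^{1/2}$ and $(\TK+\lambda)^{\pm 1/2}$ simultaneously. Coupled with the need to bound the accumulated SGM variance over $T_*$ (possibly multi-pass) steps without wasting the damping provided by $\eta_t$ and $b$, this recursive/telescoping estimate is the technical heart of the argument; the rest is bookkeeping once $\lambda\simeq n^{-1/2}$ and $m\gtrsim\sqrt n\log n$ are fixed.
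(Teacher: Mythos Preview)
Your proposal is correct and is essentially the paper's own argument: the same Pythagorean bias--variance split around $g_t=\mE_{\J}f_t$, the same implicit regularization $\lambda\simeq(\eta T_*)^{-1}\simeq n^{-1/2}$, the same concentration inputs from \cite{rudi2015less,lin2016optimal} (including $\|(I-\Ptx)\TKL^{1/2}\|\lesssim\sqrt\lambda$ under $m\gtrsim\mcN_\infty(\lambda)\log n\simeq\sqrt n\log n$), and the non-commutativity handled via the partial isometry $V$ in $\Ptx=VV^*$. The only organizational differences are that the paper further splits the bias by a second auxiliary sequence $h_t$ (the projected full-batch iteration with $\SX^*\mby$ replaced by $\TX\FH$) rather than by the static Tikhonov solution $f_\lambda$, and bounds the SGM variance not through a trace/effective-dimension estimate but by controlling $\mE_{\J}\|M_{k,i}\|_{\HK}^2\le\kappa^2\mE_{\J}[\mcE_{\bf z}(f_k)]$ and then proving a uniform bound on the empirical risk along the trajectory via a Shamir--Zhang type recursion (this is where the constraint $\eta\lesssim 1/\log T$ enters).
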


We add some comments on the above results. First, the bounded output assumption is  trivially satisfied for some learning problems such as binary classification problems where $Y = \{-1,1\}.$ Second, the error bound in \eref{eq:errBound} is optimal up to a logarithmic factor, in the sense that it matches the minimax rate in \cite{caponnetto2007optimal} and  those of kernel ridge regression \cite{smale2007learning,caponnetto2007optimal,steinwart2009optimal}.
Moreover, according to Theorem \ref{thm:simple}, NySGM with two different choices on the step-size and the mini-batch size can achieve optimal learning error bounds after one pass over the data, provided that the subsampling level $m \simeq \sqrt{n}$. Thus, if the computer computes and stores $\mbR$ and $\mbK_{\mbtx \mbx}$ in the preprocessing and then updates $\mbb_t$ by \eref{eq:algNumRea} based on $\mbK_{\mbtx \mbx},\mbR$ and  ${\bf y},$ according to \eref{eq:cost1}, the cost
for NySGM with both (I) and (II) is $O(n^{1.5})$ in memory and $O(n^{1.5}d)$ in time, lower than $O(n^{1.5})$ in memory and $O(n^{1.5}d + n^2)$ in time required by Nystr\"{o}m KRR \cite{rudi2015less}. Alternatively, if the computer computes and stores $\mbR$  in the preprocessing and then updates $\mbb_t$ by \eref{eq:algNumRea} based on $\mbR$ and  ${\bf z},$
 the cost  is $O(nd)$ in memory and $O(n^{1.5}d + n^2)$ in time for NySGM (II), while $O(nd)$ in memory  and $O(n^{1.5}d)$ in time for NySGM (I).
 Compared to $O(nd)$ in memory and $O(n^{2}d)$ in time for classic SGM, NySGM using mini-batches has lower computational cost. In this sense, using mini-batches can reduce the computational complexity. Finally, using mini-batches allows using a larger step-size, while achieving the same optimal error bounds.

Theorem \ref{thm:simple} provides generalization error bounds for the studied algorithm, without considering the possible effect of benign assumptions on the problem. In the next theorem,  we will show that when the learning problem satisfies some additional regularity and capacity assumptions, it is possible to achieve faster learning rates than $O(n^{-1/2})$.
Also, the boundedness assumption on the output in Theorem \ref{thm:simple} will be replaced by a less strict condition, the moment hypothesis on $|y|^2$ as follows.

\begin{as}\label{as:noiseExp}
  There exists constants $M \in ]0,\infty[$ and $\nu \in ]1,\infty[$ such that
  \be\label{noiseExp}
  \int_{Y} y^{2l} d\rho(y|x) \leq l! M^l \nu, \quad \forall l \in \mN,
  \ee
  $\rho_{ X}$-almost surely.
\end{as}

To present our next assumptions, we introduce the covariance operator $\TK: \HK \to \HK$, defined by $\TK = \int_X \la \cdot, K_x\ra K_x d\rho_{X}.$ Under Condition \eref{eq:HK}, $\TK$ is known to be positive definite and trace class.
Thus, $\TK^{\zeta}$ with $\zeta \in \mR$ can be defined by using the spectral theory.
We make the following assumption on the regularity of the target function $\FH$.
\begin{as}\label{as:regularity}
 For some $\zeta \geq 0$ and $R>0$, $\|\TK^{-\zeta} \FH \|_{\HK} \leq R.$
\end{as}

The above assumption is very standard \cite{cucker2007learning,rosasco2015learning} in nonparametric regression. It characterizes how big the subspace that the target function $\FH$ lies in. Particularly, the bigger the $\zeta$ is,
the more stringent the assumption is and the smaller the subspace is, since $\TK^{\zeta_1}(\HK) \subseteq \TK^{\zeta_2}(\HK)$ when $\zeta_1 \geq \zeta_2.$ Moreover, when $\zeta =0,$ we are making no assumption.

The last assumption relates to the capacity of the hypothesis space.
\begin{as}\label{as:eigenvalues}
  For some $\gamma \in [0,1]$ and $c_{\gamma}>0$, $\TK$ satisfies
\be\label{eigenvalue_decay}
  \tr ( \TK (\TK + \lambda I)^{-1} ) \leq c_{\gamma} \lambda^{-\gamma}, \quad \mbox{for all } \lambda>0.
\ee
\end{as}
The left hand-side of \eref{eigenvalue_decay} is called as the effective
dimension \cite{caponnetto2007optimal}, or the degrees of freedom \cite{zhang2005learning}.
It can be related to covering/entropy number conditions, see \cite{steinwart2008support} for further details.
Assumption \ref{as:eigenvalues} is always true for $\gamma=1$ and $c_{\gamma} =\kappa^2$, since
 $\TK$ is a trace class operator which implies the eigenvalues of $\TK$, denoted as $\sigma_i$, satisfy
 $\tr(\TK) = \sum_{i} \sigma_i \leq \kappa^2.$
  This is referred to as the capacity independent setting.
  Assumption \ref{as:eigenvalues} with $\gamma \in]0,1]$ allows to derive better error rates. It is satisfied, e.g.,
   if the eigenvalues of $\TK$ satisfy a polynomial decaying condition $\sigma_i \sim i^{-1/\gamma}$, or with $\gamma=0$ if $\TK$ is finite rank.

Now, we are ready to state our next theorem as follows.

\begin{thm}\label{thm:simp}
Under Assumptions \ref{as:noiseExp}, \ref{as:regularity} and \ref{as:eigenvalues}, let $\zeta\leq 1/2$, $\delta \in (0,1)$,
$n \gtrsim 1$ and
\be\label{eq:subsamp}
m \gtrsim n^{{1\over 2\zeta+\gamma+1}} \log n.
\ee
Set $\eta_t = \eta $ for all $t \in [T],$ $0<\eta \lesssim {1 \over \log T},$ then the following holds with probability at least $1-\delta:$
\be\label{eq:genBoundB}
\begin{split}
\mE_{\J}[\mcE(f_{t+1})]  - \mcE(\FH)
\lesssim (\eta t)^{-(2\zeta+1)} + (\eta t)^2 n^{-{2\zeta+3 \over 2\zeta+ \gamma+1}}
+  \left(1 \vee n^{-{1\over 2\zeta+\gamma+1}}\eta t \right) \eta b^{-1} \log (T).
\end{split}
\ee
and particularly, if $T_* \simeq n^{{1\over 2\zeta+\gamma+1}}\eta^{-1},$
  \be\label{eq:genBoundC}
\mE_{\J}[\mcE(f_{T_*+1})]  - \mcE(\FH) \lesssim  n^{-{2\zeta+1 \over 2\zeta+ \gamma+1}} + \eta b^{-1} \log n .
\ee
Here, we use the notation $a_1 \lesssim a_2$ to mean $a_1 \leq C a_2$
 for some positive constant $C$ which is depending only (a polynomial function)
 on $\kappa,c_{\gamma}, M, v,\zeta,\gamma, \|\TK\|, R$ and $\log{1\over \delta},$
 and is independent of $\eta, m, n,b.$
\end{thm}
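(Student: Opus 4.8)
The plan is to bound the excess risk through the identity $\mcE(f)-\mcE(\FH)=\|\TK^{1/2}(f-\FH)\|_{\HK}^2$ (equivalently $\|\SA(f-\FH)\|_{\LR}^2$), so that everything reduces to estimating $\|\TK^{1/2}(f_{t+1}-\FH)\|_{\HK}$ in the RKHS, with the effective regularization level fixed as $\lambda=(\eta t)^{-1}$. The first move is a variance--bias splitting relative to the conditional mean of the iterates. Since the update in \eref{eq:alg1} is affine in $f_t$ and the mini-batch indices $\J_t$ are conditionally independent of $f_t$ given $\bf z$, the sequence $\bar f_{t}:=\mE_{\J}[f_t]$ obeys the deterministic projected batch-gradient recursion on the empirical risk restricted to $\HKx$, namely $\bar f_{t+1}=\bar f_t-\eta\,\Ptx(\TX\bar f_t - g_{\bf z})$ with $g_{\bf z}=\frac1n\sum_i y_iK_{x_i}$. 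I would then write
\[
\TK^{1/2}(f_{t+1}-\FH)=\underbrace{\TK^{1/2}(f_{t+1}-\bar f_{t+1})}_{\text{variance}}+\underbrace{\TK^{1/2}(\bar f_{t+1}-\FH)}_{\text{deterministic}},
\]
where the deterministic piece will be controlled in high probability over the draw of $\bf z$ (this is where Assumptions \ref{as:noiseExp} and \ref{as:eigenvalues} enter through concentration), while the fluctuation piece will be controlled in $\mE_{\J}$.

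For the deterministic piece I would compare $\bar f_{t+1}$ with the idealized unprojected population gradient descent iterate $\nu_{t+1}$ defined by $\nu_{t+1}=\nu_t-\eta(\TK\nu_t-\TK\FH)$, splitting $\bar f_{t+1}-\FH=(\bar f_{t+1}-\nu_{t+1})+(\nu_{t+1}-\FH)$. The pure bias is handled by spectral calculus: since $\nu_{t+1}-\FH=-(I-\eta\TK)^{t}\FH$ and $\FH=\TK^{\zeta}w$ with $\|w\|_{\HK}\le R$ (Assumption \ref{as:regularity}), the estimate $\|\TK^{1/2}(I-\eta\TK)^{t}\TK^{\zeta}\|\lesssim(\eta t)^{-(\zeta+1/2)}$ produces the term $(\eta t)^{-(2\zeta+1)}$. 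The perturbation $\bar f_{t+1}-\nu_{t+1}$ is where the capacity and subsampling hypotheses and the restriction $\zeta\le 1/2$ enter: after preconditioning by $(\TK+\lambda I)^{-1/2}$ one must control the empirical-to-population deviations $\|(\TK+\lambda I)^{-1/2}(\TK-\TX)\|$ and $\|(\TK+\lambda I)^{-1/2}(\TK\FH-g_{\bf z})\|$ together with the Nystr\"om projection error $\|(I-\Ptx)(\TK+\lambda I)^{1/2}\|$, and then transfer the source regularity through the non-commuting products $\prod_k(I-\eta\,\Ptx\TX\Ptx)$, which is possible only when $\zeta\le 1/2$. Assumption \ref{as:eigenvalues} bounds the effective dimension by $\lambda^{-\gamma}$, Assumption \ref{as:noiseExp} supplies the moments for a Bernstein-type inequality, and the choice $m\gtrsim\lambda^{-\gamma}\log n$ in \eref{eq:subsamp} (with $\lambda\simeq n^{-1/(2\zeta+\gamma+1)}$) is exactly what forces the projection error down to order $\sqrt\lambda$ so that it is absorbed. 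Accumulating these estimates over the $t$ steps, with the amplification coming from the operator geometric series, yields the term $(\eta t)^2 n^{-(2\zeta+3)/(2\zeta+\gamma+1)}$.

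For the variance piece I would expand $f_{t+1}-\bar f_{t+1}$ as a sum of martingale-type increments, each being the deviation of a mini-batch gradient from its conditional mean, propagated forward by the random contractions $\prod_k(I-\eta\,\Ptx\TX\Ptx)$. Taking $\mE_{\J}$ and using conditional independence kills the cross terms, leaving $\mE_{\J}\|\TK^{1/2}(f_{t+1}-\bar f_{t+1})\|^2$ as a weighted sum of per-step conditional variances; each variance is $O(1/b)$ (the mini-batch averaging) and is weighted by $\|\TK^{1/2}(I-\eta\,\Ptx\TX\Ptx)^{k}\|^2$. Bounding the spectral sum $\eta\sum_{k\le t}\|\TK^{1/2}(I-\eta\,\Ptx\TX\Ptx)^{k}\|^2$ gives the stated factor $\big(1\vee n^{-1/(2\zeta+\gamma+1)}\eta t\big)$ together with a $\log T$ from the harmonic sum over iterations, where the step-size restriction $\eta\lesssim 1/\log T$ keeps the products stable and the accumulated logarithmic factors controlled; this produces the last term $\big(1\vee n^{-1/(2\zeta+\gamma+1)}\eta t\big)\eta b^{-1}\log T$. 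Combining the three bounds gives \eref{eq:genBoundB}; substituting $T_*\simeq n^{1/(2\zeta+\gamma+1)}\eta^{-1}$, for which $\eta t\simeq\lambda^{-1}\simeq n^{1/(2\zeta+\gamma+1)}$ and the $\vee$ collapses to $1$, balances the first two terms at $n^{-(2\zeta+1)/(2\zeta+\gamma+1)}$ and leaves $\eta b^{-1}\log n$, which is \eref{eq:genBoundC}.

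I expect the main obstacle to be the perturbation analysis of the deterministic piece, where the data-dependent projection $\Ptx$ does not commute with $\TK$: one must simultaneously control the empirical-to-population operator concentration and the Nystr\"om projection error within a single preconditioned estimate that is uniform over all iterations $k\le t$, and track how the non-commuting products $\prod_k(I-\eta\,\Ptx\TX\Ptx)$ interact with the fractional powers $\TK^{\zeta}$ and $\TK^{1/2}$. This step genuinely couples the subsampling level $m$, the capacity exponent $\gamma$ and the regularity $\zeta$, and it is here that the condition \eref{eq:subsamp} together with $\zeta\le 1/2$ must be shown to be sufficient.
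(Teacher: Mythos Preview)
Your overall architecture is right and matches the paper: your $\bar f_t=\mE_{\J}[f_t]$ is precisely the paper's sample iterate $g_t$, and the split into a deterministic piece $g_t-\FH$ (handled in high probability over $\bf z$) plus a computational variance $f_t-g_t$ (handled in $\mE_{\J}$) is exactly the paper's error decomposition. Where you diverge is in the intermediate sequence for the deterministic piece. You compare $g_t$ with the \emph{unprojected population} iterate $\nu_t$ driven by $\TK$; the paper instead inserts the \emph{projected empirical} iterate $h_{t+1}=h_t-\eta\,\Ptx(\TX h_t-\TX\FH)$. The paper's choice is sharper for two reasons: $h_t$ lives in $\HKx$, so $\Ptx h_t=h_t$ and the projection error $(I-\Ptx)\FH$ separates out as a fixed forcing term rather than appearing through a time-varying $(I-\Ptx)\nu_t$; and the recursion for $g_t-h_t$ has the constant forcing $\Ptx(\TX\FH-\SX^*\mby)$, which reduces the sample-noise analysis to a single application of the preconditioned Bernstein bound. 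With your $\nu_t$, the forcing in the recursion for $g_t-\nu_t$ mixes $(\TK-\Ptx\TX)\nu_t$ with the sample noise at every step, and you must track how $\TK^{\zeta}$ interacts with both $\Ptx$ and $\TX$ across all $k\le t$; this is doable but noticeably messier than the paper's two clean propositions.

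Two concrete gaps. First, your identification of the subsampling requirement is off: you write $m\gtrsim\lambda^{-\gamma}\log n$, invoking the effective-dimension bound $\mcN(\lambda)\le c_\gamma\lambda^{-\gamma}$. But the projection-error concentration (what forces $\|(I-\Ptx)\TKL^{1/2}\|\lesssim\sqrt\lambda$) is governed by $\mcN_{\infty}(\lambda)=\sup_x\langle(\TK+\lambda)^{-1}K_x,K_x\rangle$, not $\mcN(\lambda)$, and in general only $\mcN_{\infty}(\lambda)\le\kappa^2/\lambda$ holds. With $\lambda\simeq n^{-1/(2\zeta+\gamma+1)}$ this gives exactly $m\gtrsim n^{1/(2\zeta+\gamma+1)}\log n$, which is \eqref{eq:subsamp}; the $\lambda^{-\gamma}$ level you state corresponds to ALS subsampling, not plain Nystr\"om. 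Second, in the variance piece you assert that each martingale increment has conditional variance $O(1/b)$. The per-step second moment is actually $\kappa^2 b^{-1}\,\mE_{\J}[\mcE_{\bf z}(f_k)]$, and the empirical risk $\mcE_{\bf z}(f_k)$ along the stochastic trajectory is not a priori bounded. The paper closes this with a separate induction (its Lemma on $\sup_k\mE_{\J}[\mcE_{\bf z}(f_k)]\lesssim\mcE_{\bf z}(0)$), which is precisely where the restriction $\eta\lesssim 1/\log T$ is used; you should flag and prove this step rather than absorb it into ``$O(1/b)$''.
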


The above result is a direct consequence of Theorem \ref{thm:comp} in the coming Subsection \ref{subsec:der}, from which, the interested readers can also find convergence results for the decaying step-size setting, i.e., $\eta_t = \eta t^{-\theta}$ with $\theta\in(0,1)$, as well as the omitted constants. There are three terms in the upper bounds of  \eref{eq:genBoundB}. The first two terms are related to the regularity of the target function and the sample size, and they arise from  estimations of the projected bias and the sample variance. The last term results from estimating the computational variance due to the random choices of the points.
Note that there is a trade-off between the first two terms. Stopping too earlier may lead to a large projected bias, while stopping too late may enlarge the sample variance. The optimal number of iterations $T_*$ is thus achieved by balancing these two terms.
Furthermore, to achieve optimal rates, it is necessary to choose a suitable step-size $\eta$ and a mini-batch size $b$ such that the computational variance is smaller than the first term of \eref{eq:genBoundC}. In the next corollary, we provide different choices on step-size and mini-batch size to achieve optimal convergence rates.

\begin{corollary}\label{cor:1}
Under Assumptions \ref{as:noiseExp}, \ref{as:regularity} and \ref{as:eigenvalues}, let $\zeta\leq 1/2$, $\delta \in (0,1)$,
$n \gtrsim 1$ and \eref{eq:subsamp}.
Consider Algorithm \ref{alg:1} with any of the following choices on $\eta_t,$ $b$ and $T_*$:\\
I) $\eta_t \simeq n^{-{2\zeta+1 \over 2\zeta+\gamma+1} }$, $b=1$ and $T_* \simeq n^{{2\zeta + 2\over 2\zeta+\gamma+1}}$;\\
II) $\eta_t \simeq (\log n)^{-1}$, $b\simeq n^{{2\zeta+1 \over 2\zeta+\gamma+1} }$ and $T_* \simeq n^{{1\over 2\zeta+\gamma+1}}\log n$.\\
III) $\eta_t \simeq n^{-1}$, $b=1$ and $T_* \simeq n^{{1\over 2\zeta+\gamma+1}} n$;\\
IV) $\eta_t \simeq n^{-1/2}$, $b\simeq \sqrt{n}$ and $T_* \simeq n^{{1\over 2\zeta+\gamma+1}} \sqrt{n}$;\\
Then the following holds with probability at least $1-\delta:$
  \be\label{eq:genErr}
\mE_{\J}[\mcE(f_{T_*+1})]  - \mcE(\FH)
\lesssim  n^{-{2\zeta+1 \over 2\zeta+ \gamma+1}} \log n.
\ee
\end{corollary}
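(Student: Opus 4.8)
The plan is to read the corollary off directly from Theorem \ref{thm:simp}. Recall that the theorem supplies, for a constant step-size $\eta_t=\eta$ satisfying $0<\eta\lesssim 1/\log T$, the particular estimate \eref{eq:genBoundC},
\be
\mE_{\J}[\mcE(f_{T_*+1})]-\mcE(\FH)\lesssim n^{-{2\zeta+1\over 2\zeta+\gamma+1}}+\eta b^{-1}\log n,
\ee
valid whenever the stopping time is chosen as $T_*\simeq n^{1/(2\zeta+\gamma+1)}\eta^{-1}$. Thus for each of the four parameter triples it suffices to verify three things: (a) the stated $T_*$ coincides with $n^{1/(2\zeta+\gamma+1)}\eta^{-1}$; (b) the step-size constraint $\eta\lesssim 1/\log T_*$ holds; and (c) the computational-variance residual $\eta b^{-1}\log n$ is at most a constant multiple of the target rate $n^{-(2\zeta+1)/(2\zeta+\gamma+1)}\log n$. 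Once all three hold, \eref{eq:genBoundC} collapses to \eref{eq:genErr}.

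For (a), I would simply substitute each $\eta$ into $n^{1/(2\zeta+\gamma+1)}\eta^{-1}$: choice (I) gives $n^{1/(2\zeta+\gamma+1)}\cdot n^{(2\zeta+1)/(2\zeta+\gamma+1)}=n^{(2\zeta+2)/(2\zeta+\gamma+1)}$, choice (II) gives $n^{1/(2\zeta+\gamma+1)}\log n$, choice (III) gives $n^{1/(2\zeta+\gamma+1)}\cdot n$, and choice (IV) gives $n^{1/(2\zeta+\gamma+1)}\cdot\sqrt n$, matching the four displayed $T_*$. For (b), note that in every case $T_*$ is a polynomial in $n$ (up to a $\log n$ factor), so $\log T_*\simeq\log n$ and the constraint reduces to $\eta\lesssim 1/\log n$; this is immediate for (I) and (III)--(IV) since their step-sizes decay polynomially, and it holds with equality up to constants for (II) where $\eta\simeq(\log n)^{-1}$.

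The only genuinely case-dependent point is (c), so I regard it as the crux of the bookkeeping. Computing $\eta b^{-1}$: choices (I) and (II) both yield $\eta b^{-1}\simeq n^{-(2\zeta+1)/(2\zeta+\gamma+1)}$, so the residual equals the target rate times $\log n$ and is absorbed directly. Choices (III) and (IV) both yield $\eta b^{-1}\simeq n^{-1}$, so I must check $n^{-1}\lesssim n^{-(2\zeta+1)/(2\zeta+\gamma+1)}$; this is exactly the inequality $(2\zeta+1)/(2\zeta+\gamma+1)\leq 1$, i.e. $\gamma\geq 0$, which is guaranteed by Assumption \ref{as:eigenvalues}. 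Substituting these bounds into \eref{eq:genBoundC} produces \eref{eq:genErr} in all four cases. I do not anticipate any real obstacle: Theorem \ref{thm:simp} has already done all the analytic work, and what remains is the arithmetic of the exponents, the single structural input being $\gamma\in[0,1]$, used to dominate the computational variance in the mini-batch-free large-iteration regimes (III) and (IV).
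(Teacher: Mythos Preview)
Your proposal is correct and is exactly the argument the paper intends: Corollary \ref{cor:1} is stated without its own proof and is meant to be read off from \eref{eq:genBoundC} of Theorem \ref{thm:simp} by verifying that $T_*\simeq n^{1/(2\zeta+\gamma+1)}\eta^{-1}$, that $\eta\lesssim 1/\log T$, and that the residual $\eta b^{-1}\log n$ is dominated by the target rate. One cosmetic slip: in case (II) you actually have $\eta b^{-1}\simeq (\log n)^{-1}n^{-(2\zeta+1)/(2\zeta+\gamma+1)}$ rather than $n^{-(2\zeta+1)/(2\zeta+\gamma+1)}$, so the residual there is $n^{-(2\zeta+1)/(2\zeta+\gamma+1)}$ without the extra $\log n$; this only strengthens your bound.
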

We add some comments.
First, the convergence rate in \eref{eq:genErr} is optimal up to a logarithmic factor, as it matches the minimax rate in \cite{caponnetto2007optimal}.
Thus, with a subsampling level $m\simeq n^{{1\over 2\zeta+\gamma+1}} \log n$, NySGM  with suitable choices of step-size, mini-batch size and number of iterations/passes can generalize optimally.
Second, different choices of step-size, mini-batch size and number of iterations/passes correspond to different regularization regimes. Particularly,
in the last two regimes, the step-size and the mini-batch size are fixed as some universal constants, while the number of iterations/passes is depending on the unknown distribution parameters $\gamma$ and $\zeta$.
In this case, the only regularization parameter is the number of iterations/passes, which can be tuned by cross-validation in practice.
Besides, the step-size and the number of iterations/passes in the first regime, or the mini-batch size and the number of iterations/passes in the second regime, depend on the unknown distribution parameters, and they can be tuned by cross-validation in practice. Third, according to Corollary \ref{cor:1},  the number of passes needed for NySGM to generalize optimally  is $n^{{1\over 2\zeta+\gamma+1}}$ in the last two regimes, while $n^{{1-\gamma\over 2\zeta+\gamma+1}}$ in the first two regimes. In comparison, NySGM with the first two regimes has a smaller number of passes than that of the last two regimes.
This indicates that NySGM with the first two regimes may have some certain advantage on computational complexity,
although in this case the step-size or the mini-batch size might need to be tuned.


The next corollary is a direct consequence of Corollary \ref{cor:1} in the capacity independent case. Indeed, in the capacity independent case, as mentioned before,
Assumption \ref{as:eigenvalues} is always satisfied with $\gamma=1$. Thus, following from Corollary \ref{cor:1}, we have the following results.

\begin{corollary}\label{cor:2}
Under Assumptions \ref{as:noiseExp} and \ref{as:regularity}, let $\zeta\leq 1/2$, $\delta \in (0,1)$,
$n \gtrsim 1$ and
\bea
m \gtrsim n^{1 \over 2\zeta+2} \log n.
\eea
Consider Algorithm \ref{alg:1} with any of the following choices on $\eta_t,$ $b$ and $T_*$:\\
I) $\eta_t \simeq n^{-{2\zeta+1 \over 2\zeta+2} }$, $b=1$ and $T_* \simeq n$;\\
II) $\eta_t \simeq (\log n)^{-1}$, $b\simeq n^{{2\zeta+1 \over 2\zeta+2} }$ and $T_* \simeq n^{1 \over 2\zeta+2} \log n$.\\
Then with probability at least $1-\delta,$
\bea
\mE_{\J}[\mcE(f_{T_*+1})] - \mcE(\FH) \lesssim n^{-{2\zeta+1 \over 2\zeta+2}} \log n.
\eea
\end{corollary}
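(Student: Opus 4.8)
The plan is to obtain Corollary \ref{cor:2} as a direct specialization of Corollary \ref{cor:1} to the capacity-independent regime, in which the capacity exponent $\gamma$ is fixed to the value $1$. The only point requiring an independent argument is that Assumption \ref{as:eigenvalues} is automatically in force with $\gamma = 1$, so that invoking Corollary \ref{cor:1} is legitimate under Assumptions \ref{as:noiseExp} and \ref{as:regularity} alone; everything else is an arithmetic simplification of the exponents.

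First I would verify that $\gamma = 1$ is always admissible. Since Assumption \ref{as:basic} guarantees $K(x,x) \leq \kappa^2$, the covariance operator $\TK$ is trace class with $\tr(\TK) = \sum_i \sigma_i \leq \kappa^2$, where $\{\sigma_i\}$ denote its eigenvalues. Hence, for every $\lambda > 0$,
\be
\tr\big( \TK (\TK + \lambda I)^{-1} \big) = \sum_i {\sigma_i \over \sigma_i + \lambda} \leq {1 \over \lambda}\sum_i \sigma_i \leq {\kappa^2 \over \lambda},
\ee
so that \eref{eigenvalue_decay} holds with $\gamma = 1$ and $c_\gamma = \kappa^2$. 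Consequently, under Assumptions \ref{as:noiseExp} and \ref{as:regularity} the hypotheses of Corollary \ref{cor:1} are met with this value of $\gamma$, and no further probabilistic input is needed.

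It then remains to substitute $\gamma = 1$ into the statement of Corollary \ref{cor:1} and collapse the exponents. The subsampling requirement $m \gtrsim n^{1/(2\zeta+\gamma+1)} \log n$ becomes $m \gtrsim n^{1/(2\zeta+2)} \log n$, matching the hypothesis of Corollary \ref{cor:2}. For the two regimes retained here, setting $\gamma = 1$ in regime (I) of Corollary \ref{cor:1} gives $\eta_t \simeq n^{-(2\zeta+1)/(2\zeta+2)}$, $b = 1$ and $T_* \simeq n^{(2\zeta+2)/(2\zeta+2)} = n$, while regime (II) gives $\eta_t \simeq (\log n)^{-1}$, $b \simeq n^{(2\zeta+1)/(2\zeta+2)}$ and $T_* \simeq n^{1/(2\zeta+2)} \log n$; these are exactly choices (I) and (II) of Corollary \ref{cor:2}. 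Finally, the error bound $n^{-(2\zeta+1)/(2\zeta+\gamma+1)} \log n$ of \eref{eq:genErr} reduces to $n^{-(2\zeta+1)/(2\zeta+2)} \log n$, which is the claimed rate for $\mE_{\J}[\mcE(f_{T_*+1})] - \mcE(\FH)$.

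Since all of the concentration and operator-theoretic work is already carried out in Corollary \ref{cor:1} (and ultimately in Theorem \ref{thm:comp}), there is no genuine analytic obstacle: the argument is purely a verification that $\gamma = 1$ is feasible together with a bookkeeping check on the exponents. The one place where care is warranted is in confirming that each specialized parameter choice still lies in the admissible range demanded by Corollary \ref{cor:1} — in particular that the step-size constraint $0 < \eta \lesssim 1/\log T$ inherited from Theorem \ref{thm:simp} persists for both regimes. This is immediate: in regime (I) the step-size decays polynomially in $n$ while $1/\log T \simeq 1/\log n$ decays only logarithmically, and in regime (II) one has $\log T \simeq \log n$ so that $\eta \simeq (\log n)^{-1} \simeq (\log T)^{-1}$, whence the constraint holds in both cases.
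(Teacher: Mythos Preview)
Your proposal is correct and matches the paper's own approach: the paper also derives Corollary \ref{cor:2} as an immediate specialization of Corollary \ref{cor:1} to the capacity-independent case $\gamma=1$, noting (just as you do) that Assumption \ref{as:eigenvalues} is always satisfied with $\gamma=1$ and $c_\gamma=\kappa^2$ because $\TK$ is trace class. Your additional check that the step-size constraint $\eta\lesssim 1/\log T$ persists in both regimes is sound and goes slightly beyond what the paper states explicitly.
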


From the above corollary, we see that NySGM achieves optimal capacity-independent rate with one pass over the data if the step-size or the mini-batch size is suitably chosen.
Theorem \ref{thm:simple} is a direct consequence of Corollary \ref{cor:2} in the special case that $\zeta=0$ and $|y| \leq M$ a.s.. We finish this section with some remarks.
\begin{rem}\label{rem:theorem}
  I) All results in Theorem \ref{thm:simp} and its corollaries still hold when the condition on the subsampling level \eref{eq:subsamp} is replaced by
  $ m \gtrsim \mcN_{\infty}(\|\TK\|n^{-{1\over 2\zeta+\gamma+1}}),$ where $\mcN_{\infty}(\lambda) = \sup_{x\in X}\la (\TK+\lambda)^{-1}K_x, K_x\ra_{\HK}.$
  Thus, if \footnote{Note that this condition is always satisfied with $\gamma'=1$ as by \eref{eq:HK},
  $\mcN_{\infty}(\lambda) \leq \kappa^2/\lambda$.} $\mcN_{\infty}(\lambda) \lesssim \lambda^{-\gamma'}$ for all $0<\lambda\lesssim 1,$ for some $\gamma'\in [0,1]$, then the condition \eref{eq:subsamp}
  can be replaced by $m \gtrsim n^{\gamma' \over 2\zeta+\gamma+1}\log n.$\\
  II) If we consider a more involved subsampling technique, the approximate leverage scores Nystr\"{o}m methods \cite{drineas2012fast,cohen2015uniform,el2014fast}, as we will see in our proof,
  all results in Theorem \ref{thm:simp} and its corollaries still hold with a less strict requirement on the subsampling level,
  $m \gtrsim n^{\gamma \over 2\zeta+\gamma+1} \log n.$
\end{rem}

\section{Discussions}\label{sec:dis}
We must compare our results with related works.
There is a large amount of work on online learning (OL) algorithms
and, more generally, stochastic approximations, see, e.g., \cite{robbins1951stochastic,polyak1992acceleration,cesa-bianchi2004} and the references
therein. Here, we briefly review some recent works on online learning algorithms in the framework of nonparametric regression with the square loss. In what follows, we used the term ``online learning algorithm" to mean the ``stochastic gradient method'' that each sample point can only be used once.
First, OL with regularization has been studied in \cite{yao2006dynamic,tarres2014online}, where the recursion appears as
\be\label{eq:ol}
g_{t+1} = g_t - \eta_t \left( (g_{t}(x_t) - y_t) K_{x_t} + \lambda_t g_t\right), \quad t=1,\cdots, n.
\ee
Here,  $\lambda_t >0 $ is a regularization parameter.
In particular, generalization error bounds of order $O(n^{-{2\zeta +1 \over 2\zeta+2}})$ in confidence were proved  in \cite{tarres2014online} for OL with suitable choices of  regularization parameter $\lambda_t$ and step-size $\eta_t$, when $\zeta\leq 1/2$, without considering the capacity assumption, i.e.,$\gamma=1$.
Comparing with our results for NySGM from Corollary \ref{cor:1}.(II), as indicated in Table \ref{table:1}, the computational cost of NySGM is lower, while both algorithms have the same optimal rates in the capacity independent case.
Second, \cite{zhang2004solving,ying2008online} studied unregularized OL, i.e., \eref{eq:ol} with $\lambda_t = 0$, where the derived convergence rates \cite{ying2008online} are of order $O(n^{-{2\zeta +1 \over 2\zeta+2}})$ and are in expectation, without considering the capacity  assumption.
If we make no assumption on the capacity condition, the derived rate for NySGM in this paper is the same as that of \cite{ying2008online} for unregularized OL for the case $\zeta\leq 1/2,$ and the former has a lower computational cost.
Note that NySGM saturates for $\zeta\geq 1/2$, while OL from \cite{ying2008online} does not. We conjecture that by considering a different subsampling technique, we may get optimal error bounds even for $\zeta \geq 1/2$.
Third, by considering an averaging scheme, optimal capacity dependent rates can be proved for OL with appropriate parameters, either without \cite{dieuleveut2014non} or with \cite{dieuleveut2016harder} regularization.  In comparisons of NySGM with averaged OL (AveOL) from \cite{dieuleveut2014non},
as indicated in Table \ref{table:1}, both algorithms have the same optimal rates.
When $2\zeta + \gamma \geq 1,$ the storage complexities for both algorithms are of the same orders, while the computational complexity for the former is lower than that of the latter. For the case $2\zeta + \gamma \leq 1,$ NySGM seemly has higher storage requirement.
But as we will see in Subsection \ref{subsec:der}, by considering the approximate leverage scores (ALS) Nystr\"{o}m methods or making an extra assumption on the input data,
the subsampling level for NySGM can be further reduced, which thus potentially leads to a smaller computational complexity.
However, the ALS subsampling technique, or the extra condition is less well understood and should be further studied in the future.
Finally, \cite{lin2016optimal} studied (multi-pass) SGM, i.e, Algorithm \ref{alg:1} without projection.
With suitable parameter choices, SGM achieves optimal rate after some number of iterations/passes \cite{lin2016optimal}.
In comparisons, again, the computational complexity for NySGM  is lower than that of SGM from \cite{lin2016optimal} when $2\zeta+\gamma\geq 1$.
All results mentioned in the above are summarized in Table \ref{table:1}.
 \begin{table}
 \newcommand{\tabincell}[2]{\begin{tabular}{@{}#1@{}}#2\end{tabular}}
 \centering
 \resizebox{\textwidth}{!}{
 \begin{tabular}{ | c | c | c | c | c | c | c | c | c|}
  \hline
  Alg & $m$ & Ass. & $\lambda_t$ & $\eta_t$ & $b$ & $T$ & Rate & \tabincell{c}{Memory\\ \& Time} \\
  \hline
  \tabincell{c}{NySGM \\ (This paper, \\ Corollary \ref{cor:1} (II))} & $n^{{1 \over 2\zeta+\gamma+1}}$ & \tabincell{c}{$\gamma\leq 1$\\ $\zeta\leq {1\over 2}$} & $0$  & $1$ & $n^{{2\zeta+1 \over 2\zeta+\gamma+1}}$ & $n^{{1 \over 2\zeta+\gamma+1}}$ & $n^{-{2\zeta + 1 \over 2\zeta+\gamma+1}}$ & \tabincell{c}{$n^{{2 \over 2\zeta+\gamma+1}} +¡¡nd$ \\ $n^{{2\zeta+3 \over 2\zeta+\gamma+1}} d$}  \\
  \hline
  OL \cite{zhang2004solving} & $n$ & \tabincell{c}{$\gamma= 1$ \\$\zeta =0$} & $0$ & $n^{-{1 \over 2}}$  &  $1$ & $n$ & $n^{-{1 \over 2}}$ & \tabincell{c}{$n + nd$\\ $n^2 d$}  \\
  \hline
  OL \cite{tarres2014online}& $n$ & \tabincell{c}{$\gamma= 1$ \\$\zeta \leq {1\over 2}$} & $n^{-{1 \over 2\zeta+2}}$  & $n^{-{2\zeta+1 \over 2\zeta+2}}$  &  $1$ & $n$ & $n^{-{2\zeta + 1 \over 2\zeta+2}}$ & \tabincell{c}{$n + nd$\\ $n^2 d$}  \\
  \hline
  OL \cite{ying2008online} & $n$ & \tabincell{c}{$\gamma= 1$ \\$\zeta < \infty$} & $0$ & $n^{-{2\zeta +1 \over 2\zeta +2}}$ & $1$ & $n$ & $n^{-{2\zeta + 1 \over 2\zeta+2}}$ & \tabincell{c}{$n + nd$\\ $n^2 d$} \\
  \hline
   AveOL \cite{dieuleveut2014non} & $n$ & \tabincell{c}{$\gamma\leq 1$ \\$\zeta \leq {1\over 2}$} & $0$  & $n^{-{2\zeta+\gamma \over 2\zeta+\gamma+1}}$  &  $1$ & $n$ & $n^{-{2\zeta + 1 \over 2\zeta+\gamma+1}}$ & \tabincell{c}{$n + nd$\\ $n^2 d$}  \\
  \hline
  SGM \cite{lin2016optimal}  & $n$ & \tabincell{c}{$\gamma\leq 1$ \\$\zeta < \infty$} & $0$ & $n^{-{2\zeta +1 \over 2\zeta +1+\gamma}}$ & $1$ & $n^{2\zeta+2 \over 2\zeta+1+\gamma}$ & $n^{-{2\zeta + 1 \over 2\zeta+\gamma+1}}$ & \tabincell{c}{$n + nd$\\ $n^{4\zeta+3+\gamma \over 2\zeta+\gamma+1} d$} \\ \hline
\end{tabular}}
\caption{}\label{table:1}
{\it Summary of assumptions and results for NySGM and related approaches including online learning (OL),
averaged OL (AveOL) and SGM. Noted that all the logarithmic factors are ignored.}
\end{table}

Next, we will briefly review some of the recent theoretical results on Nystr\"{o}m subsampling.
Theoretical results considering the discrepancy between a given empirical kernel matrix and its subsampled version can be found in, e.g., \cite{gittens2013revisiting,drineas2012fast,kumar2012sampling} and references therein.
While interesting in their own right, these latter results do not directly yield information on the generalization properties of the obtained algorithm.
Results on this direction were first derived in e.g., \cite{cortes2010impact,jin2013improved} with the fixed design regression setting and in \cite{rudi2015less} with the random design regression setting, both for Nyst\"{o}m KRR.
Particularly, a sharp learning rate of order $O(n^{-{2\zeta+1\over 2\zeta+\gamma+1}})$ was derived in \cite{rudi2015less} for Nystr\"{o}m KRR provided that the subsampling level $m \gtrsim n^{{1\over 2\zeta+\gamma+1}}$.
In comparison, both Nystr\"{o}m KRR and NySGM have the same requirement on subsampling level while sharing the same optimal rates.  According to Corollary \ref{cor:1}.(II) and Equation \eref{eq:cost1}, the cost for NySGM are $O(n^{2\zeta+\gamma+2 \over 2\zeta+\gamma+1})$ in memory and $O(n^{2\zeta+3 \over 2\zeta+\gamma+1})$ in time,
compared to $O(n^{2\zeta+\gamma+2 \over 2\zeta+\gamma+1})$ in memory and $O(n^{2\zeta+3 +\gamma \over 2\zeta+\gamma+1})$ in time of Nystr\"{o}m KRR from \cite{rudi2015less}.
The most related to our work is \cite{lu2016large}, where a regularized OL with Nystr\"{o}m is investigated. Certain convergence results on regret errors with respect to the KRR estimator were shown in \cite{lu2016large} under a bounded assumption on the gradient estimates, but the generalization properties are less clear. Moreover, the derived rates are capacity-independent and
both the derived rates and the subsampling level tend to be suboptimal, as the error bounds are depending directly on the discrepancy between a given empirical kernel matrix and its subsampled version.

Note that in this paper, we assume that the parameter choices on step-size, mini-batch size, number of iterations and subsampling level,
involved in our theoretical results can be given in advance, and we did not consider model selection of these parameters. In practice, model selection on these parameters can be possibly realized by a cross-validation approach \cite{steinwart2008support,caponnetto2010cross},
and one can possibly prove that such an approach can lead to the same statistical results.
Indeed, we will introduce the cross-validation approach for tuning the step-size in Subsection \ref{subsec:adaptive},
and prove theoretical results for such an approach. For model selection on the other parameters, we left it as an open problem in the future.

We end up this section with some remarks and future issues.
First, in this paper, all derived convergence results for NySGM hold for the case that Problem \eref{erm} has at least one solution $\FH \in \HK$.
In the case that the condition is not satisfied, one can possibly derive similar results as those in \cite{lin2016optimal},
by a more involved argument. Second, in this paper, we only prove results on convergence in  $\LR$-norm,
but the extension to results in $\HK$-norm, and moreover the `middle' norm between $\HK$-norm and $\LR$, are possible.
Third, all results in this paper are stated for a real-valued output space case, but they can be easily extended to the case that the output space is a general Hilbert space,
as those in \cite{caponnetto2007optimal}. Fourth, we didn't try to optimize the conditions and error bounds, some of which can be further improved by a more involved argument. In particular, the boundedness assumption  \eref{eq:HK} can be possibly replaced by $\int_{X}K(x,x)d\rho_X \leq \kappa^2.$ Also, the logarithmic factor from the derived error bounds can be possibly removed either using a more involved argument or considering
a proper non-uniform averaging scheme as that in \cite{shamir2013stochastic}.
Finally, using the techniques developed in this paper, it would be interesting to study stochastic gradient methods with a preconditioned operator as that in \cite{ma2017diving}, or random features \cite{rahimi2007random}.
Also, rather than considering a simple stochastic gradient method, it would be interesting to consider more sophisticated, ¡®accelerated¡¯
iterations \cite{schmidt2013minimizing}, and assess the potential advantages in terms of computational and generalization aspects.

\section{Proofs}\label{sec:proofs}
This section is devoted to the proof of all related equations and results stated in the last sections. We begin in the next subsection with the basic notations.

\subsection{Notation}

Denote $\rho_X(\cdot)$ as the induced marginal measure of $\rho$ on $X$, and  $\rho(\cdot | x)$ as the conditional probability measure on $\mR$ with respect to $x \in X$ and $\rho$. The function minimizing the expected risk over all measurable functions is the regression function, which is given by
\be\label{regressionfunc}
f_{\rho}(x) = \int_Y y d \rho(y | x),\qquad x \in X.
\ee
The Hilbert spaces of square integral functions with respect to $\rho_X$, with its induced norm given by
$\|f\|_{\rho} = \|f\|_{\LR} = \left (\int_X |f(x)|^2 d \rho_X\right)^{1/2}$, is denoted by $(\LR,\|\cdot\|_{\rho}).$
Under Assumption \ref{as:basic}, we know that the projection of the regression function $f_{\rho}$ onto the closer of $\HK$ in $\LR$, lies in $\IK(\HK)$, and $\FH$ is a solution of the normalized embedding equation
\be\label{eq:embeddingEqua}
\TK \FH = \IK^* f_{\rho}.
\ee
For any  $f\in \HK$ and $x\in X$, the following well known reproducing property holds:
\be\label{eq:reproduce}
\la f, K_x\ra = f(x).
\ee

For any $t\in \mR,$ the set $\{1, ..., t\}$ of the first $t$ positive integers is denoted by $[t]$.
$\Pi_{t+1}^T(L) = \prod_{k=t+1}^T (I - \eta_k L)$ for $t \in [T-1]$ and $\Pi_{T+1}^T(L) = I,$ for any operator $L: H \to H,$
where $H$ is a Hilbert space and $I$ denotes the identity operator on $H$.
$\mE[\xi]$ denotes the expectation of a random variable $\xi.$
For a given bounded operator $L: \HK \to \HK', $ $\|L\|$ denotes the operator norm of $L$, i.e., $\|L\| = \sup_{f\in \HK, \|f\|_{\HK}=1} \|Lf\|_{\HK'}$.
We will use the conventional notations on summation and production: $\prod_{i=t+1}^t = 1$ and $\sum_{i=t+1}^t = 0.$
$\|\cdot\|_{\infty}$ denotes the supreme norm with
respect to $\rho_X.$

We introduce the inclusion operator $\IK: \HK \to \LR$, which is continuous under Assumption \eref{eq:HK}. Furthermore, we consider the adjoint operator $\IK^*: \LR \to \HK$, the covariance operator $\TK: \HK \to \HK$ given by $\TK = \IK^* \IK$, and the operator $\LK : \LR \to \LR$ given by $\IK \IK^*.$
It can be easily proved that $ \IK^*f = \int_X K_x f(x) d\rho_X(x)$
and $\TK = \int_X \la \cdot , K_{x} \ra_{\HK}K_x d \rho_X(x).$
The operators $\TK$ and $\LK$ can be proved to be positive trace class operators (and hence compact).
 For any function $f \in \HK$,
the $\HK$-norm can be related to the $\LR$-norm by $\sqrt{\TK}:$ \cite{bauer2007regularization}
\be\label{isometry}
\|f\|_{\rho} = \|\IK f\|_{\rho} = \left\|\sqrt{\TK} f\right\|_{\HK}.
\ee

We define the sampling operator (with respect to $\mbx$) $\SX: \HK \to \mR^n$ by $(\SX f)_i = f(x_i) = \la f, K_{x_i} \ra_{\HK},$ $i \in [n]$, where the norm $\|\cdot\|_{\mR^n}$ is the standard Euclidean norm.
Its adjoint operator $\SX^*: \mR^n \to \HK,$ defined by $\la \SX^*{\bf y}, f \ra_{\HK} = \la {\bf y}, \SX f\ra_{\mR^n}$ for ${\bf y} \in \mR^n$ is thus given by
\be\label{eq:sampleOperAdjoint}
\SX^*{\bf y} = \sum_{i=1}^n y_i K_{x_i}.
\ee
Moreover, we can define the empirical covariance operator (with respect to $\mbx$)  $\TX: \HK \to \HK$ such that $\TX = {1 \over n}\SX^* \SX$. Obviously,
\bea
\TX = {1 \over n} \sum_{i=1}^n \la \cdot, K_{x_i} \ra K_{x_i}.
\eea
Finally, we can define the sampling and empirical covariance operators with respect to any given set $\hat{\bf x} \subset X,$ $|\hat{\bf x}| \in \mN$, in a similar way.
For any finite subsets $\mbx$ and $\mbx'$ in $X$, denote the $|{\mbx}| \times |\mbx'|$ kernel matrix $[K(x,x')]_{x\in \mbx,x'\in\mbx'}$ by
$\mbK_{\mbx \mbx'}$. Obviously,
\be\label{eq:kernelxx}
\mbK_{\mbx \mbx'} = \mcS_{\mbx} \mcS_{\mbx'}^*.
\ee
For notational simplicity, we let $ \TKL = \TK + \lambda I $ and
$\TXL = \TX + \lambda I$ for any $\lambda>0.$

Let $\STX = U \Sigma V^*$ be the SVD of $\STX$, where $U: \mR^{t} \to \mR^m,$ $\Sigma: \mR^t \to \mR^t,$ $V: \mR^t \to \HK$, $t \leq m$ and $\Sigma= \diag(\sigma_1,\sigma_2,\cdots,\sigma_t)$ with $\sigma_1 \geq \cdots \geq \sigma_t >0,$ $U^* U = \mathbf{I}_t$ and $V^*V= \mathbf{I}_t$. Then the orthogonal projection operator $\Ptx$ is given by
 \be\label{eq:projOper}
 \Ptx = V V^* = \mcS_{\mbtx}^* (\mcS_{\mbtx} \mcS_{\mbtx}^*)^{\dag} \mcS_{\mbtx}
 \ee

For any $\lambda>0$, we define the random variable $\mcN_{x}(\lambda) = \la K_x, (\TK + \lambda I)^{-1} K_x \ra_{\HK}$ with $x \in X$ distributed according to $\rho_X$ and let
\bea
\mcN(\lambda) = \mE[\mcN_x(\lambda)], \qquad \mcN_{\infty}(\lambda) = \sup_{x \in X} \mcN_x(\lambda).
\eea

\subsection{Equivalent Forms of NySGM}\label{equivalent}
In this subsection, we prove that Algorithm \ref{alg:1} is equivalent to \eref{eq:algNumRea}.

Note that by the first equality of  \eref{eq:algNumRea} and \eref{eq:sampleOperAdjoint}, $f_{t} = \STX^* \mbR \mathbf{b}_t.$
Combining with \eref{eq:reproduce} and \eref{eq:kernelxx}, for any $x \in X$, $f_{t}(x) = \mcS_{x} f_t = \mcS_x \STX^* \mbR \mathbf{b}_t = \mbK_{x\mbtx} \mbR \mathbf{b}_{t} = \mbK_{\mbtx x}^{\top} \mbR \mathbf{b}_{t}.$
Thus, $\mbK_{\mbtx x_{j_i} }^{\top} \mbR \mbb_t = f_t(x_{j_i})$ and following from the second equality of \eref{eq:algNumRea},
\bea
\mathbf{b}_{t+1} = \mathbf{b}_t - \eta_t {1 \over b}\sum_{i= b(t-1)+1}^{bt} (f_t(x_{j_i}) - y_{j_i}) \mbR^{\top} \mbK_{\mbtx x_{j_i}},
\eea
which leads to
\bea
\mbK_{x\mbtx} \mbR \mathbf{b}_{t+1} = \mbK_{x\mbtx} \mbR\mathbf{b}_t - \eta_t {1 \over b}\sum_{i= b(t-1)+1}^{bt} (f_t(x_{j_i}) - y_{j_i}) \mbK_{x\mbtx} \mbR \mbR^{\top} \mbK_{\mbtx x_{j_i}}.
\eea
Note that from the definition of $\mbR$, and following from \eref{eq:kernelxx} and \eref{eq:projOper},
 $\mbK_{x\mbtx} \mbR \mbR^{\top} \mbK_{\mbtx x_{j_i}} = \mbK_{x\mbtx} \mbK_{\mbtx \mbtx}^{\dag} \mbK_{\mbtx x_{j_i}} =
 \mcS_{x} \mcS_{\mbtx}^* (\mcS_{\mbtx} \mcS_{\mbtx}^*)^{\dag} \mcS_{\mbtx} K_{x_{j_i}} = \mcS_{x} \Ptx K_{x_{j_i}} = (\Ptx K_{x_{j_i}})(x).$
 We thus have
 \bea
 f_{t+1}(x) = f_{t}(x) - \eta_t {1 \over b}\sum_{i= b(t-1)+1}^{bt} (f_t(x_{j_i}) - y_{j_i})(\Ptx K_{x_{j_i}})(x),
 \eea
 which is exactly \eref{eq:alg1}.

In the next subsections \ref{subsec:pre}-\ref{subsec:der}, we will give the proof of Theorem \ref{thm:simp}. The proof is quite lengthy.
The key is an error decomposition similar as that for classic SGM in \cite{lin2016optimal}, and the basic tools are some concentration inequalities,
operator inequalities and estimates which have already been broadly used in the literature, e.g., \cite{yao2007early,smale2007learning,caponnetto2007optimal,bauer2007regularization,ying2008online,tarres2014online,rudi2015less,rosasco2015learning,lin2016optimal}.

\subsection{Preliminarily Inequalities}\label{subsec:pre}
In this subsection, we introduce some concentration inequalities, operator inequalities and basic estimates that are necessary to the proof of Theorem \ref{thm:simp}.  Proofs for some of these inequalities can be found in Appendix.

{\bf Concentration inequalities:}
\begin{lemma}\label{lem:sampleErr}
Let $0<\lambda $ and $0<\delta<1$.
  Under Assumptions \ref{as:noiseExp} and \ref{as:eigenvalues}, with probability at least $1-\delta$, there holds
 \be\label{eq:sampErr}
\left\|(\TK+\lambda)^{-{1\over 2}}\left(\TX \FH - \SX^*{\bf y}\right) \right\|_{\HK} \leq 4 \left(\|\FH\|_{\infty} + \sqrt{M}\right) \left( { \kappa   \over n \sqrt{\lambda}} + \sqrt{ \sqrt{v} c_{\gamma} \over n\lambda^{\gamma} } \right) \log{2 \over \delta}.
\ee
\end{lemma}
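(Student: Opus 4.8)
The plan is to recognize the left-hand side of \eref{eq:sampErr} as the norm of a centered average of independent $\HK$-valued random variables and then invoke a Bernstein-type concentration inequality in Hilbert space. Concretely, set $\zeta_i = (\TK+\lambda)^{-1/2}(\FH(x_i)-y_i)K_{x_i}$ for $i\in[n]$. Since the sample is i.i.d., the $\zeta_i$ are i.i.d.\ $\HK$-valued random variables, and by $\TX=\frac1n\SX^*\SX$ together with \eref{eq:sampleOperAdjoint} one has $\TX\FH-\frac1n\SX^*{\bf y}=\frac1n\sum_{i=1}^n(\FH(x_i)-y_i)K_{x_i}$, so the left-hand side of \eref{eq:sampErr} equals $\|\frac1n\sum_{i=1}^n\zeta_i\|_{\HK}$. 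The crucial first observation is that each $\zeta_i$ is centered: taking expectations, $\mE[(\FH(x)-y)K_x]=\TK\FH-\IK^* f_{\rho}$, which vanishes by the normalized embedding equation \eref{eq:embeddingEqua}. Thus the task is to bound the deviation $\|\frac1n\sum_i\zeta_i-\mE\zeta\|_{\HK}$ of an empirical mean from its (zero) expectation.

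The tool I would use is the standard Bernstein inequality for random variables valued in a separable Hilbert space (separability of $\HK$ is part of Assumption \ref{as:basic}): if $\zeta,\zeta_1,\dots,\zeta_n$ are i.i.d., mean zero, and satisfy $\mE\|\zeta\|_{\HK}^l\le\frac12 l!\,B^{l-2}\sigma^2$ for all integers $l\ge 2$, then with probability at least $1-\delta$ one has $\|\frac1n\sum_i\zeta_i\|_{\HK}\le 2\left(\frac Bn+\frac{\sigma}{\sqrt n}\right)\log\frac2\delta$. The whole proof therefore reduces to verifying this moment condition with $B\simeq(\|\FH\|_{\infty}+\sqrt M)\kappa\lambda^{-1/2}$ and $\sigma^2\simeq(\|\FH\|_{\infty}+\sqrt M)^2\sqrt{\nu}\,c_\gamma\lambda^{-\gamma}$, which upon substitution reproduces the two summands $\frac{\kappa}{n\sqrt\lambda}$ and $\sqrt{\frac{\sqrt\nu c_\gamma}{n\lambda^\gamma}}$, together with the prefactor $4(\|\FH\|_{\infty}+\sqrt M)$ and the factor $\log\frac2\delta$ in \eref{eq:sampErr} (the constant $4$ arising after tracking the factors of $2$ in $B$ and $\sigma$).

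To verify the moment condition I would factor $\|\zeta\|_{\HK}=|\FH(x)-y|\cdot\|(\TK+\lambda)^{-1/2}K_x\|_{\HK}=|\FH(x)-y|\sqrt{\mcN_x(\lambda)}$ and estimate the two factors on their two natural scales. For the kernel factor, the deterministic bound $\mcN_x(\lambda)=\la(\TK+\lambda)^{-1}K_x,K_x\ra_{\HK}\le\kappa^2/\lambda$ (from \eref{eq:HK}) controls the $(l-2)$ powers and hence feeds into $B$, whereas its expectation $\mE\mcN_x(\lambda)=\mcN(\lambda)\le c_\gamma\lambda^{-\gamma}$ (Assumption \ref{as:eigenvalues}) is retained in a single factor and feeds into $\sigma^2$; this split is exactly what yields a two-term rather than a single-term bound. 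For the noise factor, I would first establish from the moment hypothesis \eref{noiseExp} a Bernstein-type estimate of the form $\mE_{y|x}[|\FH(x)-y|^l]\le\frac12 l!\,\big(2(\|\FH\|_{\infty}+\sqrt M)\big)^{l-2}\cdot 4(\|\FH\|_{\infty}+\sqrt M)^2\sqrt{\nu}$, uniformly in $x$, using $|\FH(x)|\le\kappa\|\FH\|_{\HK}$ and controlling the even moments of $y$ directly and the odd ones by a Cauchy--Schwarz interpolation. Pulling the deterministic $(\kappa/\sqrt\lambda)^{l-2}$ and the noise $(\cdot)^{l-2}$ powers into $B^{l-2}$, and keeping $\mcN(\lambda)$ together with the variance proxy in $\sigma^2$, then delivers $\mE\|\zeta\|_{\HK}^l\le\frac12 l!\,B^{l-2}\sigma^2$.

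The main obstacle I anticipate is precisely this third step: producing a single pair $(B,\sigma^2)$ making the moment inequality hold simultaneously for all $l\ge 2$ with the exact constants of \eref{eq:sampErr}. The delicate ingredient is the noise-moment lemma, where one must convert the even-moment hypothesis \eref{noiseExp} into a clean factorial bound on $\mE_{y|x}[|\FH(x)-y|^l]$ for every $l$ and reconcile the factor $\nu$ arising from the genuine second moment with the $\sqrt{\nu}$ that is required to persist in the final variance term. Once that moment condition is secured, the concluding application of the Hilbert-space Bernstein inequality and the substitution of $B,\sigma$ are routine.
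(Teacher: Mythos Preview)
Your proposal is correct and follows essentially the same route as the paper: write the quantity as a centered i.i.d.\ average of $w_i=(\TK+\lambda)^{-1/2}(\FH(x_i)-y_i)K_{x_i}$, verify $\mE w=0$ via \eref{eq:embeddingEqua}, and apply the Hilbert-space Bernstein inequality after establishing the moment bound $\mE\|w\|_{\HK}^l\le\tfrac12 l!\,B^{l-2}\sigma^2$ with $B=2\kappa(\|\FH\|_\infty+\sqrt M)\lambda^{-1/2}$ and $\sigma^2=4(\|\FH\|_\infty+\sqrt M)^2\sqrt{\nu}\,c_\gamma\lambda^{-\gamma}$, exactly splitting the kernel factor as you describe. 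The only mild simplification relative to your sketch is that the paper handles all $l\ge 2$ at once (no even/odd split) by first bounding $\int_Y|\FH(x)-y|^l\,d\rho(y|x)\le 2^{l-1}\bigl(\|\FH\|_\infty^l+(\int y^{2l})^{1/2}\bigr)\le\tfrac12 l!\,(2\|\FH\|_\infty+2\sqrt M)^l\sqrt{\nu}$ via \eref{noiseExp}, $\sqrt{l!}\le l!$, and $a^l+b^l\le(a+b)^l$; the $\sqrt{\nu}$ you were concerned about thus drops out directly from the square root of the even-moment hypothesis.
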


\begin{lemma}
  \label{lem:proErr}
 Let $0<\delta <1$ and $0<\lambda \leq \|\TK\|$. With probability at least $1-\delta,$ the following holds:
  \bea
  \left\| (\TK+\lambda)^{-1/2}(\TK - \TXt)(\TK+\lambda)^{-1/2}  \right\| \leq {4\beta \mcN_{\infty}(\lambda) \over 3 m} + \sqrt{2\beta \mcN_{\infty}(\lambda) \over m}, \quad \beta = \log {8 \kappa^2 \over \lambda\delta}.
  \eea
 Moreover, if $m \geq 8 \mcN_{\infty}(\lambda) \log {8 \kappa^2 \over \lambda \delta},$ then with probability at least $1-\delta,$
 \be\label{eq:concenTXt}
 \left\| (\TK+\lambda)^{-1/2}(\TK - \TXt)(\TK+\lambda)^{-1/2}  \right\| \leq 2/3.
 \ee
\end{lemma}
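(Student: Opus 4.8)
The plan is to control the self-adjoint random operator
$E_\lambda := (\TK+\lambda)^{-1/2}(\TK - \TTX)(\TK+\lambda)^{-1/2}$
by writing it as an average of i.i.d.\ summands and applying a Bernstein-type concentration inequality for self-adjoint operators (or its Hilbert--Schmidt/trace-class variant, as in \cite{rudi2015less,tropp}). Recall $\TTX = {1\over m}\sum_{i=1}^m \la \cdot, K_{\tilde x_i}\ra K_{\tilde x_i}$, with the $\tilde x_i$ drawn according to $\rho_X$. First I would set
$\xi_i := (\TK+\lambda)^{-1/2}\bigl(\TK - \la\cdot,K_{\tilde x_i}\ra K_{\tilde x_i}\bigr)(\TK+\lambda)^{-1/2}$,
so that $E_\lambda = {1\over m}\sum_{i=1}^m \xi_i$ and, since $\TK = \mE[\la\cdot,K_x\ra K_x]$, each $\xi_i$ is centered: $\mE[\xi_i]=0$.

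The core of the argument is then the standard two estimates feeding the operator Bernstein inequality. For the \emph{uniform bound}, I would write the rank-one part as $\zeta_i := (\TK+\lambda)^{-1/2}K_{\tilde x_i}$ and note
$\|\zeta_i\|_{\HK}^2 = \la K_{\tilde x_i},(\TK+\lambda)^{-1}K_{\tilde x_i}\ra_{\HK} = \mcN_{\tilde x_i}(\lambda) \le \mcN_\infty(\lambda)$,
whence the operator $\la\cdot,\zeta_i\ra\zeta_i$ has norm at most $\mcN_\infty(\lambda)$; combined with $\|(\TK+\lambda)^{-1/2}\TK(\TK+\lambda)^{-1/2}\|\le 1$ this gives $\|\xi_i\|\le \mcN_\infty(\lambda)$ up to a constant, and (using $\lambda\le\|\TK\|$ together with \eref{eq:HK}) $\mcN_\infty(\lambda)\ge 1$ absorbs the additive piece, yielding the $1/m$ term. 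For the \emph{variance}, I would bound
$\mE[\xi_i^2]\preceq \mE\bigl[\la\cdot,\zeta_i\ra\zeta_i\,\|\zeta_i\|^2\bigr] \preceq \mcN_\infty(\lambda)\,\mE[\la\cdot,\zeta_i\ra\zeta_i] = \mcN_\infty(\lambda)(\TK+\lambda)^{-1/2}\TK(\TK+\lambda)^{-1/2}$,
so that $\|\sum_i \mE[\xi_i^2]\| \le m\,\mcN_\infty(\lambda)$. Plugging these into the Bernstein bound with confidence $\delta$ and $\beta = \log(8\kappa^2/(\lambda\delta))$ produces exactly the claimed estimate
${4\beta\mcN_\infty(\lambda)\over 3m} + \sqrt{2\beta\mcN_\infty(\lambda)\over m}$.

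Finally, for \eref{eq:concenTXt} I would simply substitute the hypothesis $m \ge 8\,\mcN_\infty(\lambda)\log(8\kappa^2/(\lambda\delta)) = 8\,\beta\,\mcN_\infty(\lambda)$ into the general bound: this forces ${\beta\mcN_\infty(\lambda)\over m}\le {1\over 8}$, so the first term is at most ${4\over 3}\cdot{1\over 8}={1\over 6}$ and the second is at most $\sqrt{2/8}={1\over 2}$, giving a total of at most $1/6+1/2=2/3$. The main obstacle is not the algebra but invoking the concentration inequality in the correct infinite-dimensional, trace-class form with the right dimensional-free effective-rank factor so that $\mcN_\infty(\lambda)$ (rather than a crude $\kappa^2/\lambda$) appears; I expect this to rely on a matrix/operator Bernstein inequality already available in the literature, so I would cite it and verify that the hypotheses (self-adjointness, the uniform and variance bounds derived above, and the $\log(\kappa^2/(\lambda\delta))$ accounting of the effective dimension) are met, the delicate point being to ensure the choice of $\beta$ matches the failure probability after accounting for the operator-norm concentration constants.
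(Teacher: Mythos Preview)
Your proposal is essentially the paper's own proof: it defines the same centered summands, applies the same operator Bernstein inequality (the paper's Lemma~\ref{lem:concentrSelfAdjoint}, taken from \cite{rudi2015less,tropp2012user}), and derives the second part by the same arithmetic. One small correction on the uniform bound: the claim $\mcN_\infty(\lambda)\ge 1$ does not follow from $\lambda\le\|\TK\|$ and \eref{eq:HK} (take $X$ a single point with $K(x_0,x_0)=1$ and $\lambda=\|\TK\|=1$, giving $\mcN_\infty(\lambda)=1/2$); the paper instead bounds the mean piece directly via $\|\TKL^{-1/2}\TK\TKL^{-1/2}\| = \|\mE[\TKL^{-1/2}\TK_{\tilde x}\TKL^{-1/2}]\| \le \mE\|\TKL^{-1/2}\TK_{\tilde x}\TKL^{-1/2}\| \le \mcN_\infty(\lambda)$, which yields $\|\xi_i\|\le 2\mcN_\infty(\lambda)$ cleanly and the constant $4/3$ in the first term.
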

\begin{rem}\label{rem:neuman}
  The above result also holds when replacing $\TXt$  with $\TX$. Particularly, (since $\mcN_{\infty}(\lambda) \leq \kappa^2 \lambda^{-1}$ implied by \eref{eq:HK})
   if $n \geq 8 \kappa^2 \lambda^{-1} \log {8 \kappa^2 \over \lambda\delta},$
  with probability at least $1-\delta,$
 \be\label{eq:concenTx}
 \left\| (\TK+\lambda)^{-1/2}(\TK - \TX)(\TK+\lambda)^{-1/2}  \right\| \leq 2/3.
 \ee
\end{rem}

\begin{lemma}\label{lem:sumY}
  Under Assumption \ref{as:noiseExp}, with probability at least $1-\delta$ ($\delta\in ]0,1/2[$), there holds
  \bea
  \mcE_{\bf z}(0) - Mv \leq   2Mv\left( {1 \over n} + {2 \over \sqrt{n} } \right) \log {2 \over \delta}.
  \eea
  Particularly, if $n \geq 32 \log^2 {2 \over \delta},$ then
  \be\label{eq:sumY}
  \mcE_{\bf z}(0) \leq M v.
  \ee
\end{lemma}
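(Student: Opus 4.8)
The plan is to recognize $\mcE_{\bf z}(0)$ as an empirical average of independent, nonnegative random variables and to control its upper deviation from the population mean by a Bernstein-type concentration inequality, the moment hypothesis for which is supplied directly by Assumption \ref{as:noiseExp}. Since $f_1=0$, one has $\mcE_{\bf z}(0)=\frac1n\sum_{i=1}^n y_i^2$, an average of the i.i.d.\ scalars $\xi_i:=y_i^2\ge 0$. Taking $\rho_X$-expectation in \eref{noiseExp} with $l=1$ gives $\mE[\xi]=\int_Z y^2\,d\rho=\mE_{\rho_X}\big[\int_Y y^2\,d\rho(y|x)\big]\le Mv$, so that $\mcE_{\bf z}(0)-Mv\le \mcE_{\bf z}(0)-\mE[\xi]$, and it suffices to bound the upper deviation $\mcE_{\bf z}(0)-\mE[\xi]$ of the empirical mean from its expectation.

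First I would verify the Bernstein moment hypothesis for $\xi$. Taking $\rho_X$-expectation in \eref{noiseExp} for general $l$ yields $\mE[\xi^l]=\mE[y^{2l}]\le l!\,M^l v$, which is exactly the sub-exponential template needed. After centering---using, e.g., $|\xi-\mE\xi|^l\le 2^{l-1}(\xi^l+(\mE\xi)^l)$ together with $\mE\xi\le Mv$ and $v\ge 1$---one obtains $\mE[|\xi-\mE\xi|^l]\le \frac12 l!\,\sigma^2 B^{l-2}$ for constants $B,\sigma$ of order $Mv$. This is the only place where a little care is required, and it is the step I expect to be the main obstacle: the raw-moment bound on $y^{2l}$ must be converted into a centered-moment bound with the correct numerical constants so that the resulting $B$ and $\sigma$ reproduce the coefficients in the statement.

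With the moment hypothesis in hand I would invoke the scalar Bernstein inequality (the same concentration tool used repeatedly in this section, behind Lemmas \ref{lem:sampleErr}--\ref{lem:proErr}): with probability at least $1-\delta$,
\be
\frac1n\sum_{i=1}^n \xi_i - \mE[\xi]\le 2\left(\frac{B}{n}+\frac{\sigma}{\sqrt n}\right)\log\frac2\delta .
\ee
Substituting $B,\sigma\lesssim Mv$ and combining with $\mE[\xi]\le Mv$ gives a bound of the first displayed form on $\mcE_{\bf z}(0)-Mv$. Note that no operator theory or properties of $\HK$ enter here; the whole argument is a one-dimensional concentration estimate driven by the moment assumption.

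Finally, for the ``particularly'' part I would feed the sample-size condition $n\ge 32\log^2\frac2\delta$ into this estimate. Since $\delta<1/2$ forces $\log\frac2\delta>1$, the condition gives $\frac1n\log\frac2\delta\le \frac1{32\log(2/\delta)}$ and $\frac{1}{\sqrt n}\log\frac2\delta\le \frac{1}{\sqrt{32}}$, so each term of the deviation is at most a fixed fraction of the scale $Mv$; collecting them absorbs the deviation into the $Mv$ term and yields \eref{eq:sumY}.
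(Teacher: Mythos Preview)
Your approach is correct and is exactly the paper's: set $\omega_i=y_i^2$, verify a Bernstein moment condition from Assumption~\ref{as:noiseExp}, apply Lemma~\ref{lem:Bernstein}, and then insert the sample-size restriction. The one place where you differ is precisely the step you flag as the obstacle: your centering via $|a-b|^l\le 2^{l-1}(a^l+b^l)$ carries an extraneous $2^{l-1}$ and would force $B=2Mv$, $\sigma=2\sqrt{2}\,Mv$, which does not reproduce the stated constants. The paper instead exploits the nonnegativity of $y^2$ and $\mE y^2$ to write
\[
|y^2-\mE y^2|^l \le \bigl(\max(y^2,\mE y^2)\bigr)^l \le y^{2l}+(\mE y^2)^l,
\]
and then uses $\mE y^{2l}\le l!M^lv$, $(\mE y^2)^l\le (Mv)^l\le l!(Mv)^l$ together with $v\ge 1$ to arrive at $\frac12 l!(Mv)^{l-2}(2Mv)^2$, i.e.\ $B=Mv$ and $\sigma=2Mv$ on the nose.
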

{\bf Operator Inequalities:}
\begin{lemma}[Cordes Inequality \cite{furuta1989norm}]
\label{lem:cordeine}
  Let $A$ and $B$ be two positive bounded linear operators on a separable Hilbert space. Then
  \bea
  \|A^s B^s\| \leq \|AB\|^s, \quad\mbox{when } 0\leq s\leq 1.
  \eea
\end{lemma}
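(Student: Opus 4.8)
The plan is to reduce the operator-norm inequality to a spectral-radius inequality, where the decisive tool will be the L\"owner--Heinz operator monotonicity. Since $A,B$ are positive (hence self-adjoint), so are $A^s, B^s$, and I would first exploit the identity $\|T\|^2 = \|T^*T\|$ together with the commutation property of the spectral radius $r(XY)=r(YX)$. Writing $r(\cdot)$ for the spectral radius, with $T=A^sB^s$ one has $T^*T = B^s A^{2s} B^s$, and since $A^s B^{2s} A^s$ is positive its norm equals its spectral radius; hence
\[
\|A^s B^s\|^2 = \|A^s B^{2s} A^s\| = r(A^s B^{2s} A^s) = r(A^{2s} B^{2s}).
\]
Setting $P=A^2\geq 0$ and $Q=B^2\geq 0$, so that $A^{2s}=P^s$ and $B^{2s}=Q^s$, the claim $\|A^sB^s\|\le\|AB\|^s$ becomes, after squaring, the purely operator statement
\[
r(P^s Q^s) \leq r(PQ)^s, \qquad 0\le s\le 1.
\]

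To prove this reduced inequality I would first assume $P$ invertible (the general case follows by replacing $P$ with $P+\epsilon I$ and letting $\epsilon\downarrow 0$, using continuity of the spectral radius on these positive operators). Set $\lambda = r(PQ)$. Because $r(PQ)=r(P^{1/2}QP^{1/2})$ and $P^{1/2}QP^{1/2}$ is positive, $\lambda = \|P^{1/2}QP^{1/2}\|$, so $P^{1/2}QP^{1/2}\le \lambda I$; conjugating by $P^{-1/2}$ gives $Q \le \lambda P^{-1}$. The crux is now the L\"owner--Heinz inequality: the map $x\mapsto x^s$ is operator monotone for $0\le s\le 1$, so $Q\le\lambda P^{-1}$ yields $Q^s \le \lambda^s P^{-s}$. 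Conjugating again by $P^{s/2}$ gives $P^{s/2}Q^s P^{s/2}\le\lambda^s I$, whence $r(P^sQ^s)=r(P^{s/2}Q^sP^{s/2})=\|P^{s/2}Q^sP^{s/2}\|\le\lambda^s=r(PQ)^s$, as required.

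The main obstacle is establishing the L\"owner--Heinz inequality itself; everything else is bookkeeping with the spectral radius and conjugations. If one prefers a self-contained argument to citing \cite{furuta1989norm}, I would prove it from the integral representation $x^s = \frac{\sin(s\pi)}{\pi}\int_0^\infty \frac{x}{x+\mu}\,\mu^{s-1}\,d\mu$ for $s\in(0,1)$, noting that each integrand $x\mapsto \frac{x}{x+\mu}=1-\mu(x+\mu)^{-1}$ is operator monotone, which in turn follows from antitonicity of the resolvent, i.e.\ $0\le X\le Y \Rightarrow (Y+\mu)^{-1}\le(X+\mu)^{-1}$. Since the operator order is preserved under integration against the positive measure $\mu^{s-1}\,d\mu$, one obtains operator monotonicity of $x\mapsto x^s$, closing the argument.
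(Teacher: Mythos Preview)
Your argument is correct. The paper does not actually supply a proof of this lemma: it is stated with a citation to \cite{furuta1989norm} and used as a black box. Your route --- reducing to the spectral-radius inequality $r(P^sQ^s)\le r(PQ)^s$ via $\|T\|^2=\|TT^*\|$ and $r(XY)=r(YX)$, then applying L\"owner--Heinz monotonicity of $x\mapsto x^s$ after conjugating to turn $r(PQ)\le\lambda$ into $Q\le\lambda P^{-1}$ --- is the standard one and matches Furuta's proof in spirit. Two minor remarks: in your first display you silently pass from $T^*T=B^sA^{2s}B^s$ to $TT^*=A^sB^{2s}A^s$; both have the same norm so this is fine, but it reads as a typo. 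For the approximation step $P\mapsto P+\epsilon I$, note that the spectral radius is only upper semicontinuous in general, so it is cleaner to invoke continuity of the \emph{norm} via $r(P^sQ^s)=\|P^{s/2}Q^sP^{s/2}\|$ together with $\|(P+\epsilon I)^{s/2}-P^{s/2}\|\to 0$ (which follows e.g.\ from the very next lemma in the paper, $\|A^\zeta-B^\zeta\|\le\|A-B\|^\zeta$).
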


\begin{lemma}
  Let $A$ and $B$ be two positive bounded linear operators on a separable Hilbert space with $\max(\|A\|,\|B\|) \leq \kappa$ for some non-negative $\kappa.$
  Then for any $0<\zeta\leq 1,$
  \be
\|A^\zeta - B^\zeta\| \leq  \|A -B\|^{\zeta}.
\ee
\end{lemma}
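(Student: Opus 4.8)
The plan is to reduce the operator-norm bound to two Loewner-order inequalities and then exploit that $t \mapsto t^\zeta$ is operator monotone on $[0,\infty)$ for $0<\zeta\le 1$. Writing $c=\|A-B\|$, I first note that $A-B$ is self-adjoint with norm $c$, so $-cI \preceq A-B \preceq cI$; in particular $A \preceq B+cI$ and, symmetrically, $B \preceq A+cI$. Since $A^\zeta-B^\zeta$ is self-adjoint, the target $\|A^\zeta-B^\zeta\| \le c^\zeta$ is equivalent to the pair $A^\zeta-B^\zeta \preceq c^\zeta I$ and $B^\zeta-A^\zeta \preceq c^\zeta I$, and by the $A \leftrightarrow B$ symmetry it suffices to establish the first.

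Second, I would apply operator monotonicity of $t \mapsto t^\zeta$ (a classical fact, L\"{o}wner's theorem) to $A \preceq B+cI$, where both sides are positive operators, obtaining $A^\zeta \preceq (B+cI)^\zeta$. The boundedness hypothesis $\max(\|A\|,\|B\|)\le \kappa$ enters only to guarantee that all these fractional powers are well-defined bounded operators with spectra in a compact subset of $[0,\infty)$; the constant $\kappa$ itself does not appear in the final bound, which is dimensionally consistent since both sides scale like $c^\zeta$ under $A,B \mapsto \lambda A, \lambda B$.

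Third, the remaining step is the inequality $(B+cI)^\zeta \preceq B^\zeta + c^\zeta I$. Here the crucial point is that $cI$ commutes with $B$, so both sides are functions of $B$ alone through the spectral calculus; I can therefore reduce to the pointwise estimate $(t+c)^\zeta \le t^\zeta + c^\zeta$ for $t$ ranging over the spectrum of $B$. This subadditivity of $s \mapsto s^\zeta$ for $0<\zeta\le 1$ and nonnegative arguments is elementary (e.g.\ from $x^\zeta \ge x$ on $[0,1]$, applied to $x=t/(t+c)$ and $x=c/(t+c)$). Chaining $A^\zeta \preceq (B+cI)^\zeta \preceq B^\zeta + c^\zeta I$ gives $A^\zeta-B^\zeta \preceq c^\zeta I$, and symmetry finishes the proof.

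The main obstacle, and the reason I would not attempt a direct subadditivity $(X+Y)^\zeta \preceq X^\zeta+Y^\zeta$ for general positive $X,Y$, is that such a Loewner-order subadditivity is delicate and can fail; the whole argument hinges on keeping one summand a multiple of the identity so that commutativity lets me retreat to the trivial scalar inequality. An alternative route through the integral representation $A^\zeta = \frac{\sin \pi\zeta}{\pi}\int_0^\infty s^{\zeta-1} A(A+sI)^{-1}\,ds$ together with the resolvent identity $(B+sI)^{-1}-(A+sI)^{-1}=(A+sI)^{-1}(A-B)(B+sI)^{-1}$ and a split of the $s$-integral at $s \simeq \|A-B\|$ also yields a bound of the correct order, but only with a constant $C_\zeta>1$, so it would not recover the sharp constant $1$ claimed here. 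For that reason I expect the operator-monotonicity argument above to be the intended and cleanest path.
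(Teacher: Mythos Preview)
Your argument is correct. Each step is sound: the reduction to the one-sided Loewner inequality $A^\zeta - B^\zeta \preceq c^\zeta I$ via self-adjointness of $A^\zeta-B^\zeta$; the application of operator monotonicity of $t\mapsto t^\zeta$ (the L\"owner--Heinz inequality) to $A\preceq B+cI$; and the key observation that $cI$ commutes with $B$, so the bound $(B+cI)^\zeta \preceq B^\zeta + c^\zeta I$ reduces by the spectral theorem to the scalar subadditivity $(t+c)^\zeta \le t^\zeta + c^\zeta$. Your remark that the boundedness constant $\kappa$ plays no real role is also correct; for bounded positive operators the fractional powers are already well defined via functional calculus, and indeed the inequality holds without that hypothesis.

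The paper itself does not give a direct argument: it simply invokes Theorem~1 and Example~1 of Math\'e--Pereverzev (\emph{Moduli of continuity for operator valued functions}), a general framework bounding $\|f(A)-f(B)\|$ by a modulus of continuity of $f$, of which $f(t)=t^\zeta$ is a special instance. So your proof is genuinely more self-contained and elementary, proceeding through L\"owner--Heinz and a commuting-operator reduction rather than appealing to an external abstract result. What the paper's route buys is brevity and generality (one citation handles many $f$ at once); what yours buys is transparency and the sharp constant $1$ with no black box. Your closing paragraph correctly diagnoses why the integral-representation route would only yield a constant $C_\zeta>1$.
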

\begin{proof}
  Following from \cite[Theorem 1 and Example 1]{mathe2002moduli}, one can prove the desired result.
\end{proof}

\begin{lemma}\label{lemma:neumanns}
  Let $A$ and $B$ be strictly positive operators on a separable Hilbert space $\mcH$. If
  \bea
  \|A^{-1/2} (B-A) A^{-1/2}\| \leq c <1,
  \eea
  then
  \bea
   \|A^{1/2} B^{-1} A^{1/2}\| \leq {1 \over 1 -c}.
  \eea
\end{lemma}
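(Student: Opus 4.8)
The plan is to reduce the statement to the standard Neumann series estimate via a single similarity transformation. First I would introduce the operator $M = A^{-1/2} B A^{-1/2}$, which is well defined as a bounded operator because $A$ is strictly positive, so $A^{-1/2}$ exists and is bounded. Since $A^{-1/2} A A^{-1/2} = I$, the hypothesis can be rewritten as
$$\|M - I\| = \left\|A^{-1/2}(B-A)A^{-1/2}\right\| \leq c < 1.$$
Thus the whole content of the assumption is simply that $M$ is a small perturbation of the identity.

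The key algebraic observation is the inverse identity $A^{1/2} B^{-1} A^{1/2} = M^{-1}$. Indeed, $B$ is strictly positive so $B^{-1}$ exists and is bounded, and
$$\left(A^{-1/2} B A^{-1/2}\right)\left(A^{1/2} B^{-1} A^{1/2}\right) = A^{-1/2} B B^{-1} A^{1/2} = I,$$
with the reversed product giving $I$ as well; hence $A^{1/2} B^{-1} A^{1/2}$ is genuinely the inverse of $M$. Consequently it suffices to estimate $\|M^{-1}\|$, and the target quantity $\|A^{1/2} B^{-1} A^{1/2}\|$ is exactly $\|M^{-1}\|$.

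The remaining step is the textbook Neumann argument. Because $\|I - M\| \leq c < 1$, the operator $M = I - (I-M)$ is invertible and its inverse is given by the convergent series $M^{-1} = \sum_{k=0}^{\infty} (I-M)^k$. Taking norms and applying submultiplicativity gives
$$\|M^{-1}\| \leq \sum_{k=0}^{\infty} \|I - M\|^k \leq \sum_{k=0}^{\infty} c^k = \frac{1}{1-c},$$
which is precisely the claimed bound.

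I do not expect any genuine obstacle here; this is a routine lemma. The only points that warrant a line of care are the existence of $A^{-1/2}$ and $B^{-1}$ as bounded operators, both guaranteed by strict positivity on the separable Hilbert space $\mcH$, and the verification of the inverse identity $A^{1/2} B^{-1} A^{1/2} = M^{-1}$. Everything beyond that is the standard geometric-series estimate for operators close to the identity.
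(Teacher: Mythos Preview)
Your proposal is correct and essentially identical to the paper's proof: the paper sets $L = A^{-1/2}(A-B)A^{-1/2}$, observes $A^{1/2}B^{-1}A^{1/2} = (I-L)^{-1}$, and applies the Neumann series bound $\|(I-L)^{-1}\| \leq 1/(1-c)$, which is exactly your argument with $M = I - L$.
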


\begin{lemma}\label{lemma:i-p}(\cite[Proposition 3]{rudi2015less})
  Let $\mcH, \mcK$ be two separable Hilbert spaces, $S: \mcK \to \mcH$ a bounded linear operator and $P:\mcH \to \mcH$ a projection operator with its range as $\overline{\mathrm{range}(S)} $. Then for any bounded linear operator $L : \mcH \to \mcH$ and any $\lambda > 0$ we have
  \bea
  \|(I-P)L\|^2 \leq \lambda \|(SS^*+ \lambda I)^{-1/2} L L^* (SS^*+ \lambda I)^{-1/2}\|.
  \eea
\end{lemma}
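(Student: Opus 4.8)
The plan is to reduce the operator-norm bound to a single operator domination, after which everything is a one-line consequence. First I would identify the subspace onto which $I-P$ projects. Since $P$ is the orthogonal projection onto $\overline{\mathrm{range}(S)}$, the complementary projection $I-P$ projects onto $\overline{\mathrm{range}(S)}^{\perp} = \ker(S^*)$. Because $SS^* h = 0$ is equivalent to $\|S^* h\|=0$, one has $\ker(S^*) = \ker(SS^*)$, so $I-P$ is precisely the spectral projection of the positive self-adjoint operator $SS^*$ associated with the eigenvalue $0$.

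The key step is the operator inequality $I - P \preceq \lambda (SS^* + \lambda I)^{-1}$. I would obtain it from the spectral theorem applied to $SS^* = \int \mu\, dE_\mu$: the scalar function $\mu \mapsto \lambda/(\mu+\lambda)$ equals $1$ at $\mu = 0$ and lies in $(0,1]$ for every $\mu \geq 0$, hence dominates the indicator $\mathbf{1}_{\{0\}}(\mu)$ that represents $I-P$ in the functional calculus. Integrating this scalar inequality against $dE_\mu$ yields the claimed domination. Equivalently, one may split $\mcH = \mathrm{range}(P) \oplus \ker(SS^*)$: on the second summand both operators act as the identity, while on the first $I-P$ vanishes and $\lambda(SS^*+\lambda I)^{-1}$ is positive; since $SS^*$ commutes with $P$, both operators leave this decomposition invariant, and the inequality holds summand-wise.

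With the domination in hand the conclusion is routine. As $I-P$ is an orthogonal projection, $\|(I-P)h\|^2 = \langle (I-P)h, h\rangle$ for every $h \in \mcH$, so the domination gives $\|(I-P)h\|^2 \leq \lambda \langle (SS^*+\lambda I)^{-1}h, h\rangle = \lambda\|(SS^*+\lambda I)^{-1/2}h\|^2$. Taking $h = Lu$ and passing to the supremum over $\|u\| \leq 1$ yields $\|(I-P)L\|^2 \leq \lambda\|(SS^*+\lambda I)^{-1/2}L\|^2$, and the identity $\|B\|^2 = \|BB^*\|$ with $B = (SS^*+\lambda I)^{-1/2}L$ (using that $(SS^*+\lambda I)^{-1/2}$ is self-adjoint) rewrites the right-hand side as $\lambda\|(SS^*+\lambda I)^{-1/2}LL^*(SS^*+\lambda I)^{-1/2}\|$, which is exactly the asserted bound.

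I expect the only genuinely delicate point to be the justification of the domination $I-P \preceq \lambda(SS^*+\lambda I)^{-1}$ in the infinite-dimensional setting, where one must argue through the spectral measure of $SS^*$ together with the identification $\ker(S^*) = \ker(SS^*)$, rather than a naive eigenbasis; every step downstream of that inequality is immediate.
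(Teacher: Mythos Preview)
Your proof is correct and follows the same overall architecture as the paper: establish the operator domination $I-P \preceq \lambda(SS^*+\lambda I)^{-1}$, then read off the bound via $\|(I-P)L\|^2 = \sup_{\|u\|=1}\langle (I-P)Lu, Lu\rangle$ and $\|B\|^2=\|BB^*\|$. The only genuine difference is in how the domination is obtained. You identify $I-P$ with the spectral projection of $SS^*$ at $0$ and compare the scalar functions $\mathbf{1}_{\{0\}}(\mu)\le \lambda/(\mu+\lambda)$ through the functional calculus. The paper instead argues algebraically: since $PS=S$ one has $S^*=S^*P$, so for every $f$,
\[
\langle S(S^*S+\lambda I)^{-1}S^*f,f\rangle=\|(S^*S+\lambda I)^{-1/2}S^*Pf\|^2\le\|Pf\|^2=\langle Pf,f\rangle,
\]
giving $S(S^*S+\lambda I)^{-1}S^*\preceq P$, and then the push-through identity $I-S(S^*S+\lambda I)^{-1}S^*=\lambda(SS^*+\lambda I)^{-1}$ yields the domination. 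The paper's route is slightly more elementary in that it never invokes the spectral measure (which you flagged as the delicate point); your route makes the inequality conceptually transparent as a pointwise bound. Downstream of the domination the two arguments coincide.
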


\begin{lemma}\label{lem:paroper}(\cite[Proposition 6]{rudi2015less})
  Let $\mcH,\mcK$ be two separable Hilbert spaces, let $A : \mcH \to \mcH$ be a positive
linear operator, $V: \mcK \to \mcH $ a partial isometry and $B: \mcK \to \mcK$ a bounded operator. Then
for all $0\leq r, s\leq 1/2, $ $\|A^r V B V^* A^s \| \leq \|(V^* A V)^r B (V^* A V)^s \|.$
\end{lemma}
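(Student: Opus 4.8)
The plan is to reduce everything to the single positive comparison operator $M := V^* A V$ on $\mcK$ (positive because $A$ is), and to show that each of the two ``legs'' $A^r V$ and $V^* A^s$ factors through $M^r$, respectively $M^s$, by a contraction. Once such factorizations $A^r V = W_1 M^r$ and $V^* A^s = M^s W_2$ with $\|W_1\|,\|W_2\|\le 1$ are in hand, we simply write $A^r V B V^* A^s = W_1 \, (M^r B M^s) \, W_2$ and use submultiplicativity of the operator norm to conclude $\|A^r V B V^* A^s\| \le \|W_1\|\,\|M^r B M^s\|\,\|W_2\| \le \|M^r B M^s\|$.

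The engine of the factorizations is an operator inequality. First I would prove $V^* A^{2r} V \preceq M^{2r}$ (and the same with $r$ replaced by $s$), which is exactly where the hypothesis $r,s\le 1/2$ enters. Since $0\le 2r\le 1$, the function $t\mapsto t^{2r}$ is operator concave on $[0,\infty)$ and vanishes at $0$, and a partial isometry is a contraction ($\|V\|\le 1$); hence Jensen's operator inequality for contractions (Hansen--Pedersen) gives $V^* A^{2r} V \preceq (V^* A V)^{2r} = M^{2r}$. From $(A^r V)^*(A^r V) = V^* A^{2r} V \preceq M^{2r} = (M^r)^* M^r$ I read off $\|A^r V k\| \le \|M^r k\|$ for all $k\in\mcK$, and Douglas' majorization lemma produces a contraction $W_1:\mcK\to\mcH$ with $A^r V = W_1 M^r$. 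Applying the same reasoning to the adjoint of $V^* A^s$, starting from $(V^* A^s)(V^* A^s)^* = V^* A^{2s} V \preceq M^{2s}$, yields a contraction $W_2:\mcH\to\mcK$ with $V^* A^s = M^s W_2$, which closes the argument as above.

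I expect the main obstacle to be the first step: the comparison $V^* A^{2r} V \preceq (V^* A V)^{2r}$ is genuinely special to the range $2r\le 1$ (it fails for general exponents), so one must correctly invoke operator concavity via Jensen's inequality for \emph{contractions} rather than merely isometries, and verify the side condition $f(0)\ge 0$. A secondary care point is that $M$ need not be invertible, so one cannot extract the factorizations by formally inserting $M^{-r}$ or $M^{-s}$; they must be obtained from the norm-majorization through Douglas' lemma, by defining the contraction on $\overline{\mathrm{range}(M^r)}$ (respectively $\overline{\mathrm{range}(M^s)}$) and extending it by zero on the orthogonal complement.
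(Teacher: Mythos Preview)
Your argument is correct. The paper does not supply its own proof of this lemma; it is simply quoted from \cite[Proposition 6]{rudi2015less} and used as a black box, so there is nothing in the paper to compare against directly.

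For completeness: your route---Hansen--Pedersen's Jensen inequality for contractions applied to the operator-concave function $t\mapsto t^{2r}$ (with $2r\le 1$ and $f(0)=0$) to obtain $V^*A^{2r}V \preceq (V^*AV)^{2r}$, followed by Douglas' majorization lemma to extract the contractive factorizations $A^rV = W_1 M^r$ and $V^*A^s = M^s W_2$---is exactly the kind of argument used in the source reference, which proceeds through the equivalent machinery of operator-monotone functions and Heinz-type inequalities. Your identification of the restriction $r,s\le 1/2$ as precisely the range where $t\mapsto t^{2r}$ is operator concave is correct and is the heart of the matter. The care you take in handling possible non-invertibility of $M$ by invoking Douglas' lemma rather than writing a formal inverse is also the right move.
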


\begin{lemma}
  \label{lem:initialerror}
  Let $L$ be a compact, positive operator on a separable Hilbert space $H$. Assume that $\eta_1 \|L\| \leq 1$. Then for $t\in \mN$ and any non-negative integer $k  \leq t - 1,$
and $0< \zeta \leq 1,$
  \be\label{initialerror_interm}
  \| \Pi_{k+1}^t(L) (L+\lambda)^{\zeta}\| \leq \left( \zeta \over \mathrm{e} \sum_{i=k+1}^t \eta_i \right)^{\zeta} + \lambda^{\zeta}.
  \ee
\end{lemma}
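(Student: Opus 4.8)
The plan is to reduce the operator-norm bound to a scalar estimate via functional calculus, and then to solve a one-variable optimization. Since $\Pi_{k+1}^t(L) = \prod_{i=k+1}^t (I - \eta_i L)$ and $(L+\lambda)^{\zeta}$ are both functions of the same self-adjoint positive operator $L$, they commute, and their product is the operator $g(L)$ defined by functional calculus from
\bea
g(\sigma) = \prod_{i=k+1}^t (1 - \eta_i \sigma)\,(\sigma + \lambda)^{\zeta}.
\eea
Because $L$ is self-adjoint, $\|g(L)\| = \sup_{\sigma \in \mathrm{spec}(L)} |g(\sigma)| \leq \sup_{\sigma \in [0,\|L\|]} |g(\sigma)|$, so it suffices to bound $g(\sigma)$ uniformly on $[0,\|L\|]$.

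First I would verify that each factor of the product lies in $[0,1]$. Since $\{\eta_i\}$ is non-increasing, $\eta_i \leq \eta_1$ for every $i \geq k+1$, whence $\eta_i \sigma \leq \eta_1 \|L\| \leq 1$ for all $\sigma \in [0,\|L\|]$; thus $0 \leq 1 - \eta_i \sigma \leq 1$ and $0 \leq \prod_{i=k+1}^t (1 - \eta_i \sigma) \leq 1$. In particular $g(\sigma) \geq 0$, so we may drop absolute values. Next, using the subadditivity of $x \mapsto x^{\zeta}$ for $0 < \zeta \leq 1$, namely $(\sigma + \lambda)^{\zeta} \leq \sigma^{\zeta} + \lambda^{\zeta}$, I would split
\bea
g(\sigma) \leq \prod_{i=k+1}^t (1 - \eta_i \sigma)\,\sigma^{\zeta} + \prod_{i=k+1}^t (1 - \eta_i \sigma)\,\lambda^{\zeta}.
\eea
The second summand is at most $\lambda^{\zeta}$, since the product does not exceed $1$; this already accounts for the $\lambda^{\zeta}$ term on the right-hand side of \eref{initialerror_interm}.

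It remains to control the first summand. Here I would use $1 - x \leq \mathrm{e}^{-x}$ to obtain $\prod_{i=k+1}^t (1 - \eta_i \sigma) \leq \exp(-s\sigma)$ with $s = \sum_{i=k+1}^t \eta_i$, so the first summand is dominated by $h(\sigma) := \mathrm{e}^{-s\sigma}\sigma^{\zeta}$. Maximizing $h$ over $\sigma > 0$ (its logarithmic derivative $-s + \zeta/\sigma$ vanishes at $\sigma^* = \zeta/s$) gives $\sup_{\sigma \geq 0} h(\sigma) = \mathrm{e}^{-\zeta}(\zeta/s)^{\zeta} = \left(\zeta/(\mathrm{e} s)\right)^{\zeta}$, which is precisely the first term on the right-hand side of \eref{initialerror_interm}. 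Adding the two bounds yields the claim.

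There is no serious obstacle in this argument; the only points requiring attention are the non-negativity of the factors $1 - \eta_i \sigma$ (needed both to remove the absolute value and to apply $1 - x \leq \mathrm{e}^{-x}$ to the product without sign issues) and the subadditivity inequality for $x^{\zeta}$, both of which are elementary. The scalar optimization is routine and produces the constant $\mathrm{e}$ sharply.
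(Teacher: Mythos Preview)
Your proposal is correct and follows essentially the same route as the paper's proof: reduce to a scalar supremum via the spectral theorem, split $(\sigma+\lambda)^{\zeta}\leq\sigma^{\zeta}+\lambda^{\zeta}$, bound the product by $\exp\!\left(-\sigma\sum_{i=k+1}^t\eta_i\right)$ via $1-x\leq\mathrm{e}^{-x}$, and then optimize $\mathrm{e}^{-s\sigma}\sigma^{\zeta}$. The only cosmetic difference is that the paper, using compactness, takes the supremum over the eigenvalues of $L$ rather than over the full interval $[0,\|L\|]$; your functional-calculus formulation is slightly more general but otherwise identical.
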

{\bf Basic Estimates:}
\begin{lemma}\label{lem:estimate1}
  Let $\theta\in [0,1[$, and $t\in\mN$. Then
 $$ {t^{1-\theta} \over 2} \leq \sum_{k=1}^{t} k^{-\theta} \leq
   {t^{1 - \theta} \over 1-\theta}.
 $$
\end{lemma}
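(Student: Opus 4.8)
The plan is to estimate the sum $\sum_{k=1}^t k^{-\theta}$ by comparing it with the integral $\int x^{-\theta}\,dx$, exploiting that the map $x\mapsto x^{-\theta}$ is nonincreasing on $[1,\infty[$ for $\theta\in[0,1[$. The two bounds are handled independently, and both are elementary.

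For the upper bound, I would bound each term $k^{-\theta}$ with $k\ge 2$ by the integral of $x^{-\theta}$ over the preceding unit interval: since $x^{-\theta}$ is nonincreasing, $k^{-\theta}\le \int_{k-1}^{k} x^{-\theta}\,dx$. Summing this from $k=2$ to $t$ and adding the $k=1$ term (which equals $1$) gives
$$\sum_{k=1}^t k^{-\theta}\le 1+\int_1^t x^{-\theta}\,dx = 1+\frac{t^{1-\theta}-1}{1-\theta}.$$
It then remains only to verify that the right-hand side is at most $\frac{t^{1-\theta}}{1-\theta}$, which reduces to the inequality $1\le \frac{1}{1-\theta}$, i.e. to $\theta\ge 0$; this holds by hypothesis, with equality at $\theta=0$ (where the whole chain collapses to $\sum_{k=1}^t 1=t$).

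For the lower bound, monotonicity of $k\mapsto k^{-\theta}$ gives $k^{-\theta}\ge t^{-\theta}$ for every $1\le k\le t$ (using $\theta\ge 0$), whence $\sum_{k=1}^t k^{-\theta}\ge t\cdot t^{-\theta}=t^{1-\theta}$. This is in fact stronger than the claimed $t^{1-\theta}/2$, so the stated factor $1/2$ leaves ample slack; I would simply weaken $t^{1-\theta}$ to $t^{1-\theta}/2$ to match the statement.

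I do not expect a genuine obstacle here, as this is a warm-up estimate. The only points requiring mild care are pinning down the additive constant in the upper-bound integral comparison so that it absorbs cleanly into $t^{1-\theta}/(1-\theta)$, and checking the boundary case $\theta=0$ (either separately, or by observing that every inequality above remains valid there upon direct substitution).
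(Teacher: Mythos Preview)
Your proposal is correct. The upper-bound argument is essentially identical to the paper's: both compare $k^{-\theta}$ for $k\ge 2$ with $\int_{k-1}^k u^{-\theta}\,du$, then absorb the additive constant into $t^{1-\theta}/(1-\theta)$.

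For the lower bound the approaches differ. The paper mirrors its upper-bound technique, writing $k^{-\theta}\ge \int_k^{k+1} u^{-\theta}\,du$, summing to obtain $\frac{(t+1)^{1-\theta}-1}{1-\theta}$, and then invoking the mean value theorem to get $(t+1)^{1-\theta}-1\ge (1-\theta)\,t\,(t+1)^{-\theta}\ge (1-\theta)\,t^{1-\theta}/2$. Your argument is more direct: you simply use $k^{-\theta}\ge t^{-\theta}$ for each $k\le t$ and sum, which already yields $\sum_{k=1}^t k^{-\theta}\ge t^{1-\theta}$, twice as strong as the stated bound. Your route is shorter and loses no constant; the paper's route has the aesthetic virtue of symmetry with the upper bound but nothing more is gained here.
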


\begin{lemma}
  \label{lem:estimate1a}
  Let $\theta \in \mR$ and $t \in \mN $.
  Then $$ \sum_{k=1}^{t} k^{-\theta} \leq t^{\max(1-\theta,0)} (1+\log t).
 $$
\end{lemma}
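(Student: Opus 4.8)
The plan is to reduce everything to the elementary harmonic-sum estimate and then separate the two regimes $\theta \leq 1$ and $\theta > 1$ that are dictated by the quantity $\max(1-\theta,0)$ appearing on the right-hand side. First I would establish the harmonic bound $\sum_{k=1}^{t} k^{-1} \leq 1 + \log t$ by integral comparison: since $x \mapsto 1/x$ is decreasing, one has $1/k \leq \int_{k-1}^{k} dx/x$ for each $k \geq 2$, so summing and adding the $k=1$ term gives $\sum_{k=1}^{t} k^{-1} \leq 1 + \int_{1}^{t} dx/x = 1 + \log t$.

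For the case $\theta \leq 1$, note that $\max(1-\theta,0) = 1-\theta \geq 0$. The key step is the factorization $k^{-\theta} = k^{1-\theta}\, k^{-1}$, combined with $k \leq t$ and the nonnegativity of the exponent $1-\theta$, which yields $k^{1-\theta} \leq t^{1-\theta}$ and hence $k^{-\theta} \leq t^{1-\theta} k^{-1}$ for every $1 \leq k \leq t$. Summing over $k$ and inserting the harmonic bound produces $\sum_{k=1}^{t} k^{-\theta} \leq t^{1-\theta}(1+\log t)$, which is exactly the claimed inequality. For the remaining case $\theta > 1$, the exponent on the right degenerates to $\max(1-\theta,0) = 0$, so $t^{\max(1-\theta,0)} = 1$ and the target reduces to $\sum_{k=1}^{t} k^{-\theta} \leq 1 + \log t$; here I would simply use $k^{-\theta} \leq k^{-1}$, valid for $k \geq 1$ and $\theta \geq 1$, and invoke the harmonic bound once more.

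I do not anticipate a genuine obstacle, as this is an elementary estimate supporting the later analysis. The only point requiring care is the case split around $\theta = 1$ and the bookkeeping on the sign of $1-\theta$: it is precisely the nonnegativity of $1-\theta$ in the regime $\theta \leq 1$ that lets the single factorization $k^{-\theta} = k^{1-\theta} k^{-1}$ work uniformly and degenerate correctly at the boundary, while the complementary regime $\theta > 1$ is handled by the trivial bound $k^{-\theta} \leq k^{-1}$.
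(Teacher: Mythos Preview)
Your proof is correct and follows essentially the same approach as the paper: factor $k^{-\theta} = k^{1-\theta}\,k^{-1}$, bound the first factor, and then apply the harmonic-sum estimate $\sum_{k=1}^{t} k^{-1} \leq 1+\log t$. The only cosmetic difference is that the paper compresses your two cases into the single inequality $k^{1-\theta} \leq t^{\max(1-\theta,0)}$ (which holds for all $\theta\in\mR$ and $1\leq k\leq t$), whereas you spell out the split at $\theta=1$ explicitly.
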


\begin{lemma}\label{lem:estimate2}
Let  $q \in \mR$ and $t\in\mN$. Then
  \bea  \sum_{k=1}^{t-1} {1 \over t-k} k^{-q}
 \leq 2 t^{-\min(q,1)} (1+\log t).
  \eea
\end{lemma}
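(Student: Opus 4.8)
The plan is to separate the argument into the two regimes $q \le 1$ and $q > 1$, the dividing point being exactly where $\min(q,1)$ switches. The heart of the matter is a pointwise inequality valid when $q \le 1$, which replaces each summand by a symmetric partial-fraction expression and thereby produces the sharp constant $2$ directly, avoiding the factor-of-$4$ loss that a naive splitting of the range $[1,t-1]$ at $t/2$ would incur.

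First I would treat the case $q \le 1$. Here $1-q \ge 0$, so from $k \le t$ we get $k^{1-q} \le t^{1-q}$, equivalently $k^{-q} \le t^{1-q} k^{-1}$. This yields the pointwise bound
\[
\frac{k^{-q}}{t-k} \le t^{-q}\left(\frac 1k + \frac{1}{t-k}\right), \qquad 1 \le k \le t-1 ;
\]
indeed, multiplying the claimed inequality by $(t-k)>0$ reduces it to $k^{-q} \le t^{-q}\bigl(\tfrac{t-k}{k}+1\bigr) = t^{1-q}k^{-1}$, which is exactly the inequality just noted. Summing over $k$, and observing that the reindexing $j = t-k$ gives $\sum_{k=1}^{t-1}\tfrac{1}{t-k} = \sum_{k=1}^{t-1}\tfrac 1k$, I obtain
\[
\sum_{k=1}^{t-1}\frac{k^{-q}}{t-k} \le 2\,t^{-q}\sum_{k=1}^{t-1}\frac 1k \le 2\,t^{-q}(1+\log t),
\]
where the last step is Lemma \ref{lem:estimate1a} with $\theta=1$ (applied at $t-1$ and using $\log(t-1)\le\log t$). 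Since $\min(q,1)=q$ for $q\le 1$, this is the claimed bound.

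For the remaining regime $q>1$, where $\min(q,1)=1$, I would simply note that $k^{-q}\le k^{-1}$ for $k\ge 1$, so that
\[
\sum_{k=1}^{t-1}\frac{k^{-q}}{t-k} \le \sum_{k=1}^{t-1}\frac{k^{-1}}{t-k},
\]
and then invoke the case $q=1$ already established above to bound the right-hand side by $2\,t^{-1}(1+\log t)$. Combining the two regimes yields $\sum_{k=1}^{t-1}k^{-q}(t-k)^{-1}\le 2\,t^{-\min(q,1)}(1+\log t)$ for all $q\in\mR$.

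The only genuine subtlety — and where I would concentrate attention — is securing the constant $2$ rather than a larger one: the pointwise partial-fraction bound is precisely what makes this work, and it is worth verifying that it holds for every admissible $k$. Note in particular that for $q<0$ one still has $1-q>0$, so $k^{1-q}\le t^{1-q}$ and the same computation applies; hence no separate treatment of negative $q$ is needed. Everything else is routine estimation resting on Lemma \ref{lem:estimate1a}.
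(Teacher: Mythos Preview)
Your proof is correct and follows essentially the same approach as the paper. The paper handles both regimes at once by writing $k^{-q}=k^{1-q}/k$ and bounding $k^{1-q}\le t^{\max(1-q,0)}$, then applying the identical partial-fraction identity $\tfrac{1}{(t-k)k}=\tfrac{1}{t}\bigl(\tfrac{1}{t-k}+\tfrac{1}{k}\bigr)$ and harmonic-sum estimate; your explicit case split $q\le 1$ versus $q>1$ is just an unpacking of that $\max$.
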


\subsection{Error Decomposition}\label{subsec:errDe}
The key to the proof of Theorem \ref{thm:simp} is an error decomposition.
To state this error decomposition, we need to introduce two auxiliary sequences.
 We first introduce the \emph{projected iteration} (associated with $\mbtx$), defined by $h_1=0$ and
\be\label{eq:alg3}
h_{t+1}=h_t - \eta_t \Ptx(\TX h_t - \TX \FH)  , \qquad t=1, \ldots, T. \ee
The above iterated procedure can not be implemented in practice, as $\FH$ is unknown. Replacing $\FH(x_i)$ by $y_i$, we derive the (projection) {\it sample iteration}, i.e., $g_1 = 0$ and
\be\label{eq:alg2}
g_{t+1}=g_t - \eta_t \Ptx (\TX g_t - \SX^* {\bf y}), \qquad t=1, \ldots, T. \ee
Clearly, $g_t$ is  a $\HK$-valued random variable depending on $\bf z$.

Now we can state our error decomposition as follows.
\begin{pro}
  We have
  \be\label{eq:errorDecompos}
  \mE_{\J}[\mcE(f_t)] - \mcE(\FH) \leq \left(\|h_t - \FH\|_{\rho}  + \|h_t - g_{t}\|_{\rho}\right)^2 + \mE_{\J}\|g_t - f_{t}\|_{\rho}^2.
  \ee
\end{pro}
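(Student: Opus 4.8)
The plan is to establish the error decomposition \eref{eq:errorDecompos} by introducing the two auxiliary sequences $h_t$ (the projected iteration) and $g_t$ (the sample iteration) as intermediaries between the target $\FH$ and the actual NySGM iterate $f_t$, and then applying a triangle-inequality argument in the $\LR$-norm. The starting observation is that since $\FH$ is a minimizer of $\mcE$ over $\HK$, the excess risk admits the exact identity
\be
\mcE(f) - \mcE(\FH) = \|f - \FH\|_{\rho}^2 = \|\IK(f-\FH)\|_{\rho}^2
\ee
for any $f \in \HK$. This is the standard bias-variance identity for the square loss, following from expanding $\mcE(f) = \int (f(x)-y)^2 d\rho$ and using that the cross term vanishes because $\FH$ satisfies the normalized embedding equation \eref{eq:embeddingEqua}, i.e. $\TK\FH = \IK^* f_\rho$. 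Taking $\mE_{\J}$ and applying this to $f = f_t$ gives $\mE_{\J}[\mcE(f_t)] - \mcE(\FH) = \mE_{\J}\|f_t - \FH\|_{\rho}^2$.

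Next I would split the residual $f_t - \FH$ through the two intermediaries as
\be
f_t - \FH = (h_t - \FH) + (h_t - g_t)(-1)\cdot(-1) + (g_t - f_t),
\ee
more precisely writing $\|f_t - \FH\|_\rho \leq \|h_t - \FH\|_\rho + \|h_t - g_t\|_\rho + \|g_t - f_t\|_\rho$ by the triangle inequality in $\LR$. The subtlety is that $h_t$ and $g_t$ are \emph{deterministic} given the sample $\bf z$ (they do not depend on the mini-batch indices $\J$), whereas $f_t$ depends on $\J$ through the stochastic gradient estimates. So after taking the expectation $\mE_{\J}$, I would group the first two deterministic terms together and keep the $\J$-dependent term $\|g_t - f_t\|_\rho$ separate. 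The cleanest route is to write
\be
\|f_t - \FH\|_\rho^2 \leq \left( \|h_t - \FH\|_\rho + \|h_t - g_t\|_\rho + \|g_t - f_t\|_\rho \right)^2,
\ee
then use the inequality $(a+b)^2 \leq (a+b)^2$ combined with $\mE_{\J}[\|g_t - f_t\|_\rho^2] \geq (\mE_{\J}\|g_t-f_t\|_\rho)^2$ by Jensen. The target form has $(\|h_t-\FH\|_\rho + \|h_t-g_t\|_\rho)^2$ plus $\mE_{\J}\|g_t-f_t\|_\rho^2$, so I must absorb the cross terms involving $\|g_t-f_t\|_\rho$ into this structure, most likely by using that $\mE_{\J}[f_t] = g_t$ (a key martingale/unbiasedness property), which makes the cross term $\mE_{\J}\la g_t - f_t, \cdot\ra$ vanish and leaves exactly a sum of the deterministic squared term and the variance term $\mE_{\J}\|g_t - f_t\|_\rho^2$.

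The main obstacle I anticipate is verifying the unbiasedness relation $\mE_{\J_t}[\,\cdot\,] $ that connects one step of the NySGM recursion \eref{eq:alg1} to one step of the sample iteration \eref{eq:alg2}. Since $j_i$ are drawn uniformly from $[n]$, the conditional expectation of the stochastic gradient $\frac{1}{b}\sum_{i} (f_t(x_{j_i}) - y_{j_i})\Ptx K_{x_{j_i}}$ over the mini-batch equals $\Ptx(\TX f_t - \SX^* {\bf y}/\,\text{(normalization)})$, matching the deterministic gradient driving $g_t$. Establishing $\mE_{\J}[f_t \mid {\bf z}] = g_t$ requires an inductive argument over $t$, carefully handling that $f_t$ itself depends on earlier mini-batch draws and using the tower property together with the conditional independence of the $j_l$ stated in the footnote. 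Once this unbiasedness is in hand, the decomposition \eref{eq:errorDecompos} follows by expanding the square, applying $\mE_{\J}$, and noting that the deterministic part $h_t - \FH + h_t - g_t$ is uncorrelated (in the $\J$-expectation) with the zero-mean fluctuation $g_t - f_t$, so the cross terms drop out cleanly.
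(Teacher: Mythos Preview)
Your proposal is correct and follows essentially the same approach as the paper: reduce to $\mE_{\J}\|f_t-\FH\|_\rho^2$ via the excess-risk identity, split at $g_t$, use the inductively-proved unbiasedness $\mE_{\J}[f_t]=g_t$ to kill the cross term $\mE_{\J}\la f_t-g_t,\,g_t-\FH\ra_\rho$, and then bound $\|g_t-\FH\|_\rho$ by the triangle inequality through $h_t$. The only cosmetic wrinkle is that your ``deterministic part'' should read $g_t-\FH$ (i.e.\ $(h_t-\FH)-(h_t-g_t)$), not $h_t-\FH+h_t-g_t$; once you square and take $\mE_{\J}$ you get $\|g_t-\FH\|_\rho^2+\mE_{\J}\|f_t-g_t\|_\rho^2$ and then apply the triangle inequality to the first term.
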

\begin{proof}
Note that for any $f\in \HK,$ we have (e.g. \cite{bauer2007regularization}),
\bea
\mcE(f) - \inf_{\HK} \mcE = \| f - \FH\|_{\rho}^2.
\eea
Thus,
\bea
\mE_{\J}[\mcE(f_t)] - \mcE(\FH) =  \mE_{\J}[\| f -  \FH\|_{\rho}^2].
\eea
Using an inducted argument, given the sample $\bf z$, it is easy to prove that
\be\label{eq:unbias_index}
\mE_{\J}[f_t] = g_t.
\ee Indeed, taking the expectation with respect to $\J_t$ on both sides of \eref{eq:alg1}, and noting that
$f_{t}$ is depending only on $\J_1,\cdots,\J_{t-1}$ (given any $\mbtx$ and $\bf z$), one has
\bea
\mE_{\J_t}[f_{t+1}] = f_t - \eta_t {1 \over n}\sum_{i=1}^n( f_t (x_{i}) - y_{i}) \Ptx (K_{x_{i}}), \eea
and thus,
\bea
\mE_{\J}[f_{t+1}] = \mE_{\J}[f_t] - \eta_t {1 \over n}\sum_{i=1}^n(\mE_{\J}[f_t] (x_{i}) - y_{i}) \Ptx (K_{x_{i}}) , \qquad t=1, \ldots, T, \eea
which satisfies the iterative relationship given in \eref{eq:alg2}.
 Note that
 \bea
 &&\mE_{\J}[\|f_{t} - \FH\|_{\rho}^2] = \mE_{\J}[\|f_{t} - g_t + g_t - \FH\|_{\rho}^2] \\
 && = \mE_{\J}[\|f_t - g_t\|_{\rho}^2 + \|g_t - \FH\|_{\rho}^2] + 2 \mE_{\J}\la f_t - g_t, g_t - \FH \ra.
 \eea
 Using \eref{eq:unbias_index} to the above equality, and noting that $g_t - \FH$ is depending only on $\mbtx$ and $\bf z$,
 we get
 \bea
 \mE_{\J}[\|f_{t} - \FH\|_{\rho}^2]
 = \mE_{\J}[\|f_t - g_t\|_{\rho}^2 + \|g_t - \FH\|_{\rho}^2].
 \eea
 Thus, the proof can be finished by applying
 \bea
 \|g_t - \FH\|_{\rho} = \|g_t - h_t + h_t -  \FH\|_{\rho}
 \leq \|g_t - h_t\|_{\rho} + \|h_t -  \FH\|_{\rho}.
 \eea
\end{proof}
The above error decomposition is similar as that for the standard SGM in \cite{lin2016optimal}.
The error decomposition \eref{eq:errorDecompos} is composed of three terms.
We refer to the term  $\|h_t - \FH\|_{\rho}$ as the {\it projected bias},
 the term $\|g_t - h_t\|_{\rho}$ as the \emph{sample variance}, and $\|f_t - g_t\|_{\rho}$ as the \emph{computational variance} in this paper.
In the next three subsections, we will focus on estimating these three terms.

\subsection{Projected Bias}\label{subsec:bias}
This subsection is devoted to the estimation of the projected bias, i.e., $\|h_t - \FH\|_{\rho}$.
\begin{pro}\label{pro:bias1}
Let $\eta_t \kappa^2 \leq 1$ for all $t\in \mN$ and $\{h_t\}_t$ be given by \eref{eq:alg3}. Under Assumption \ref{as:regularity}, if $\zeta \leq 1/2,$ then for all $t\in \mN,$
  \begin{multline*}
\|h_{t+1} - \FH \|_{\rho}
\leq \|\TKL^{1/2} \TXL^{-1/2}\|^{2\zeta+1} \left(\left( {1 \over \mathrm{e}\sum_{i=1}^{t} \eta_i} \right)^{\zeta+1/2} + \lambda^{\zeta+1/2}\right) R \\
+ \left(\|\TKL^{1/2}\TXL^{-1/2}\|\|\TXL^{1/2} \TKL^{-1/2}\| \left(1+\lambda\sum_{k=1}^t \eta_k\right)  +1\right) \|(I-\Ptx)\TKL^{1/2}\|^{2\zeta+1} R.
\end{multline*}
\end{pro}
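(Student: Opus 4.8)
The plan is to unroll the linear recursion \eref{eq:alg3} into a closed form, split the resulting error into a ``trained'' part and two projection-error parts, and bound each separately with the operator machinery of Subsection \ref{subsec:pre}. Write $P=\Ptx$. First I would note that $h_t$ stays in the range $\HKx$ of $P$ for every $t$ (induction: $h_1=0$ and each increment lies in $\mathrm{range}(P)$). Setting $u_t=h_t-P\FH\in\mathrm{range}(P)$ and $B=P\TX P$, the recursion \eref{eq:alg3} becomes $u_{t+1}=(I-\eta_t B)u_t+\eta_t P\TX(I-P)\FH$ with $u_1=-P\FH$; unrolling and using $h_{t+1}-\FH=u_{t+1}-(I-P)\FH$ gives
$$h_{t+1}-\FH=-\Pi_1^t(B)P\FH+\Big(\sum_{k=1}^t\eta_k\Pi_{k+1}^t(B)\Big)P\TX(I-P)\FH-(I-P)\FH.$$
Passing to the $\rho$-norm via $\|\cdot\|_\rho=\|\TK^{1/2}\cdot\|_{\HK}\le\|\TKL^{1/2}\cdot\|_{\HK}$ splits the estimate into three terms I, II, III, one per summand.

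For the crucial first term I would peel off $\|\TKL^{1/2}\TXL^{-1/2}\|$ in front, write $\FH=\TK^\zeta w$ with $\|w\|\le R$ (Assumption \ref{as:regularity}), and trade the remaining $\TK^\zeta$ for $\TXL^\zeta$ at the cost of another $\|\TKL^{1/2}\TXL^{-1/2}\|^{2\zeta}$, using $\|M\TK^\zeta\|\le\|M\TKL^\zeta\|$ (operator monotonicity of $x\mapsto x^{2\zeta}$) together with Cordes' inequality (Lemma \ref{lem:cordeine}) to bound $\|\TXL^{-\zeta}\TKL^\zeta\|\le\|\TKL^{1/2}\TXL^{-1/2}\|^{2\zeta}$; this accounts for the factor $\|\TKL^{1/2}\TXL^{-1/2}\|^{2\zeta+1}$. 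What remains is $\|\TXL^{1/2}\Pi_1^t(B)P\,\TXL^\zeta\|$. Writing $\Ptx=VV^*$ from the SVD of $\STX$ and observing $BV=V\hat T$ with $\hat T=V^*\TX V$, one gets $\Pi_1^t(B)P=V\Pi_1^t(\hat T)V^*$ and $V^*\TXL V=\hat T+\lambda I$; Lemma \ref{lem:paroper} (with $r=1/2$, $s=\zeta$) then reduces the quantity to $\|\Pi_1^t(\hat T)(\hat T+\lambda)^{\zeta+1/2}\|$, which Lemma \ref{lem:initialerror} bounds by $(\mathrm{e}\sum_i\eta_i)^{-(\zeta+1/2)}+\lambda^{\zeta+1/2}$ (using $(\zeta+1/2)^{\zeta+1/2}\le1$ and $\eta_1\kappa^2\le1$). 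This reproduces the first line of the claim exactly.

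Terms III and II both reduce to $\|(I-P)\FH\|$-type estimates. For III, with $\FH=\TK^\zeta w$, I would use $\|M\TK^\zeta\|\le\|M\TKL^\zeta\|$, idempotence of $I-P$, and Cordes to obtain $\|\TKL^{1/2}(I-P)\FH\|\le\|(I-P)\TKL^{1/2}\|^{2\zeta+1}R$, i.e.\ the ``$+1$'' contribution. For II I would first use $P\TX(I-P)=P\TXL(I-P)$, split $\TXL=\TXL^{1/2}\TXL^{1/2}$, and peel off $\|\TKL^{1/2}\TXL^{-1/2}\|$; then Lemma \ref{lem:paroper} sends $\|\TXL^{1/2}\big(\sum_k\eta_k\Pi_{k+1}^t(B)\big)P\,\TXL^{1/2}\|$ to $\|(\hat T+\lambda)\sum_k\eta_k\Pi_{k+1}^t(\hat T)\|$, which the telescoping identity $\sum_k\eta_k\hat T\Pi_{k+1}^t(\hat T)=I-\Pi_1^t(\hat T)$ bounds by $1+\lambda\sum_k\eta_k$; the leftover $\|\TXL^{1/2}(I-P)\FH\|$ is handled as in III after inserting $\|\TXL^{1/2}\TKL^{-1/2}\|$. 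Summing I, II, III then yields the stated inequality.

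The main obstacle is the bookkeeping in Terms I and II: correctly transferring operators between the population metric $\TKL$ and the empirical metric $\TXL$ (tracking exactly which power of $\|\TKL^{1/2}\TXL^{-1/2}\|$ and $\|\TXL^{1/2}\TKL^{-1/2}\|$ appears) while simultaneously passing to the finite-dimensional representation $\hat T=V^*\TX V$ so that Lemma \ref{lem:paroper} and the smoothing estimate of Lemma \ref{lem:initialerror} apply with the correct exponents. The telescoping identity for the variance-like sum and the repeated use of Cordes' inequality to trade half-powers are the technical heart; the rest is routine.
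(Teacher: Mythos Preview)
Your proposal is correct and follows essentially the same route as the paper: the same unrolled three-term decomposition (your $B=P\TX P$ yields $\Pi_1^t(B)P=V\Pi_1^t(\hat T)V^*$, matching the paper's formula), the same $\TKL\leftrightarrow\TXL$ trade-offs via Cordes, the same use of Lemma~\ref{lem:paroper} and Lemma~\ref{lem:initialerror} on Term~I, and the same telescoping bound $\|(\hat T+\lambda)G_t\|\le 1+\lambda\sum_k\eta_k$ for Term~II. The only cosmetic differences are that the paper carries the $V,V^*$ representation from the outset rather than via $B$, and for Term~II it bounds $\|\TXL^{1/2}VG_tV^*\TXL^{1/2}\|$ by a direct adjoint calculation instead of invoking Lemma~\ref{lem:paroper}; either way one lands on the same inequality \eqref{eq:gtnorm}.
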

The upper bound from the above estimate is composed of two terms. The first term is referred as the {\it bias} and it is dominated by $\left( {\sum_{i=1}^{t} \eta_i} \right)^{-\zeta-1/2}$ for a suitable choice of $\lambda$ as will be seen in the following subsections.
The second term is referred as the {\it projected variance} and it is dominated by $\|(I-\Ptx)\TKL^{1/2}\|^{2\zeta+1}$.
\begin{proof}
Since $h_{t}$ is given by \eref{eq:alg3}, we know that $h_t \in \HKx$ and thus $\Ptx h_{t} = h_{t}$.
Thus, subtracting both sides of \eref{eq:alg3} by $\Ptx \FH$, with $\Ptx^2=\Ptx = VV^*$,
\begin{gather*}
h_{t+1} - \Ptx \FH
= \Ptx h_t - \Ptx^2 \FH - \eta_t \Ptx(\TX \Ptx h_t - \TX \Ptx^2 \FH) + \eta_t \Ptx \TX (I - \Ptx) \FH \\
= (\Ptx - \eta_t \Ptx \TX \Ptx) (h_t - \Ptx \FH) + \eta_t \Ptx \TX (I - \Ptx) \FH\\
 = V(I - \eta_t V^* \TX V)V^* (h_{t} - \Ptx \FH) + \eta_t VV^* \TX (I - \Ptx) \FH.
\end{gather*}
Using this relationship iteratively, introducing with $V^* V = I$, $f_1=0$ and $V^* \Ptx = V^*$, we get
\bea
h_{t+1} - \Ptx \FH = - V\Pi_1^t(V^* \TX V)V^* \FH +  \sum_{k=1}^t \eta_k V \Pi_{k+1}^t(V^* \TX V) V^*\TX (I - \Ptx) \FH.
\eea
Combining with $\FH = \Ptx \FH + (I - \Ptx) \FH,$
\be\label{eq:r_t}
h_{t+1} - \FH = - V\Pi_1^t(V^* \TX V)V^* \FH + \sum_{k=1}^t \eta_k V \Pi_{k+1}^t(V^* \TX V) V^*\TX (I - \Ptx) \FH - (I - \Ptx) \FH.
\ee
Applying \eref{isometry}, which implies
\bea
\|h_{t+1} - \FH \|_{\rho} = \left\|\sqrt{\TK} (h_{t+1} - \FH) \right\|_{\HK} \leq \left\|\sqrt{\TKL}(h_{t+1} - \FH) \right\|_{\HK},
\eea
we have
\begin{multline*}
\|h_{t+1} - \FH \|_{\rho} \leq \underbrace{\|\TKL^{1/2} V\Pi_1^t(V^* \TX V)V^* \FH \|}_{ \mbox{\bf Term.1}} \\
+ \underbrace{\left\|\TKL^{1/2}\sum_{k=1}^t \eta_k V \Pi_{k+1}^t(V^* \TX V) V^*\TX (I - \Ptx) \FH \right\|}_{ \mbox{\bf Term.2}} + \underbrace{\|\TKL^{1/2}(I - \Ptx) \FH\|}_{ \mbox{\bf Term.3}}.
\end{multline*}
In what follows, we will estimate the above three terms separatively.
First, applying Assumption \ref{as:regularity}, a simple calculation shows that
the first term can be estimated as follows:
\bea
{ \mbox{\bf Term.1}} \leq \|\TKL^{1/2}\TXL^{-1/2}\| \| \TXL^{1/2} V\Pi_1^t(V^* \TX V)V^*\TXL^{\zeta}\| \|\TXL^{-\zeta}\TKL^{\zeta}\| \|\TKL^{-\zeta}\TK^{\zeta} \| R.
\eea
Since $\zeta \leq 1/2,$ we can apply Lemma \ref{lem:cordeine} to get
$$\|\TXL^{-\zeta}\TKL^{\zeta}\|= \|\TXL^{{-1 \over 2} \times 2\zeta}\TKL^{{1 \over 2} \times 2\zeta}\| \leq \|\TXL^{-1/2}\TKL^{1/2}\|^{2\zeta}.$$ Besides,
by Lemmas \ref{lem:paroper} and \ref{lem:initialerror},
\bea
&&\| \TXL^{1/2} V\Pi_1^t(V^* \TX V)V^*\TXL^{\zeta}\|  \\
&\leq& \| (V^*\TXL V)^{1/2} \Pi_1^t(V^* \TX V) (V^*\TXL V)^{\zeta}\|  \\
&=& \| \Pi_1^t(V^* \TX V) (V^*\TXL V)^{\zeta+1/2}\| \\
&\leq&  \left( {\zeta+1/2 \over \mathrm{e}\sum_{i=1}^{t} \eta_i} \right)^{\zeta+1/2} + \lambda^{\zeta+1/2}.
\eea
Thus, the first term can be bounded as
\bea
{ \mbox{\bf Term.1}} \leq \|\TKL^{1/2} \TXL^{-1/2}\|^{2\zeta+1} \left(\left( {\zeta+1/2 \over \mathrm{e}\sum_{i=1}^{t} \eta_i} \right)^{\zeta+1/2} + \lambda^{\zeta+1/2}\right) R.
\eea
For the second term, we have
\begin{align*}
&{ \mbox{\bf Term.2}} \\
\leq& \|\TKL^{1/2}\TXL^{-1/2}\| \|\TXL^{1/2}\sum_{k=1}^t \eta_k V \Pi_{k+1}^t(V^* \TX V) V^* \TXL^{1/2}\| \|\TXL^{1/2} \TKL^{-1/2}\| \|\TKL^{1/2}(I - \Ptx) \FH \|.
\end{align*}
Let $G_t = \sum_{k=1}^t \eta_k \Pi_{k+1}^t (V^* \TX V)$,  for notational simplicity.
Note that
 \begin{align}
 \|\TXL^{1/2} V  G_t V^* \TXL^{1/2}\|
  =& \|(\TXL^{1/2} V  G_t^{1/2}) (G_t^{1/2} V^* \TXL^{1/2})\| \nonumber\\
  =&  \|(\TXL^{1/2} V  G_t^{1/2}) (\TXL^{1/2} V  G_t^{1/2})^*\| \nonumber \\
 =&  \| (\TXL^{1/2} V  G_t^{1/2})^*(\TXL^{1/2} V  G_t^{1/2})\| \nonumber \\
=& \| G_t^{1/2} V^* \TXL^{1/2} \TXL^{1/2} V  G_t^{1/2} \| \nonumber\\
= & \| G_t^{1/2} V^* \TXL V  G_t^{1/2} \| \nonumber \\
\leq&  \| G_t^{1/2} V^* \TX V  G_t^{1/2} \| + \lambda\|G_t^{1/2} V^* V G_t^{1/2}\| \nonumber\\
=& \| G_t^{1/2} V^* \TX V  G_t^{1/2} \| + \lambda\|G_t\| \nonumber \\
=& \| V^* \TX V  G_t \| + \lambda\|G_t\| \nonumber.
\end{align}
Since for all $ x\in[0,\kappa^2]$, $\sum_{k=1}^t \eta_k\Pi_{k+1}^t (x) \leq \sum_{k=1}^t \eta_k$ and
\begin{align*}
\sum_{k=1}^t \eta_k x \Pi_{k+1}^t (x)  =& \sum_{k=1}^t (1 - (1-\eta_k x)) \Pi_{k+1}^t (x)\\
=& \sum_{k=1}^t \left(\Pi_{k+1}^t (x) - \Pi_{k}^t(x) \right) = 1 - \Pi_{1}^t (x) \leq 1.
\end{align*}
Thus, combining with $\|V^* \TX V\| \leq \|\TX\| \leq \kappa^2$ (implied by \eref{eq:HK}), we have
$$\|G_t\| \leq \sum_{k=1}^t \eta_k \|\Pi_{k+1}^t(V^* \TX V)\| \leq \sum_{k=1}^t \eta_k,$$
and
\bea
\| V^* \TX V  G_t \| \leq \sup_{0\leq x\leq \kappa^2} \sum_{k=1}^t \eta_k x \Pi_{k+1}^t (x) \leq 1.
\eea
Therefore,
\be
\|\TXL^{1/2} V  G_t V^* \TXL^{1/2}\| \leq 1 + \lambda \sum_{k=1}^t \eta_k,\label{eq:gtnorm}
\ee
and consequently, we know that the second term can be estimated as
\bea
{ \mbox{\bf Term.2}} \leq \|\TKL^{1/2}\TXL^{-1/2}\|\|\TXL^{1/2} \TKL^{-1/2}\| \left(1+\lambda\sum_{k=1}^t \eta_k\right)  \times { \mbox{\bf Term.3}}.
\eea
What remains is to bound the third term.
Writing $(I-\Ptx)=(I-\Ptx)^{1+2\zeta}$, and applying Assumption \ref{as:regularity}, we have
\begin{align*}
{ \mbox{\bf Term.3}} \leq& \|\TKL^{1/2}(I-\Ptx)\| \|(I-\Ptx)^{2\zeta}\TKL^{(1/2) \times 2\zeta}\| \|\TKL^{-\zeta}\TK^{\zeta}\| R \\
\leq& \|\TKL^{1/2}(I-\Ptx)\| \|(I-\Ptx)\TKL^{1/2}\|^{2\zeta} R,
\end{align*}
where for the last step, we used Lemma \ref{lem:cordeine} with $s=2\zeta\leq 1.$
From the above analysis, we can conclude the proof by noting that $\zeta+1/2 \leq 1$.
\end{proof}
A common choice for the step-size in the literature is $\eta_t = \eta_1 t^{-\theta}$ with $\theta\in [0,1]$.
In this case, by setting $\eta_t = \eta_1 t^{-\theta}$ in Proposition \ref{pro:bias1}, and combining with Lemma \ref{lem:estimate1},
we get the following explicit bound for the projected bias.
\begin{pro}\label{pro:bias2}
 Let $\{h_t\}_t$ be given by \eref{eq:alg3} and $\eta_t = \eta_1 t^{-\theta}$ for all $t\in \mN$ with $0<\eta_1 \leq \kappa^{-2}$ and $\theta\in[0,1[$. Under Assumption \ref{as:regularity}, if $\zeta \leq 1/2,$ then for all $t\in \mN,$
  \begin{multline}\label{eq:bias}
\|h_{t+1} - \FH \|_{\rho}
\leq R \left( \left( {1 \over \eta_1 t^{1-\theta}} \right)^{\zeta+1/2} + \lambda^{\zeta+1/2}\right) \left\|\TXL^{-1/2} \TKL^{1/2}\right\|^{2\zeta+1} \\
+ \left(\|\TKL^{1/2}\TXL^{-1/2}\|\|\TXL^{1/2} \TKL^{-1/2}\| \left(1+{\lambda \eta_1 t^{1-\theta}\over 1-\theta}\right)  +1\right) \|(I-\Ptx)\TKL^{1/2}\|^{2\zeta+1} R.
\end{multline}
\end{pro}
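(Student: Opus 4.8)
The plan is to obtain Proposition \ref{pro:bias2} directly from Proposition \ref{pro:bias1} by specializing to the step-size $\eta_t = \eta_1 t^{-\theta}$ and then controlling the two partial sums $\sum_{i=1}^t \eta_i$ appearing in the bound via Lemma \ref{lem:estimate1}. First I would verify that the hypothesis of Proposition \ref{pro:bias1} holds: since $\theta \in [0,1[$ and $t \geq 1$, we have $\eta_t = \eta_1 t^{-\theta} \leq \eta_1 \leq \kappa^{-2}$, so that $\eta_t \kappa^2 \leq 1$ for every $t \in \mN$, and Proposition \ref{pro:bias1} applies verbatim. It then remains to rewrite the two sum-dependent factors in its bound in terms of $t^{1-\theta}$.

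Next I would treat the two terms of the bound separately, using the two sides of Lemma \ref{lem:estimate1} asymmetrically. For the \emph{bias} factor $\left({1 \over \mathrm{e}\sum_{i=1}^t \eta_i}\right)^{\zeta+1/2}$, in which the partial sum sits in the denominator, I would invoke the lower bound $\sum_{i=1}^t \eta_i = \eta_1 \sum_{i=1}^t i^{-\theta} \geq \eta_1 t^{1-\theta}/2$, giving $\left({1 \over \mathrm{e}\sum_{i=1}^t \eta_i}\right)^{\zeta+1/2} \leq \left({2 \over \mathrm{e}}\right)^{\zeta+1/2}\left({1 \over \eta_1 t^{1-\theta}}\right)^{\zeta+1/2}$. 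Since $2/\mathrm{e} < 1$ and $\zeta+1/2 > 0$, the numerical constant $\left(2/\mathrm{e}\right)^{\zeta+1/2}$ is at most one and can be dropped, reproducing exactly the first displayed term of \eref{eq:bias} (the operator-norm and $\lambda^{\zeta+1/2}$ pieces are carried over unchanged).

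For the second (\emph{projected-variance}) term I would instead use the upper bound of Lemma \ref{lem:estimate1}, since there the partial sum appears in the numerator: $\sum_{k=1}^t \eta_k = \eta_1 \sum_{k=1}^t k^{-\theta} \leq \eta_1 t^{1-\theta}/(1-\theta)$, so that $\lambda \sum_{k=1}^t \eta_k \leq \lambda \eta_1 t^{1-\theta}/(1-\theta)$. Substituting this into the coefficient $\left(1 + \lambda \sum_{k=1}^t \eta_k\right)$ of Proposition \ref{pro:bias1} yields the factor $\left(1 + {\lambda \eta_1 t^{1-\theta}\over 1-\theta}\right)$, with the operator-norm and projection factors unchanged. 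Combining the two estimates gives \eref{eq:bias}.

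The argument is essentially bookkeeping, so I expect no genuine obstacle; the only point requiring a moment's care is the asymmetric application of Lemma \ref{lem:estimate1} — the lower bound on $\sum_i \eta_i$ for the bias term and the upper bound for the variance term — together with the elementary observation $2/\mathrm{e} < 1$ that lets the constant in the bias term be absorbed into the stated bound.
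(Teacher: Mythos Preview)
Your proposal is correct and follows exactly the same approach as the paper: specialize Proposition \ref{pro:bias1} to $\eta_t=\eta_1 t^{-\theta}$ and invoke the two sides of Lemma \ref{lem:estimate1} asymmetrically (lower bound for the bias factor, upper bound for the projected-variance factor), with the observation $2/\mathrm{e}<1$ absorbing the numerical constant. The paper's proof is stated in one sentence, and your argument fills in precisely the details it omits.
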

As will be seen in Subsection \ref{subsec:der},
the term $\left\|\TXL^{-1/2} \TKL^{1/2}\right\|^2$ can be well bounded for $\lambda$ that satisfies \eref{eq:concenTx}, using Lemma \ref{lemma:neumanns}.
Moreover, by Lemma \ref{lemma:i-p}, the term $ \|(I-\Ptx)\TKL^{1/2}\|^2$ can be upper bounded in terms of
$\lambda \left\|\mcT_{\tilde{\bf x}\lambda}^{-1/2} \TKL^{1/2}\right\|^2$.

\subsection{Sample Variance}\label{subsec:samVar}
This subsection is devoted to the estimate on the sample variance, i.e., $\|h_{t+1} - g_{t+1}\|_{\rho}.$
\begin{pro}\label{pro:samVar}
Let $h_{t+1}$ and $g_{t+1}$ be given by \eref{eq:alg3} and  \eref{eq:alg2}, respectively, and $\eta_t \kappa^2 \leq 1$ for all $t\in \mN$.
Then for any $\lambda>0$ and $t \in \mN,$ we have
\bea
\|h_{t+1} - g_{t+1}\|_{\rho} \leq \left(1+\lambda\sum_{k=1}^t \eta_k\right) \|\TKL^{1/2} \TXL^{-1/2}\|^2 \|\TKL^{-1/2} ( \TX \FH - \SX^* \mby  ) \|_{\HK}.
\eea
\end{pro}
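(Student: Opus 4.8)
The plan is to track the difference $r_t:=h_t-g_t$ through a single linear recursion, mirroring the treatment of \textbf{Term.2} in Proposition \ref{pro:bias1}. First I would note that, since $h_1=g_1=0$ and both \eref{eq:alg3} and \eref{eq:alg2} apply $\Ptx$ at every step, the iterates remain in $\HKx$; hence $\Ptx r_t=r_t$ for all $t$. Subtracting \eref{eq:alg2} from \eref{eq:alg3}, the common operator $\eta_t\Ptx\TX$ acts on $r_t$, while the two distinct data terms merge into a single source $w:=\TX\FH-\SX^*\mby$. Writing $\Ptx=VV^*$ with $V^*V=I$ exactly as in the bias proof, this yields the recursion $r_{t+1}=V(I-\eta_t V^*\TX V)V^* r_t+\eta_t VV^* w$ with $r_1=0$.

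Iterating this relation, and using $V^*V=I$ to collapse the projections between consecutive steps just as in the derivation of \eref{eq:r_t}, I would obtain the closed form $r_{t+1}=\sum_{k=1}^t\eta_k V\Pi_{k+1}^t(V^*\TX V)V^* w=VG_tV^* w$, where $G_t=\sum_{k=1}^t\eta_k\Pi_{k+1}^t(V^*\TX V)$ is precisely the operator introduced in the bias proof. Passing to the $\rho$-norm via the isometry \eref{isometry} together with the bound $\|\sqrt{\TK}\,\cdot\|_{\HK}\le\|\sqrt{\TKL}\,\cdot\|_{\HK}$, this controls $\|r_{t+1}\|_\rho$ by $\|\TKL^{1/2}VG_tV^* w\|$.

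The last step reproduces the factorization used for \textbf{Term.2}: inserting the identity factors $\TXL^{-1/2}\TXL^{1/2}$ and $\TKL^{1/2}\TKL^{-1/2}$ around the core $VG_tV^*$ and splitting the operator norm gives
$$\|r_{t+1}\|_\rho\le \|\TKL^{1/2}\TXL^{-1/2}\|\,\big\|\TXL^{1/2}VG_tV^*\TXL^{1/2}\big\|\,\|\TXL^{-1/2}\TKL^{1/2}\|\,\|\TKL^{-1/2}w\|.$$
Because $\TKL^{1/2}$ and $\TXL^{\pm1/2}$ are self-adjoint, the first and third factors are mutually adjoint, hence equal, and multiply to $\|\TKL^{1/2}\TXL^{-1/2}\|^2$; the middle factor is at most $1+\lambda\sum_{k=1}^t\eta_k$ by the already-established estimate \eref{eq:gtnorm}; and recalling the definition of $w$ produces the claimed inequality.

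I expect the only delicate point to be the passage to the closed form for $r_{t+1}$: one must keep the inserted projections $VV^*$ straight so that $G_t$ emerges conjugated by $V$, rather than contaminated by stray resolvent factors. Since the substantial operator bound \eref{eq:gtnorm} on $\|\TXL^{1/2}VG_tV^*\TXL^{1/2}\|$ is recycled verbatim from the bias analysis, no genuinely new estimate is required, and the argument reduces to careful bookkeeping once the recursion is set up.
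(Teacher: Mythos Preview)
Your proposal is correct and follows essentially the same route as the paper: derive the linear recursion for $r_t=h_t-g_t$, unroll it using $\Ptx=VV^*$ and $V^*V=I$ to obtain $r_{t+1}=VG_tV^*w$, pass to the $\rho$-norm via \eref{isometry}, and factor through $\TXL^{\pm 1/2}$ and $\TKL^{\pm 1/2}$ so that \eref{eq:gtnorm} handles the core $\|\TXL^{1/2}VG_tV^*\TXL^{1/2}\|$. The only cosmetic difference is that you explicitly invoke $\|A\|=\|A^*\|$ to merge the two outer factors into $\|\TKL^{1/2}\TXL^{-1/2}\|^2$, whereas the paper leaves this implicit.
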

\begin{proof}
Since $h_{t+1}$ and $g_{t+1}$ are given by \eref{eq:alg3} and  \eref{eq:alg2}, respectively,
\bea
h_{t+1} - g_{t+1} = (I - \eta_t \Ptx \TX)(h_t - g_t) + \eta_t \Ptx \left[ \TX \FH - \SX^* \mby  \right].
\eea
According to the definitions of $h_t$ and $g_t,$ we know that
both $h_t$ and $g_t$ belong to $\HKx$. Recall that $\Ptx $ is the orthogonal  projection operator on $\HKx$ and  $\Ptx = VV^*$.
We thus have $(h_t - g_t)  = \Ptx (h_t - g_t) $. Therefore,
\bea
 h_{t+1} - g_{t+1}
= (I - \eta_t \Ptx \TX) \Ptx (h_t - g_t) + \eta_t \Ptx \left[ \TX \FH - \SX^* \mby  \right].
\eea
Introducing with $\Ptx = VV^*$,
\bea
 h_{t+1} - g_{t+1}
 &=& (I - \eta_t VV^* \TX) VV^* (h_t - g_t) + \eta_t VV^* \left[ \TX \FH - \SX^* \mby  \right]\\
&=& V(I - \eta_t V^* \TX V) V^* (h_t - g_t) + \eta_t VV^* \left[ \TX \FH - \SX^* \mby  \right].
\eea
Applying this relationship iteratively, and noting that $V^*V = I,h_1=g_1=0,$ we get
\bea
h_{t+1} - g_{t+1} = \sum_{k=1}^t \eta_k V\Pi_{k+1}^t (V^* \TX V) V^* \left[ \TX \FH - \SX^* \mby  \right].
\eea
Combining with \eref{isometry}, we get
\bea
&&\|h_{t+1} - g_{t+1}\|  \\
&=&\left\|\sqrt{\TK}\sum_{k=1}^t \eta_k V\Pi_{k+1}^t (V^* \TX V) V^* \left[ \TX \FH - \SX^* \mby  \right] \right\|_{\HK}\\
&\leq& \|\TK^{1/2} \TKL^{-1/2} \| \|\TKL^{1/2} \TXL^{-1/2}\| \left\|\TXL^{1/2} \sum_{k=1}^t \eta_k V\Pi_{k+1}^t (V^* \TX V) V^* \TXL^{1/2}\right\| \\
&& \times \|\TXL^{-1/2}\TKL^{1/2}\| \|\TKL^{-1/2} ( \TX \FH - \SX^* \mby  ) \|_{\HK} \\
&\leq&  \left\|\TXL^{1/2} \sum_{k=1}^t \eta_k V\Pi_{k+1}^t (V^* \TX V) V^* \TXL^{1/2}\right\| \|\TKL^{1/2} \TXL^{-1/2}\|^2 \|\TKL^{-1/2} ( \TX \FH - \SX^* \mby  ) \|_{\HK}\\
&\leq& \left( 1+ \lambda \sum_{k=1}^t \eta_k \right) \|\TKL^{1/2} \TXL^{-1/2}\|^2 \|\TKL^{-1/2} ( \TX \FH - \SX^* \mby  ) \|_{\HK},
\eea
where for the last inequality, we used \eref{eq:gtnorm}.
From the above analysis, we can conclude the desired result. The proof is complete.
\end{proof}

Setting $\eta_t = \eta_1 t^{-\theta}$ with $\theta \in [0,1)$ and $\eta_1>0$ for all $t\in \mN$ in Proposition \ref{pro:samVar}, and then combining with Lemma \ref{lem:estimate1}, we have the following bounds for the sample variance.
\begin{pro}\label{pro:samVar2}
  Assume that $h_{t+1}$ and $g_{t+1}$ are given by \eref{eq:alg3} and  \eref{eq:alg2}, respectively. Let $\eta_t = \eta_1 t^{-\theta}$ with $\theta \in [0,1)$ and $\eta_1\leq \kappa^{-2}$ for all $t\in \mN.$ Then for any $\lambda>0$ and $t \in \mN,$ we have
\be\label{eq:samVar2}
\|h_{t+1} - g_{t+1}\|_{\rho} \leq {1 \over 1-\theta}\left(1+ {\lambda \eta_1 t^{1-\theta}} \right) \|\TKL^{1/2} \TXL^{-1/2}\|^2 \|\TKL^{-1/2} ( \TX \FH - \SX^* \mby  ) \|_{\HK}.
\ee
\end{pro}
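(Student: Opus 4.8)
The plan is to specialize the general sample-variance bound of Proposition \ref{pro:samVar} to the power-law step-size $\eta_t = \eta_1 t^{-\theta}$ and then control the resulting partial sum of step-sizes using the arithmetic estimate of Lemma \ref{lem:estimate1}. Proposition \ref{pro:samVar} already carries out all the operator-theoretic work: it writes $h_{t+1}-g_{t+1}$ as a product/telescoping expression driven by $V\Pi_{k+1}^t(V^*\TX V)V^*$, inserts the factors $\TKL^{1/2}\TXL^{-1/2}$ to pass between the kernel and empirical covariance operators, and bounds the central term $\|\TXL^{1/2} V G_t V^* \TXL^{1/2}\| \leq 1 + \lambda \sum_{k=1}^t \eta_k$ through \eref{eq:gtnorm}. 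Consequently, all that remains for the present statement is a scalar bookkeeping computation.

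First I would check that Proposition \ref{pro:samVar} indeed applies: the hypotheses $\eta_1 \leq \kappa^{-2}$ and $\theta \in [0,1)$ give $\eta_t \kappa^2 = \eta_1 \kappa^2 t^{-\theta} \leq \eta_1 \kappa^2 \leq 1$ for every $t \in \mN$, so its conclusion holds verbatim. Substituting $\eta_k = \eta_1 k^{-\theta}$ into that conclusion, the only quantity that still needs an explicit bound is $\sum_{k=1}^t \eta_k = \eta_1 \sum_{k=1}^t k^{-\theta}$.

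Next I would invoke the upper bound in Lemma \ref{lem:estimate1}, namely $\sum_{k=1}^t k^{-\theta} \leq \frac{t^{1-\theta}}{1-\theta}$ (valid precisely because $\theta \in [0,1)$), to obtain $1 + \lambda \sum_{k=1}^t \eta_k \leq 1 + \frac{\lambda \eta_1 t^{1-\theta}}{1-\theta}$. Finally, using the elementary inequality $\frac{1}{1-\theta} \geq 1$ for $\theta \in [0,1)$, I would absorb the leading constant $1$ into the prefactor, rewriting the right-hand side as $\frac{1}{1-\theta}\bigl(1 + \lambda \eta_1 t^{1-\theta}\bigr)$. Combined with the two operator-norm factors carried over unchanged from Proposition \ref{pro:samVar}, this is exactly \eref{eq:samVar2}.

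Honestly, there is no genuine obstacle here: the analytic substance lives entirely in Proposition \ref{pro:samVar}, and the present statement is merely its explicit form for the standard step-size schedule $\eta_t = \eta_1 t^{-\theta}$. The only step requiring any care is the last one, where the additive $1$ must be combined with the sum estimate so that the constant $\frac{1}{1-\theta}$ factors out cleanly; this hinges only on observing that $1 \leq \frac{1}{1-\theta}$ over the admissible range of $\theta$.
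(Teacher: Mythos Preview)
Your proposal is correct and follows exactly the paper's own argument: the paper states that Proposition \ref{pro:samVar2} is obtained by setting $\eta_t = \eta_1 t^{-\theta}$ in Proposition \ref{pro:samVar} and combining with Lemma \ref{lem:estimate1}, which is precisely what you do, including the final absorption of the additive $1$ into the prefactor $\frac{1}{1-\theta}$.
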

Again, the term $\left\|\TXL^{-1/2} \TKL^{1/2}\right\|$ can be well bounded for $\lambda$ that satisfies \eref{eq:concenTx}. The term $\|\TKL^{-1/2} ( \TX \FH - \SX^* \mby  ) \|_{\HK}$ will be estimated by Lemma \ref{lem:sampleErr}.

\subsection{Computational Variance}\label{subsec:comp}
This subsection is devoted to bounding the computational variance, i.e., $\|f_{t+1} - g_{t+1}\|_{\rho}.$
\begin{pro}\label{pro:compVar}
Let $\eta_1 \kappa^2 \leq 1/2$ and for all $t \in [T]$ with $t \geq 2,$
  \be\label{empriskBCon}
  {1 \over \eta_t}  \sum_{k=1}^{t-1}{1 \over k(k+1)}  \sum_{i=t-k}^{t-1} {\eta_i^2} \leq {1 \over 4\kappa^2}.
  \ee
Let $\{g_t\}_{t}$ be given by \eref{eq:alg2}.
Then for all $t \in[T],$ and $\lambda>0,$
  \be\label{eq:compVar1}
  \begin{split}
&\mE_{\J}\|f_{t+1} - g_{t+1}\|_{\rho}^2 \\
\leq&  {8\kappa^2 \over b}  \mcE_{\bf z}(0) \sup_{k\in [t]} \left\{ {1 \over \eta_k k} \sum_{l=1}^k \eta_l\right\}   \left(\sum_{k=1}^{t-1} {\eta_k^2\over \mathrm{e}\sum_{i=k+1}^t \eta_i} + \lambda \sum_{k=1}^{t-1} \eta_k^2 + \eta_t^2\right)\|\TKL^{1/2} \TXL^{-1/2}\|^2.
\end{split}\ee
\end{pro}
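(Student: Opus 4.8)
The plan is to control $e_t := f_t - g_t$, which lies in $\HKx$ for every $t$ (both iterates start at $0$ and the projection keeps them in $\HKx$) and which is driven solely by the stochastic fluctuation of the mini-batch gradient. I would first write the mini-batch direction $\widehat{G}_t = \frac1b\sum_{i=b(t-1)+1}^{bt}(f_t(x_{j_i})-y_{j_i})K_{x_{j_i}}$ and split it into its conditional mean given $\J_1,\dots,\J_{t-1}$ — which, by the computation behind \eref{eq:unbias_index}, is exactly the deterministic direction driving $g_{t+1}$ — plus the centered noise $N_t := \widehat{G}_t - \mE_{\J_t}[\widehat{G}_t]$. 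Using $\Ptx e_t = e_t$ and $V^*\Ptx = V^*$ this gives $e_{t+1} = V(I-\eta_t V^*\TX V)V^* e_t - \eta_t VV^* N_t$, and unfolding from $e_1 = 0$ yields the closed form $e_{t+1} = -\sum_{k=1}^t \eta_k V\,\Pi_{k+1}^t(V^*\TX V)\,V^* N_k$.

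Next I would pass to the $\rho$-norm through \eref{isometry} and use that $\{N_k\}$ is a martingale-difference sequence: each $\Pi_{k+1}^t(V^*\TX V)$ depends only on $(\mathbf{z},\mbtx)$ while $\mE[N_l\mid\J_1,\dots,\J_{l-1}]=0$, so every cross term vanishes and
\[
\mE_{\J}\|e_{t+1}\|_\rho^2 = \sum_{k=1}^t \eta_k^2\,\mE_{\J}\big\|\sqrt{\TK}\,V\,\Pi_{k+1}^t(V^*\TX V)\,V^* N_k\big\|_{\HK}^2 .
\]
Conditioning on $\J_1,\dots,\J_{k-1}$, the $b$ summands making up $N_k$ are i.i.d. and centered, so a variance-reduction step produces the factor $1/b$ and bounds the $k$-th summand by $\frac1b\,\mE_{\J}\big[\frac1n\sum_{j=1}^n (f_k(x_j)-y_j)^2\,\|B_{k,t}K_{x_j}\|^2\big]$, where $B_{k,t} := \sqrt{\TK}\,V\,\Pi_{k+1}^t(V^*\TX V)\,V^*$. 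This is the step responsible for the $1/b$ gain.

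For the operator factor I would insert $\TKL^{1/2}\TXL^{-1/2}$ and exploit that $V$ is a partial isometry with $V^*\TXL V = V^*\TX V + \lambda I$. Writing $L := V^*\TX V$ and applying Lemma \ref{lem:paroper}, $\|B_{k,t}\|^2 \leq \|\TKL^{1/2}\TXL^{-1/2}\|^2\,\|(L+\lambda)\,\Pi_{k+1}^t(L)^2\|$, so with $\|K_{x_j}\|_{\HK}^2\leq\kappa^2$ the remaining norm is a scalar supremum over $[0,\kappa^2]$. For $k\leq t-1$, Lemma \ref{lem:initialerror} with $\zeta=1$ (together with $\|\Pi_{k+1}^t(L)\|\leq 1$) bounds it by $\frac{1}{\mathrm{e}\sum_{i=k+1}^t\eta_i}+\lambda$, while the degenerate index $k=t$, where $\Pi_{t+1}^t(L)=I$ is the empty product, is bounded directly by $\|L+\lambda\|\leq 2\kappa^2$ and yields the isolated $\eta_t^2$. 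This accounts for the three summands $\sum_{k=1}^{t-1}\eta_k^2/(\mathrm{e}\sum_{i=k+1}^t\eta_i)$, $\lambda\sum_{k=1}^{t-1}\eta_k^2$, and $\eta_t^2$, each carrying the factor $\kappa^2\|\TKL^{1/2}\TXL^{-1/2}\|^2$.

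The remaining ingredient, and the one I expect to be the main obstacle, is a uniform bound on $\frac1n\sum_j(f_k(x_j)-y_j)^2 = \mcE_{\bf z}(f_k)$. I would derive it from a one-step recursion: using the $\kappa^2$-smoothness of $\mcE_{\bf z}$, the fact that $\Ptx$ is a projection, and the variance bound on $\mE_{\J_k}\|\Ptx\widehat{G}_k\|^2$, one obtains $\mE_{\J}\mcE_{\bf z}(f_{k+1}) \leq \mE_{\J}\mcE_{\bf z}(f_k)(1 + C\kappa^4\eta_k^2) - (\text{descent})$, and iterating under the step-size condition \eref{empriskBCon} gives $\mE_{\J}[\mcE_{\bf z}(f_k)] \lesssim \mcE_{\bf z}(0)\,\frac{1}{\eta_k k}\sum_{l=1}^k\eta_l$; taking the supremum over $k$ lets this factor be pulled out of the $k$-sum, producing exactly $\sup_{k\in[t]}\{\frac{1}{\eta_k k}\sum_{l=1}^k\eta_l\}$. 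The delicacy is that $\mE_{\J}\mcE_{\bf z}(f_k)$ is itself coupled to the computational variance we are estimating, since the fluctuation of $f_k$ feeds back through $\langle\TX e_k,e_k\rangle$, so the estimate must be self-referentially stable over arbitrarily many passes; condition \eref{empriskBCon} is precisely what keeps the accumulated noise $\sum\eta_i^2$ small relative to $\eta_k$ and prevents blow-up. Assembling the three operator terms, the factor $1/b$, the constant $\kappa^2$, and this uniform risk bound gives \eref{eq:compVar1}.
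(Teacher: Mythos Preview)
Your martingale decomposition of $e_{t+1}$, the elimination of cross terms, the $1/b$ variance reduction, and the operator bounds via Lemma~\ref{lem:initialerror} all match the paper's argument essentially line for line. The genuine gap is the empirical-risk step.

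The recursion you propose comes from the descent lemma ($\kappa^2$-smoothness) and, after dropping the nonnegative descent term, gives at best
\[
\mE_{\J}[\mcE_{\bf z}(f_k)] \;\leq\; \mcE_{\bf z}(0)\prod_{l=1}^{k-1}\bigl(1+C\kappa^4\eta_l^2\bigr) \;\leq\; \mcE_{\bf z}(0)\exp\Bigl(C\kappa^4\sum_{l=1}^{k-1}\eta_l^2\Bigr).
\]
But condition \eref{empriskBCon} does \emph{not} keep $\sum_{l}\eta_l^2$ bounded: for the constant step $\eta_l\equiv\eta$, \eref{empriskBCon} only forces $\eta\log t\lesssim 1/\kappa^2$, so $\sum_{l\leq T}\eta_l^2\asymp T/(\kappa^4\log^2T)\to\infty$ and the smoothness product blows up. The descent term $-\eta_k\|\Ptx\nabla\mcE_{\bf z}(f_k)\|^2$ you omit cannot rescue this without a PL-type inequality, which is unavailable here. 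Consequently your sketch does not deliver the bound $\mE_{\J}[\mcE_{\bf z}(f_k)]\lesssim \mcE_{\bf z}(0)\,\frac{1}{\eta_k k}\sum_{l\leq k}\eta_l$ (which, for the step-sizes in the paper, is $O(1)$).

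The paper proceeds via \emph{convexity}, not smoothness (Lemma~\ref{lemma:empriskB}). The Fej\'er-type inequality of Lemma~\ref{lemma:mfejer} with comparator $f=0$ telescopes in $\|f_l\|_{\HK}^2$ and yields the weighted-average bound $\frac{1}{t}\sum_{l\leq t}\eta_l\,\mE_{\J}[\mcE_{\bf z}(f_l)]\leq 2\mcE_{\bf z}(0)\,\frac{1}{t}\sum_{l\leq t}\eta_l$ with no cumulative noise term. The passage from average to last iterate is then done by the Shamir--Zhang decomposition (Lemma~\ref{lemma:empiricalRelat}), which produces exactly the double sum $\sum_{k}\frac{1}{k(k+1)}\sum_{i=t-k}^{t-1}\eta_i^2\,\mE_{\J}[\mcE_{\bf z}(f_i)]$ as the correction. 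This is where the self-referential coupling appears, and it is closed by a simple induction (Lemma~\ref{lemma:induction}) that uses \eref{empriskBCon} verbatim to make the feedback coefficient at most $1/2$. So \eref{empriskBCon} is tailored to the average-to-last-iterate correction, not to a smoothness product; replacing that part of your sketch with the convexity/Fej\'er route fixes the argument.
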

To prove this result, we need the following lemma to bound the empirical risks.
Its proof is similar as that for \cite[Lemma D.4]{lin2016optimal}, and will be given in the appendix.
\begin{lemma}\label{lemma:empriskB}
 Assume $\eta_1 \kappa^2 \leq 1/2$ and \eref{empriskBCon} for all $t \in [T]$ with $t \geq 2$.
Then for all $t \in[T],$
\be\label{empiricalBConse}
   \sup_{k\in [t]} \mE_{\J}[\mcE_{\bf z}(f_k)]  \leq 8 \mcE_{\bf z}(0) \sup_{k\in [t]} \left\{ {1 \over \eta_k k} \sum_{l=1}^k \eta_l\right\}.
\ee
\end{lemma}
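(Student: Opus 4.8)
The plan is to exploit the quadratic structure of the empirical risk together with the fact that every iterate stays in $\HKx$, and then run a self-bounding stochastic-gradient argument. Writing $\hat g_t = \frac1b\sum_{i=b(t-1)+1}^{bt}(f_t(x_{j_i})-y_{j_i})\Ptx K_{x_{j_i}}$ for the stochastic gradient of \eref{eq:alg1}, so that $f_{t+1}=f_t-\eta_t\hat g_t$, the first step is the \emph{self-bounding} estimate $\mE_{\J_t}\|\hat g_t\|_{\HK}^2\le \kappa^2\,\mcE_{\bf z}(f_t)$. Indeed $\|\Ptx K_{x_{j_i}}\|_{\HK}\le\|K_{x_{j_i}}\|_{\HK}\le\kappa$ by \eref{eq:HK}, so each summand has squared norm at most $\kappa^2(f_t(x_{j_i})-y_{j_i})^2$, and Jensen's inequality for the batch average yields $\kappa^2\cdot\frac1n\sum_{i=1}^n(f_t(x_i)-y_i)^2=\kappa^2\mcE_{\bf z}(f_t)$. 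Since $f_1=0\in\HKx$ and each update adds an element of $\HKx$, all $f_t\in\HKx$; thus, with $A:=\Ptx\TX\Ptx$ (so $\|A\|\le\kappa^2$ by \eref{eq:HK}) and $f^\ast$ a minimiser of $\mcE_{\bf z}$ over $\HKx$, the restricted risk is the quadratic $\mcE_{\bf z}(f)-\mcE_{\bf z}(f^\ast)=\|\sqrt{\TX}(f-f^\ast)\|_{\HK}^2=\la f-f^\ast,A(f-f^\ast)\ra$ for $f\in\HKx$, and the conditional mean gradient is $\mE_{\J_t}\hat g_t=A(f_t-f^\ast)$.

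Next I would unroll the recursion for $\tilde\delta_t:=f_t-f^\ast$. Writing $\xi_k:=\hat g_k-\mE_{\J_k}\hat g_k$ for the martingale-difference noise, $\tilde\delta_{t+1}=\Pi_1^t(A)\tilde\delta_1-\sum_{k=1}^t\eta_k\Pi_{k+1}^t(A)\xi_k$. Applying $\sqrt A$, taking total expectation, and using that the increments $\xi_k$ are conditionally centred (so all cross terms vanish), I obtain
\[
\mE[\mcE_{\bf z}(f_{t+1})]-\mcE_{\bf z}(f^\ast)=\big\|\sqrt A\,\Pi_1^t(A)\tilde\delta_1\big\|_{\HK}^2+\sum_{k=1}^t\eta_k^2\,\mE\la\xi_k,\,A\,\Pi_{k+1}^t(A)^2\xi_k\ra .
\]
Because $\Pi_1^t(A)^2\preceq I$, operator monotonicity bounds the first (``bias/decay'') term by $\la\tilde\delta_1,A\tilde\delta_1\ra=\mcE_{\bf z}(0)-\mcE_{\bf z}(f^\ast)\le\mcE_{\bf z}(0)$. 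For the variance sum I would bound the operator $A\,\Pi_{k+1}^t(A)^2$ spectrally by $\psi_k:=\sup_{0\le x\le\kappa^2}x\prod_{j=k+1}^t(1-\eta_j x)^2$ (legitimate since $\eta_j\kappa^2\le1/2$), and then invoke the self-bounding estimate $\mE\|\xi_k\|^2\le\mE\|\hat g_k\|^2\le\kappa^2\,E_k$, where $E_k:=\mE[\mcE_{\bf z}(f_k)]$. This yields the closed recursion $E_{t+1}\le \mcE_{\bf z}(0)+\kappa^2\sum_{k=1}^t\eta_k^2\psi_k E_k$.

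The final step is an induction on $t$ showing $\sup_{k\le t}E_k\le 8\,\mcE_{\bf z}(0)\sup_{k\le t}\Lambda_k$ with $\Lambda_k:=\frac1{\eta_k k}\sum_{l=1}^k\eta_l$; note $\Lambda_k\ge1$ since $\{\eta_l\}$ is non-increasing, so the additive $\mcE_{\bf z}(0)$ is itself $\le\mcE_{\bf z}(0)\Lambda_k$. Feeding the induction hypothesis into the recursion leaves the task of showing that the total variance weight $\kappa^2\sum_k\eta_k^2\psi_k$ stays below a constant smaller than $1$, which is exactly what assumption \eref{empriskBCon} supplies: after the reindexing $\sum_{k=t-i}^{t-1}\frac1{k(k+1)}=\frac{i}{t(t-i)}$ its left-hand side is the weighted step-size sum $\frac1{\eta_t}\sum_{i=1}^{t-1}\eta_i^2\frac{i}{t(t-i)}\le\frac1{4\kappa^2}$, and a geometric summation of the resulting contributions produces the constant $8$.

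The main obstacle is the passage from the recursion to the induction: the noise variance is itself controlled by the very quantity $\mcE_{\bf z}(f_k)$ we are bounding, so the recursion is genuinely self-referential, and closing it requires that the spectral attenuation factors $\eta_k^2\psi_k$ be dominated by the precise telescoping weights $\eta_i^2\,i/(t(t-i))$ in \eref{empriskBCon}. Obtaining a bound on $\sup_{x}x\prod_{j=k+1}^t(1-\eta_j x)^2$ sharp enough to realise the $\tfrac1{k(k+1)}$ telescoping (rather than a cruder $1/(t-k)$ decay), and then tracking constants through the induction so that the clean factor $8$ emerges uniformly in $t$ — hence validly for arbitrarily many passes — is the delicate part; everything else is operator-monotonicity bookkeeping of the kind already collected in Subsection \ref{subsec:pre}.
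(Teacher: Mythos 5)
Your structural work is sound, and it is a genuinely different route from the paper's: you unroll the quadratic recursion for $f_t-f^{\ast}$ inside $\HKx$ (with $A=\Ptx\TX\Ptx$), use conditional centering of the mini-batch noise to kill all cross terms, and arrive at the self-referential recursion $E_{t+1}\leq \mcE_{\bf z}(0)+\kappa^2\sum_{k=1}^{t}\eta_k^2\psi_k E_k$, with $\psi_k=\sup_{0\leq x\leq\kappa^2}x\prod_{j=k+1}^t(1-\eta_j x)^2$; those steps (the self-bounding gradient estimate, the first-order condition at the minimizer over $\HKx$, the martingale orthogonality, the bias bound by $\mcE_{\bf z}(0)$) all check out. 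The paper, by contrast, never touches spectral calculus here: it combines a Fej\'er-type convexity inequality (Lemma \ref{lemma:mfejer}), the last-iterate/weighted-average decomposition (Lemma \ref{lemma:empiricalRelat}), and an elementary induction (Lemma \ref{lemma:induction}) with $B=1/2$, which is where the factor $8=\frac{1}{1-1/2}\cdot 4$ comes from.

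The genuine gap is exactly the step you flag as ``delicate'' and then dispatch with ``a geometric summation of the resulting contributions produces the constant $8$'': to close your induction you must show $\kappa^2\sum_{k=1}^{t}\eta_k^2\psi_k\leq c$ for an absolute constant $c<1$ (about $7/8$), uniformly in $t\in[T]$, \emph{under the stated hypothesis} \eref{empriskBCon} — and no such derivation is given, nor does one follow by term-wise comparison. The mismatch is structural, not a matter of sharper estimation of $\psi_k$: one has $\psi_k \asymp \min\bigl(\kappa^2,\,(\sum_{j=k+1}^t\eta_j)^{-1}\bigr)$ (the exponential bound is tight up to absolute constants), so your variance sum weights $\eta_k^2$ by roughly $\min\bigl(\kappa^2,\,1/((t-k)\eta_t)\bigr)$, whereas \eref{empriskBCon}, rewritten as $\frac{1}{\eta_t}\sum_{k=1}^{t-1}\eta_k^2\frac{k}{t(t-k)}\leq\frac{1}{4\kappa^2}$, weights $\eta_k^2$ by $\frac{k}{t(t-k)\eta_t}$ — smaller by the factor $k/t$ for early iterations. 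So the hypothesis you are handed discounts precisely those terms ($k\ll t$) that your bound does not, and bridging the factor $t/k$ would require aggregating the hypothesis over earlier horizons $t'<t$, a substantive argument that the proposal does not contain (and which is unclear to me under \eref{empriskBCon} alone). This is no accident: the weights $\frac{1}{k(k+1)}$ in \eref{empriskBCon} are exactly the artifacts of the paper's telescoping in Lemma \ref{lemma:empiricalRelat}, i.e., the hypothesis is tailor-made for the paper's route. Your route would close readily under a stronger, explicit step-size condition such as \eref{etaRestri} with $\eta_t=\eta_1 t^{-\theta}$, where $\sum_{k<t}\eta_k^2/\sum_{j>k}\eta_j\lesssim \eta_1 t^{-\min(\theta,1-\theta)}(\log t+1)\leq 1/(8\kappa^2)$, but that is not what Lemma \ref{lemma:empriskB} assumes, and the lemma is invoked at that level of generality.
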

Now, we are ready to prove Proposition \ref{pro:compVar}.
\begin{proof}[Proof of Proposition \ref{pro:compVar}]
  Since $f_{t+1}$ and $g_{t+1}$ are given by \eref{eq:alg1} and \eref{eq:alg2}, respectively,
\begin{align*}
&f_{t+1} - g_{t+1} \\
=& (f_t - g_t) - \eta_t \left[ {1\over b} \sum_{i=b(t-1)+1}^{bt} (f_t(x_{j_i}) - y_{j_i}) \Ptx K_{x_{j_i}} - \Ptx (\TX g_t - \SX^* \mby)\right] \\
=& (I - \eta_t \Ptx \TX) (f_t - g_t) + {\eta_t \over b} \sum_{i=b(t-1)+1}^{bt} \left[ \Ptx (\TX f_t - \SX^* \mby) - (f_t(x_{j_i}) - y_{j_i}) \Ptx K_{x_{j_i}} \right].
\end{align*}
Note that both $f_t$ and $g_t$ belong to $\HKx$, we thus have $f_t - g_t = \Ptx(f_t - g_t).$ With $\Ptx = VV^*$, $\Ptx^2 = \Ptx,$ and the notation
\be\label{eq:Mk}
M_{t,i} =  \Ptx (\TX f_t - \SX^* \mby) - (f_t(x_{j_i}) - y_{j_i}) \Ptx K_{x_{j_i}} , \qquad i = b(t-1)+1, \cdots, bt,
\ee
we have
\bea
f_{t+1} - g_{t+1} = V(I - \eta_t V^* \TX V) V^* (f_t - g_t) + {\eta_t \over b} VV^*  \sum_{i=b(t-1)+1}^{bt} M_{t,i}.
\eea
Using this relationship iteratively, and introducing with $f_1=g_1=0,$ we get
\bea
f_{t+1} - g_{t+1} = {1 \over b} \sum_{k=1}^t  \sum_{i=b(k-1)+1}^{bk} \eta_k V \Pi_{k+1}^t(V^* \TX V) V^* M_{k,i}.
\eea
Here, we used $V^*V = I.$
Thus,
\begin{eqnarray}
&&\mE_{\J}\|f_{t+1} - g_{t+1}\|_{\rho}^2 \nonumber \\
&=& {1 \over b^2} \mE_{\J}\left\| \sum_{k=1}^t  \sum_{i=b(k-1)+1}^{bk} \eta_k V \Pi_{k+1}^t(V^* \TX V) V^* M_{k,i} \right\|_{\rho}^2 \nonumber \\
&=& {1 \over b^2} \sum_{k=1}^t  \sum_{i=b(k-1)+1}^{bk} \eta_k^2  \mE_{\J}\left\| V \Pi_{k+1}^t(V^* \TX V) V^* M_{k,i} \right\|_{\rho}^2 \label{eq:intertt},
\end{eqnarray}
where for the last equality, we use the fact that if $k\neq k',$ or $k=k'$ but $i\neq i'$\footnote{This is possible only when $b \geq 2$.}, then
\bea
\mE_{\J} \la V \Pi^t_{k+1}(V^*\TX V) V^* M_{k,i}, V \Pi^t_{k'+1}(V^*\TX V)V^* M_{k',i'} \ra_{\rho} = 0.
\eea
Indeed, if $k\neq k',$ without loss of generality, we consider the case $k< k'.$ Recalling that $M_{k,i}$ is given by \eref{eq:Mk} and that given any $\bf z$ (and $\tilde{\bf x}$), $f_{k}$ is depending only on $\J_1,\cdots,\J_{k-1},$
we thus have
\bea
&&\mE_{\J} \la V \Pi^t_{k+1}(V^* \TX V)V^* M_{k,i} , V \Pi^t_{k'+1}(V^* \TX V) V^* M_{k',i'} \ra_{\rho} \\
 &&= \mE_{\J_1,\cdots,\J_{k'-1}} \la V\Pi^t_{k+1}(V^*\TX V)V^* M_{k,i} , V \Pi^t_{l+1}(V^*\TX V) V^* \mE_{\J_{k'}}[M_{k',i'}] \ra_{\rho} = 0.
\eea
If $k=k'$ but $i\neq i',$
 without loss of generality, we assume $i<i'.$
By noting that $f_{k}$ is depending only on $\J_1,\cdots,\J_{k-1}$ and $M_{k,i}$ is depending only on $f_{k}$ and ${j_i}$ (given any sample ${\bf z}$),
\bea
&&\mE_{\J} \la V \Pi^t_{k+1}(V^* \TX V) V^* M_{k,i} , V \Pi^t_{k+1}(V^* \TX V) V^* M_{k,i'} \ra_{\rho} \\
 &&= \mE_{\J_1,\cdots,\J_{k-1}} \la V \Pi^t_{k+1}(V^* \TX V) V^* \mE_{j_i}[ M_{k,i}] , V \Pi^t_{l+1}(V^* \TX V) V^* \mE_{j_{i'}}[M_{k,i'}] \ra_{\rho} = 0.
\eea
Using the isometry property \eref{noiseExp},
\bea
&&\mE_{\J}\left\| V \Pi_{k+1}^t(V^* \TX V) V^* M_{k,i} \right\|_{\rho}^2 \\
&=& \mE_{\J}\left\|\sqrt{\TK} V \Pi_{k+1}^t(V^* \TX V) V^* M_{k,i} \right\|_{\HK}^2 \\
&\leq& \|\sqrt{\TK}\TKL^{-1/2}\|^2 \|\TKL^{1/2} \TXL^{-1/2}\|^2 \|\TXL^{1/2} V \Pi_{k+1}^t(V^* \TX V) V^*\|^2 \mE_{\J}\|M_{k,i}\|_{\HK}^2 \\
&\leq&  \|\TKL^{1/2} \TXL^{-1/2}\|^2 \|\TXL^{1/2} V \Pi_{k+1}^t(V^* \TX V) V^*\|^2 \mE_{\J}\|M_{k,i}\|_{\HK}^2.
\eea
Since for $k\leq t-1,$ by using $\|A\|^2 = \|A^*A\|$ for any bounded linear operator $A,$
\bea
\|\TXL^{1/2} V \Pi_{k+1}^t(V^* \TX V) V^*\|^2 &=& \|  V \Pi_{k+1}^t(V^* \TX V) V^* \TXL  V \Pi_{k+1}^t(V^* \TX V) V^*\| \\
&\leq& \|  V\| \| \Pi_{k+1}^t(V^* \TX V) V^* \TXL  V\| \| \Pi_{k+1}^t(V^* \TX V)\| \| V^*\|\\
&\leq& \| \Pi_{k+1}^t(V^* \TX V) V^* \TXL  V \|\\
&=& \| \Pi_{k+1}^t(V^* \TX V) (V^* \TX  V  + \lambda) \|\\
&\leq&  {1 \over \mathrm{e}\sum_{i=k+1}^t \eta_i}  + \lambda,
\eea
where for the last inequality, we used Lemma \ref{lem:initialerror}.
And by Assumption \eref{eq:HK},
\bea
\mE_{\J}\|M_{k,i}\|_{\HK}^2  &\leq& \mE_{\J}\|(f_k(x_{j_i}) - y_{j_i}) \Ptx K_{x_{j_i}} \|_{\HK}^2 \\
 &\leq& \kappa^2 \mE_{\J}(f_k(x_{j_i}) - y_{j_i})^2 \\
 &=&   \kappa^2 \mE_{\J} [\mcE_{\bf z}(f_k)].
\eea
From the above analysis, we get that for $k\leq t-1$,
\bea
&&\mE_{\J}\left\| V \Pi_{k+1}^t(V^* \TX V) V^* M_{k,i} \right\|_{\rho}^2 \\
&\leq&  \kappa^2 \mE_{\J} [\mcE_{\bf z}(f_k)] \|\TKL^{1/2} \TXL^{-1/2}\|^2 \left( {1 \over \mathrm{e}\sum_{i=k+1}^t \eta_i}  + \lambda\right),
\eea
while for $k=t$,
\bea
&&\mE_{\J}\left\| V \Pi_{k+1}^t(V^* \TX V) V^* M_{k,i} \right\|_{\rho}^2 \\&\leq& \|\TKL^{1/2} \TXL^{-1/2}\|^2\mE_{\J}\left\| M_{k,i} \right\|_{\HK}^2
\leq  \kappa^2 \mE_{\J} [\mcE_{\bf z}(f_k)] \|\TKL^{1/2} \TXL^{-1/2}\|^2.
\eea
Combining with \eref{eq:intertt}, we get
\bea
&&\mE_{\J}\|f_{t+1} - g_{t+1}\|_{\rho}^2 \\
&\leq & {\kappa^2 \over b}  \sup_{k\in [t]}\mE_{\J} [\mcE_{\bf z}(f_k)] \|\TKL^{1/2} \TXL^{-1/2}\|^2  \left(\sum_{k=1}^{t-1} {\eta_k^2\over \mathrm{e}\sum_{i=k+1}^t \eta_i} + \lambda \sum_{k=1}^{t-1} \eta_k^2 + \eta_t^2\right).
\eea
Applying Lemma \ref{lemma:empriskB} to the above, we can get the desired result.
The proof is complete.
\end{proof}
Letting $\eta_t = \eta_1 t^{-\theta}$ with $\theta\in[0,1[$ and some suitable  $\eta_1$ for all $t\in \mN$ in Proposition \ref{pro:compVar}, we can prove the following error bounds for the computational variance.
\begin{pro}\label{pro:comVar2}
  Let $\eta_t = \eta_1 t^{-\theta} $ for all $t \in [T],$ with $\theta \in [0,1[$ and
  \be\label{etaRestri}
  0<\eta_1 \leq {t^{\min(\theta, 1-\theta)} \over 8 \kappa^2 (\log t + 1)} , \qquad \forall t\in [T].
  \ee
  Let $\{g_t\}_{t}$ be given by \eref{eq:alg2} and $\lambda>0$. Then for all $t\in [T]$,
  \be\label{eq:compVar2}
\begin{split}&\mE_{\J}\|f_{t+1} - g_{t+1}\|_{\rho}^2 \\
\leq & {16\kappa^2 \mcE_{\bf z}(0)  \over (1-\theta)}   \left(1+ \lambda \eta_1 t^{1-\theta} \right){\eta_1 t^{-\min(\theta,1-\theta)}\over b } \log (3t)\|\TKL^{1/2} \TXL^{-1/2}\|^2.
\end{split}\ee
\end{pro}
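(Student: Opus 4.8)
The plan is to specialize Proposition \ref{pro:compVar} to the polynomial step-size $\eta_t = \eta_1 t^{-\theta}$, which requires two things: first checking that the hypotheses of that proposition (namely $\eta_1\kappa^2\le 1/2$ and the summability condition \eref{empriskBCon}) are implied by \eref{etaRestri}, and then estimating the two scalar quantities appearing in the bound \eref{eq:compVar1} --- the supremum $\sup_{k\in[t]}\{(\eta_k k)^{-1}\sum_{l=1}^k\eta_l\}$ and the bracketed sum $\sum_{k=1}^{t-1}\eta_k^2/(\mathrm{e}\sum_{i=k+1}^t\eta_i)+\lambda\sum_{k=1}^{t-1}\eta_k^2+\eta_t^2$ --- using the elementary estimates in Lemmas \ref{lem:estimate1}, \ref{lem:estimate1a} and \ref{lem:estimate2}. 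A recurring exponent identity that I would isolate at the outset and reuse throughout is $\theta-\min(2\theta,1)=-\min(\theta,1-\theta)$, equivalently $1-\theta-\min(\theta,1-\theta)=\max(1-2\theta,0)$, verified by the cases $\theta\le 1/2$ and $\theta>1/2$.

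For the hypotheses, taking $t=1$ in \eref{etaRestri} gives $\eta_1\le 1/(8\kappa^2)$, so $\eta_1\kappa^2\le 1/2$ holds. To verify \eref{empriskBCon} I would exchange the order of summation in $\sum_{k=1}^{t-1}\frac{1}{k(k+1)}\sum_{i=t-k}^{t-1}\eta_i^2$: for fixed $i\in[1,t-1]$ the index $k$ runs from $t-i$ to $t-1$, and the telescoping identity $\sum_{k=t-i}^{t-1}\frac1{k(k+1)}=\frac1{t-i}-\frac1t\le\frac1{t-i}$ collapses the inner sum, leaving $\eta_1^2\sum_{i=1}^{t-1}i^{-2\theta}/(t-i)$. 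Lemma \ref{lem:estimate2} with $q=2\theta$ bounds this by $2\eta_1^2 t^{-\min(2\theta,1)}(1+\log t)$; dividing by $\eta_t=\eta_1 t^{-\theta}$ and applying the exponent identity yields $2\eta_1 t^{-\min(\theta,1-\theta)}(1+\log t)$, which is exactly $\le 1/(4\kappa^2)$ once \eref{etaRestri} is substituted. This last step is where the precise shape of \eref{etaRestri} has been reverse-engineered so that the constant comes out to $1/(4\kappa^2)$, and it is the most delicate bookkeeping in the argument.

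Having secured \eref{eq:compVar1}, I would bound its scalar factors. For the supremum, Lemma \ref{lem:estimate1} gives $\sum_{l=1}^k\eta_l\le\eta_1 k^{1-\theta}/(1-\theta)$ while $\eta_k k=\eta_1 k^{1-\theta}$, so the supremum is at most $1/(1-\theta)$. For the bracketed sum I would treat the three pieces separately: using monotonicity of $\eta_i$ in the lower bound $\sum_{i=k+1}^t\eta_i\ge(t-k)\eta_t$ turns the first piece into $\eta_1 t^{\theta}\sum_{k=1}^{t-1}k^{-2\theta}/(t-k)$, which Lemma \ref{lem:estimate2} together with the exponent identity bounds by $2\eta_1 t^{-\min(\theta,1-\theta)}(1+\log t)$; Lemma \ref{lem:estimate1a} bounds $\lambda\sum_{k=1}^{t-1}\eta_k^2$ by $\lambda\eta_1^2 t^{\max(1-2\theta,0)}(1+\log t)=\lambda\eta_1 t^{1-\theta}\cdot\eta_1 t^{-\min(\theta,1-\theta)}(1+\log t)$; and $\eta_t^2=\eta_1^2 t^{-2\theta}\le\eta_1 t^{-\min(\theta,1-\theta)}$ since $\eta_1\le 1/2$ and the surplus exponent is nonpositive.

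Finally I would collect the three pieces, absorbing $1/\mathrm{e}$ against $2/\mathrm{e}<1$ on the first piece and using $1+\log t\le\log(3t)$ throughout, to bound the bracket by $(2+\lambda\eta_1 t^{1-\theta})\eta_1 t^{-\min(\theta,1-\theta)}\log(3t)\le 2(1+\lambda\eta_1 t^{1-\theta})\eta_1 t^{-\min(\theta,1-\theta)}\log(3t)$; multiplying by the prefactor $8\kappa^2\mcE_{\bf z}(0)/b$, the supremum bound $1/(1-\theta)$, and carrying along the operator-norm factor $\|\TKL^{1/2} \TXL^{-1/2}\|^2$ produces exactly \eref{eq:compVar2}. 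The only genuine obstacle is the verification of \eref{empriskBCon} together with the companion estimate of the first bracket term, since both hinge on the same double-sum and telescoping manipulation and on keeping the casework on whether $\theta\le 1/2$ or $\theta>1/2$ consistent for the exponents; the remaining steps are routine applications of the stated lemmas.
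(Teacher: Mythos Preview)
Your proposal is correct and follows essentially the same route as the paper's own proof: you verify the hypotheses of Proposition~\ref{pro:compVar} via the same telescoping/order-exchange argument and Lemma~\ref{lem:estimate2}, bound the supremum by $1/(1-\theta)$ via Lemma~\ref{lem:estimate1}, and handle the three pieces of the bracket with Lemmas~\ref{lem:estimate2} and~\ref{lem:estimate1a} plus the bound $\eta_t^2\le\eta_1 t^{-\min(\theta,1-\theta)}$, combining with $1+\log t\le\log(3t)$ and $2/\mathrm{e}+1\le 2$ to reach the constant $16$. Your explicit isolation of the exponent identity $\theta-\min(2\theta,1)=-\min(\theta,1-\theta)$ is a nice organizational touch that the paper leaves implicit.
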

Again the term $\left\|\TXL^{-1/2} \TKL^{1/2}\right\|$ can be well bounded for $\lambda$ that satisfies \eref{eq:concenTx}. The term $\mcE_{\bf z}(0)$ can be upper bounded by a constant by applying Lemma \ref{lem:sumY}.

\begin{proof}
  We will use Proposition \ref{pro:compVar} to prove the result.
  We first need to verify the condition \eref{empriskBCon}.
Note that
\bea
\sum_{k=1}^{t-1} {1 \over k(k+1)} \sum_{i=t-k}^{t-1} \eta_i^2 = \sum_{i=1}^{t-1} \eta_i^2 \sum_{k=t-i}^{t-1} {1 \over k(k+1)} = \sum_{i=1}^{t-1} \eta_i^2 \left( {1\over t-i} - {1 \over t} \right) \leq \sum_{i=1}^{t-1} {\eta_i^2 \over t-i}.
\eea
Substituting with $\eta_i = \eta_1 i^{-\theta},$ and by Lemma \ref{lem:estimate2},
\bea
\sum_{k=1}^{t-1} {1 \over k(k+1)} \sum_{i=t-k}^{t-1} \eta_i^2 \leq \eta_1^2 \sum_{i=1}^{t-1} {i^{-2\theta} \over t-i} \leq 2\eta_1^2 t^{-\min(2\theta,1)} (\log t +1).
\eea
Dividing both sides by $\eta_t $ ($= \eta_1 t^{-\theta}$), and then using \eref{etaRestri},
\bea
{1\over \eta_t }\sum_{k=1}^{t-1} {1 \over k(k+1)} \sum_{i=t-k}^{t-1} \eta_i^2 \leq 2\eta_1 t^{-\min(\theta,1-\theta)} (\log t +1) \leq {1 \over 4 \kappa^2}.
\eea
This verifies \eref{empriskBCon}. Note also that by taking $t= 1$ in \eref{etaRestri}, for all $t\in [T]$ ,
\bea
\eta_t \kappa^2 \leq \eta_1 \kappa^2 \leq {1 \over 8 \kappa^2} \leq {1 \over 2}.
\eea
 We thus can apply Proposition \ref{pro:compVar} to derive \eref{eq:compVar1}. What remains is to control the right-hand side of \eref{eq:compVar1}.
Since
\bea
\sum_{k=1}^{t-1}{\eta_{k}^2 \over \sum_{i=k+1}^t \eta_i} = \eta_1 \sum_{k=1}^{t-1}{k^{-2\theta} \over \sum_{i=k+1}^t i^{-\theta}} \leq \eta_1 \sum_{k=1}^{t-1}{k^{-2\theta} \over (t-k)t^{-\theta}},
\eea
combining with Lemma \ref{lem:estimate2},
\bea
\sum_{k=1}^{t-1}{\eta_{k}^2 \over \sum_{i=k+1}^t \eta_i} \leq 2  \eta_1 t^{-\min(\theta,1-\theta)} (\log t +1).
\eea
Also, by Lemma \ref{lem:estimate1},
\bea
{1 \over \eta_k k} \sum_{l=1}^k \eta_l = {1 \over k^{1-\theta}} \sum_{l=1}^k l^{-\theta} \leq {1 \over 1-\theta},
\eea
and by Lemma \ref{lem:estimate1a},
\bea
\sum_{k=1}^{t-1} \eta_{k}^2 = \eta_1^2 \sum_{k=1}^{t-1} k^{-2\theta} \leq \eta_1^2 t^{\max(1-2\theta,0)} (\log t +1).
\eea
Introducing the last three estimates into \eref{eq:compVar1} and using that $\eta_t^2 \kappa^2 \leq \eta_1 t^{-\min(\theta,1-\theta)}$ (implied by \eref{etaRestri}), we get
 \bea
&&\mE_{\J}\|f_{t+1} - g_{t+1}\|_{\rho}^2 \\
&\leq & {8\kappa^2 \mcE_{\bf z}(0)\over b(1-\theta)}   \left(2/\mathrm{e} + \lambda \eta_1 t^{1-\theta} +1\right)\eta_1 t^{-\min(\theta,1-\theta)} (\log t + 1)\|\TKL^{1/2} \TXL^{-1/2}\|^2 \\
&\leq & {16\kappa^2 \mcE_{\bf z}(0) \over b(1-\theta)}   \left(1 + \lambda \eta_1 t^{1-\theta} \right)\eta_1 t^{-\min(\theta,1-\theta)} \log (3t)\|\TKL^{1/2} \TXL^{-1/2}\|^2,
\eea
which leads to the desired result. The proof is complete.
\end{proof}

\subsection{Deriving Total Error Bounds}\label{subsec:der}
This subsection is devoted to deriving total error bounds for NySGM.

Combining Propositions \ref{pro:bias2}, \ref{pro:samVar2} and \ref{pro:comVar2} with the error decomposition \eref{eq:errorDecompos},
and then applying concentration inequalities from Lemmas \ref{lem:sampleErr}, \ref{lem:proErr} and \ref{lem:sumY}
to bound the related terms, we can prove the following total error bounds for NySGM.
\begin{thm}\label{thm:comp}
Under Assumptions \ref{as:noiseExp}, \ref{as:regularity} and \ref{as:eigenvalues}, let $\zeta\leq 1/2$,
\be\label{eq:num}
n \geq \left({16\kappa^2 p \over \|\TK\|} \log {8\kappa^2p \over \|\TK\| \delta} \right)^{p} + 32 \log^2 {2\over \delta}, \qquad p ={1+ {1 \over 2\zeta+\gamma}},
\ee
\bea
\mbox{and} \quad m \geq 8 \mcN_{\infty}(\lambda) \log {8 \kappa^2 \over \lambda \delta}, \quad \lambda = \|\TK\| n^{-{1\over 2\zeta+\gamma+1}}.
\eea
  Set $\eta_t = \eta_1 t^{-\theta} $ for all $t \in [T],$ with $\theta \in [0,1[$ and $\eta_1$ satisfying \eref{etaRestri}. Then with probability at least $1-4\delta$ ($\delta\in (0,1/4)$,) for all $t\in [T],$ we have
  \begin{align}
&\mE_{\J}[\mcE(f_{t+1})] - \mcE(\FH)
\leq q_2 \left(1+ \|\TK\| n^{-{1\over 2\zeta+\gamma+1}}\eta_1 t^{1-\theta} \right)^2 n^{-{2\zeta+1 \over 2\zeta+ \gamma+1}} \log^2{2 \over \delta}  \nonumber\\
&+ q_1 (\eta_1 t^{1-\theta})^{-(2\zeta+1)} + q_3  \left(1+ \|\TK\| n^{-{1\over 2\zeta+\gamma+1}}\eta_1t^{1-\theta} \right) \eta_1 b^{-1}  t^{-\min(\theta,1-\theta)} \log (3 t),\label{eq:bound}
\end{align}
where $q_1,q_2$ and $q_3$ are positive constants depending only on $\kappa,c_{\gamma},\theta,M, v,\zeta$ and $R$, and will be given explicitly in the proof.
\end{thm}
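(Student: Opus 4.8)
The plan is to assemble the three component estimates — the projected bias (Proposition \ref{pro:bias2}), the sample variance (Proposition \ref{pro:samVar2}) and the computational variance (Proposition \ref{pro:comVar2}) — through the error decomposition \eref{eq:errorDecompos}, after first reducing all the data-dependent operators appearing in those bounds to harmless $O(1)$ or $O(\sqrt{\lambda})$ factors by means of the concentration inequalities of Subsection \ref{subsec:pre}. I would fix $\lambda = \|\TK\| n^{-1/(2\zeta+\gamma+1)}$ once and for all, observe that $\lambda \leq \|\TK\|$ since $n \geq 1$, and note that the standing hypothesis \eref{etaRestri} at $t=1$ gives $\eta_1 \leq (8\kappa^2)^{-1}$, hence $\eta_1\kappa^2 \leq 1/8$ and $\eta_1 \leq \kappa^{-2}$, so that Propositions \ref{pro:bias2}, \ref{pro:samVar2} and \ref{pro:comVar2} all apply verbatim with the given $\theta$ and $\eta_1$.

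The core of the work is to control, on a single high-probability event, the recurring preconditioned quantities $\|\TKL^{1/2}\TXL^{-1/2}\|$, $\|\TXL^{1/2}\TKL^{-1/2}\|$ and $\|(I-\Ptx)\TKL^{1/2}\|$. First I would verify that \eref{eq:num} forces the Neumann-type bound \eref{eq:concenTx}, i.e. $\|\TKL^{-1/2}(\TK-\TX)\TKL^{-1/2}\| \leq 2/3$; taking $A=\TKL$, $B=\TXL$ in Lemma \ref{lemma:neumanns} then gives $\|\TKL^{1/2}\TXL^{-1/2}\|^2 = \|\TKL^{1/2}\TXL^{-1}\TKL^{1/2}\| \leq 3$, while $\|\TXL^{1/2}\TKL^{-1/2}\|^2 = \|I + \TKL^{-1/2}(\TX-\TK)\TKL^{-1/2}\| \leq 5/3$ is immediate. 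For the projection term I would invoke Lemma \ref{lemma:i-p} with $S=\STX^*$, whose range is exactly $\HKx$, and with its free parameter set to $m\lambda$; since $\STX^*\STX = m\TXt$ the factors of $m$ cancel, yielding $\|(I-\Ptx)\TKL^{1/2}\|^2 \leq \lambda\|(\TXt+\lambda I)^{-1/2}\TKL(\TXt+\lambda I)^{-1/2}\|$. The surviving factor is handled exactly as before but now through the subsampled bound \eref{eq:concenTXt} (valid under the stated $m \geq 8\mcN_{\infty}(\lambda)\log(8\kappa^2/(\lambda\delta))$) and Lemma \ref{lemma:neumanns} with $A=\TKL$, $B=\TXt+\lambda I$, giving $\|(\TXt+\lambda I)^{-1/2}\TKL(\TXt+\lambda I)^{-1/2}\| \leq 3$, hence $\|(I-\Ptx)\TKL^{1/2}\| \leq \sqrt{3\lambda}$ and $\|(I-\Ptx)\TKL^{1/2}\|^{2\zeta+1} \leq (3\lambda)^{\zeta+1/2}$.

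With these substitutions, the two remaining statistical quantities are furnished by Lemma \ref{lem:sampleErr}, which bounds $\|\TKL^{-1/2}(\TX\FH-\SX^*\mby)\|_{\HK}$ by a quantity of order $(\kappa(n\sqrt{\lambda})^{-1} + (n\lambda^{\gamma})^{-1/2})\log(2/\delta)$, and by Lemma \ref{lem:sumY}, which gives $\mcE_{\bf z}(0) \leq Mv$. The key arithmetic point is that for $\lambda=\|\TK\|n^{-1/(2\zeta+\gamma+1)}$ both $\lambda^{\zeta+1/2}$ and the dominant $(n\lambda^{\gamma})^{-1/2}$ term scale like $n^{-(2\zeta+1)/(2(2\zeta+\gamma+1))}$, so that upon squaring they produce the common rate $n^{-(2\zeta+1)/(2\zeta+\gamma+1)}$. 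Feeding everything into Propositions \ref{pro:bias2}, \ref{pro:samVar2}, \ref{pro:comVar2} and then into \eref{eq:errorDecompos} via $(a+b)^2 \leq 2a^2 + 2b^2$, I would collect the resulting terms into exactly three groups: the pure bias $(\eta_1 t^{1-\theta})^{-(2\zeta+1)}$ (the $q_1$ term); the projected variance together with the sample variance, both carrying the factor $(1+\lambda\eta_1 t^{1-\theta})^2 n^{-(2\zeta+1)/(2\zeta+\gamma+1)}\log^2(2/\delta)$ (the $q_2$ term); and the computational variance $(1+\lambda\eta_1 t^{1-\theta})\eta_1 b^{-1} t^{-\min(\theta,1-\theta)}\log(3t)$ (the $q_3$ term), which is precisely the shape of \eref{eq:bound}.

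Since the four inequalities used — \eref{eq:concenTx}, \eref{eq:concenTXt}, Lemma \ref{lem:sampleErr} and \eref{eq:sumY} — each hold with probability at least $1-\delta$, a union bound delivers the stated confidence $1-4\delta$ for $\delta\in(0,1/4)$. I expect the genuinely delicate step to be showing that \eref{eq:num} implies \eref{eq:concenTx}: substituting $\lambda=\|\TK\|n^{-1/(2\zeta+\gamma+1)}$ into the hypothesis of Remark \ref{rem:neuman} reduces it to $n^{(2\zeta+\gamma)/(2\zeta+\gamma+1)} \geq 8\kappa^2\|\TK\|^{-1}\log(8\kappa^2 n^{1/(2\zeta+\gamma+1)}/(\|\TK\|\delta))$, an inequality in which $n$ still sits inside the logarithm, and the exponent $p=1+1/(2\zeta+\gamma)$ together with the explicit lower bound in \eref{eq:num} is calibrated exactly to absorb this $\log n$ self-reference, while the additive $32\log^2(2/\delta)$ simultaneously secures the hypothesis $n\geq 32\log^2(2/\delta)$ of Lemma \ref{lem:sumY}. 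The remaining care is purely bookkeeping: checking that every constant generated above depends only on $\kappa, c_{\gamma}, \theta, M, v, \zeta, \gamma, \|\TK\|, R$ and not on $\eta_1, m, n, b, t$, so that the collected coefficients $q_1, q_2, q_3$ are universal in the sense claimed.
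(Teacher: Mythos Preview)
Your proposal is correct and follows essentially the same route as the paper: assemble Propositions \ref{pro:bias2}, \ref{pro:samVar2}, \ref{pro:comVar2} through the decomposition \eref{eq:errorDecompos}, invoke the four concentration inequalities \eref{eq:sampErr}, \eref{eq:concenTXt}, \eref{eq:concenTx}, \eref{eq:sumY} (yielding the $1-4\delta$ union bound), turn \eref{eq:concenTx} into $\|\TKL^{1/2}\TXL^{-1/2}\|^2\le 3$ and $\|\TXL^{1/2}\TKL^{-1/2}\|^2\le 5/3$ via Lemma \ref{lemma:neumanns}, combine Lemma \ref{lemma:i-p} with \eref{eq:concenTXt} to get $\|(I-\Ptx)\TKL^{1/2}\|^2\le 3\lambda$, and finally substitute $\lambda=\|\TK\|n^{-1/(2\zeta+\gamma+1)}$ and collect terms. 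Your careful handling of the $m$-scaling in Lemma \ref{lemma:i-p} and your identification of the only delicate point --- that \eref{eq:num} is calibrated precisely so that the self-referential inequality $n^{1/p}\ge 8\kappa^2\|\TK\|^{-1}\log(8\kappa^2 n^{1/(2\zeta+\gamma+1)}/(\|\TK\|\delta))$ required by Remark \ref{rem:neuman} holds --- match the paper exactly (the paper dispatches this via the elementary bound $\log x\le \log(1/(\mathrm{e}c))+cx$ with a suitable $c$).
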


\begin{proof}
First note that the maximum of the function $g(x) = \mathrm{e}^{-cx}x^{\zeta}$( with $c>0$) over $ \mR_+ $ is achieved at $x_{\max}= \zeta/c,$ and thus
  \be\label{exppoly}
  \sup_{x \geq 0} \mathrm{e}^{-cx} x^{\zeta} =  \left({\zeta \over \mathrm{e}c} \right)^{\zeta}.
  \ee
The condition $n \geq 8\kappa^2 \lambda^{-1} \log {8 \kappa^2 \over \lambda \delta}$ in Remark \ref{rem:neuman} is satisfied
when \eref{eq:num} holds. Indeed, with $\lambda= \|\TK\| n^{-{1\over 2\zeta+\gamma+1}},$
\bea
&&{8\kappa^2\over \|\TK\|} \log {8 \kappa^2 \over \lambda \delta} \\
&=& {8\kappa^2\over \|\TK\|(2\zeta+\gamma)} \log \left({8 \kappa^2 n^{1\over 2\zeta+\gamma+1} \over \|\TK\| \delta}\right)^{2\zeta+\gamma} \\
&\leq& {8\kappa^2\over \|\TK\|(2\zeta+\gamma)} \left( \log {1\over \mathrm{e}c} +  c \left({8 \kappa^2 n^{1\over 2\zeta+\gamma+1} \over \|\TK\| \delta}\right)^{2\zeta+\gamma} \right),
\eea
where for the last inequality,
we used \eref{exppoly} with $\zeta'=1$ and $x = \left({8 \kappa^2 n^{1\over 2\zeta+\gamma+1} \over \|\TK\| \delta}\right)^{2\zeta+\gamma}$ and $c >0.$
Choosing $c= \left( \|\TK\| \over 8\kappa^2 \right)^{2\zeta+\gamma+1} \delta^{2\zeta + \gamma} {2\zeta+\gamma \over 2},$ and by \eref{eq:num}, we get that
\bea
{8\kappa^2\over \|\TK\|} \log {8 \kappa^2 \over \lambda \delta} \leq {8\kappa^2 p\over \|\TK\|}\log {8\kappa^2 p \over \|\TK\|\delta }  +  {1\over 2}  n^{1\over p} \leq n^{1\over p},
\eea
which verifies the condition $n \geq 8\kappa^2 \lambda^{-1} \log {8 \kappa^2 \over \lambda \delta}$ in Remark \ref{rem:neuman}.
Now, we can apply Lemmas \ref{lem:sampleErr}, \ref{lem:proErr}, \ref{lem:sumY} and Remark \ref{rem:neuman}, to get that with probability at least $1-4\delta,$ \eref{eq:sampErr}, \eref{eq:concenTXt}, \eref{eq:concenTx} and \eref{eq:sumY} hold.

We next verify that \eref{eq:concenTx} implies that
\be\label{eq:diffeOperTx1}
\|\TXL^{1/2} \TKL^{-1/2}\|^2  \leq 5/3
\ee
and
\be\label{eq:diffeOperTx2}
\|\TXL^{-1/2} \TKL^{1/2}\|^2  \leq 3.
\ee
Indeed,
\bea
\|\TXL^{1/2} \TKL^{-1/2}\|^2
& =& \| \TKL^{-1/2} \TXL \TKL^{-1/2}\| \\
&\leq& \| \TKL^{-1/2} \TKL \TKL^{-1/2}\|  + \| \TKL^{-1/2} (\TX - \TK) \TKL^{-1/2}\|\\
&\leq & 1 + 2/3 = 5/3,
\eea
and by Lemma \ref{lemma:neumanns},
\bea
\|\TXL^{-1/2} \TKL^{1/2}\|^2
=  \|\TKL^{1/2} \TXL^{-1} \TKL^{1/2}\|
\leq 3.
\eea
Similarly, by Lemma \ref{lemma:neumanns}, we also see that \eref{eq:concenTXt} implies
\be\label{eq:diffeOperTTx2}
\| \TKL^{1/2} (\TTX  + \lambda I)^{-1} \TKL^{1/2} \| \leq 3.
\ee
 Note that $\Ptx$ is an orthogonal projection operator on the range of $\STX^*$.
 Thus, applying Lemma \ref{lemma:i-p},
 \bea
  \|(I- \Ptx)\TKL^{1/2}\|^2 &\leq& \lambda \|(\STX^*\STX + \lambda I)^{-1/2}\TKL(\STX^*\STX + \lambda I)^{-1/2}\| \\
  &=& \lambda\|\TKL^{1/2}(\TTX  + \lambda I)^{-1}\TKL^{1/2}\|.
 \eea
 Combining with \eref{eq:diffeOperTTx2},  we get
\be\label{eq:projErr}
  \|(I- \Ptx)\TKL^{1/2}\|^2\leq 3 \lambda.
 \ee

Introducing \eref{eq:bias}, \eref{eq:samVar2} and \eref{eq:compVar2} into the error decomposition \eref{eq:errorDecompos},
and then substituting with \eref{eq:diffeOperTx1}, \eref{eq:diffeOperTx2}, \eref{eq:projErr} and \eref{eq:sumY},
 and by noting that $\zeta \leq 1/2,$ with a simple calculation, we get,
\begin{multline*}
\mE_{\J}[\mcE(f_{t+1})] - \mcE(\FH)
\leq
 {48 M v \kappa^2 \over b(1-\theta)}   \left(1+ \lambda\eta_1 t^{1-\theta} \right) \eta_1 t^{-\min(\theta,1-\theta)} \log (3 t) \\
+ \left(3R (\eta_1 t^{1-\theta})^{-\zeta-{1\over2}} + 3R\lambda^{\zeta+{1\over 2}} + {10R \over 1-\theta} \lambda^{\zeta+1/2}\left(1+ \lambda\eta_1 t^{1-\theta} \right) \right.\\
\left. + {3\over 1-\theta}\left(1 + \lambda \eta_1 t^{1-\theta}\right) \|\TKL^{-1/2}(\TX \FH - \SX^* {\bf y})\|_{\HK} \right)^2.
\end{multline*}
Applying \eref{eq:sampErr} and $\lambda = \|\TK\| n^{-{1\over 2\zeta+\gamma+1}},$ which implies
\bea
&&\|\TKL^{-1/2}(\TX \FH - \SX^* {\bf y})\|_{\HK} \\
&\leq& 4 \left(\|\FH\|_{\infty} + \sqrt{M}\right) \left( \sqrt{ \kappa   \over \|\TK\| } + \sqrt{ \sqrt{v} c_{\gamma} \over \|\TK\|^{\gamma} } \right)  n^{-{\zeta+1/2 \over 2\zeta+ \gamma+1}} \log{2 \over \delta},
\eea
and by a simple inequality $(a+b)^2 \leq 2a^2 + 2b^2$, we get the desired result with
\bea
q_1 = 18R^2, q_3 = {48M v \kappa^2 \over 1-\theta}
\eea
and
\bea
q_2 = 2\left({13R\|\TK\|^{\zeta+1/2}\over 1-\theta }  + {12 \over 1-\theta} \left(\|\FH\|_{\infty} + \sqrt{M}\right) \left( \sqrt{ \kappa   \over \|\TK\| } + \sqrt{ \sqrt{v} c_{\gamma} \over \|\TK\|^{\gamma} } \right) \right)^2.
\eea
The proof is complete.
\end{proof}

With the above theorem, we are now ready to prove Theorem \ref{thm:simp}.
\begin{proof}
  [Proof of Theorem \ref{thm:simp}]
  Following from Theorem \ref{thm:comp}, with $\theta=0$, we get that with probability at least $1-\delta,$ for all $t\in [T]$, there holds
  \bea
\mE_{\J}[\mcE(f_{t+1})] - \mcE(\FH)
&\leq& q_1 (\eta t)^{-(2\zeta+1)} +¡¡q_2 \log^2{8 \over \delta} \left(1+ \|\TK\| n^{-{1\over 2\zeta+\gamma+1}}\eta t \right)^2 n^{-{2\zeta+1 \over 2\zeta+ \gamma+1}}   \\
&&+ q_3  \left(1+ \|\TK\| n^{-{1\over 2\zeta+\gamma+1}}\eta t \right) \eta b^{-1} \log (3 t).
\eea
Introducing with
\bea
&&\left(1+ \|\TK\| n^{-{1\over 2\zeta+\gamma+1}}\eta t \right)^2 n^{-{2\zeta+1\over 2\zeta+\gamma+1}}\\
&\leq& 2n^{-{2\zeta+1 \over 2\zeta+\gamma+1}} + 2\|\TK\|^2 \left(n^{-{1\over 2\zeta+\gamma+1}}\eta t\right)^2n^{-{2\zeta+1 \over 2\zeta+\gamma+1}} \\
&\leq&2 (\eta t)^{-(2\zeta+1)} + (2+ 2\|\TK\|^2) \left(n^{-{1\over 2\zeta+\gamma+1}}\eta t\right)^2n^{-{2\zeta+1 \over 2\zeta+\gamma+1}},
\eea
and by a simple calculation, one can get the desired result.
\end{proof}

\begin{proof}
  [Proof of Theorem \ref{thm:simple}]
  Using Corollary \ref{cor:2} with $\zeta=0$, one can prove the desired result.
\end{proof}
Following the proof of Theorem \ref{thm:comp}, we get the following result for any learning sequence generated by Algorithm \ref{alg:1} with any orthogonal projection operator $P$ that is predefined before the updating procedure, rather than $\Ptx$.
\begin{thm}\label{thm:genOper}
Under Assumptions \ref{as:noiseExp}, \ref{as:regularity} and \ref{as:eigenvalues}, let $\zeta\leq 1/2$ and \eref{eq:num}. Assume that the learning sequence $\{f_t\}_t$ is generated by Algorithm \ref{alg:1}, with any orthogonal projection operator $P$ that is predefined before the updating procedure.
Set $\eta_t = \eta_1 t^{-\theta} $ for all $t \in [T],$ with $\theta \in [0,1[$ and $\eta_1$ satisfying \eref{etaRestri}. Then with probability at least $1-3\delta$ ($\delta\in (0,1/3)$,) for all $t\in [T],$ we have
  \begin{gather*}
\mE_{\J}[\mcE(f_{t+1})] - \mcE(\FH)
\lesssim (\eta_1 t^{1-\theta})^{-(2\zeta+1)} +   \left(1+  n^{-{1\over 2\zeta+\gamma+1}}\eta_1t^{1-\theta} \right) \eta_1 b^{-1}  t^{-\min(\theta,1-\theta)} \log (t)  \\ +¡¡ \left(1+  n^{-{1\over 2\zeta+\gamma+1}}\eta_1 t^{1-\theta} \right)^2  \log^2{2 \over \delta} \left( n^{-{2\zeta+1 \over 2\zeta+ \gamma+1}} + \|(I - P)\TKL\|^{4\zeta+2}  \right),
\end{gather*}
where $\lambda = \|\TK\| n^{-{1\over 2\zeta+\gamma+1}}.$
\end{thm}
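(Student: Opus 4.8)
The plan is to reuse the argument of Theorem~\ref{thm:comp} essentially verbatim, the only structural changes being that the Nyström projection $\Ptx$ is replaced throughout by the arbitrary orthogonal projection $P$, and that the projected variance is left in the bound rather than being estimated away. First I would introduce the two auxiliary sequences adapted to $P$: the projected iteration $h_1=0$, $h_{t+1}=h_t-\eta_t P(\TX h_t-\TX\FH)$, and the sample iteration $g_1=0$, $g_{t+1}=g_t-\eta_t P(\TX g_t-\SX^*\mby)$. Writing $P=VV^*$, where $V$ is the isometric embedding of $\mathrm{range}(P)$ into $\HK$ (so that $V^*V=I$ and $V$ is a partial isometry), I would note that the error decomposition \eref{eq:errorDecompos} continues to hold: its proof uses only that $P$ is an orthogonal projection, that $\mE_{\J}[f_t]=g_t$, and that $f_t,g_t,h_t$ all lie in $\mathrm{range}(P)$, none of which is special to the subsampling construction.

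The key observation is that Propositions~\ref{pro:bias1}, \ref{pro:samVar} and \ref{pro:compVar}, together with their step-size specializations~\ref{pro:bias2}, \ref{pro:samVar2} and \ref{pro:comVar2}, never use any property of $\Ptx$ beyond $\Ptx=VV^*$ with $V^*V=I$ and the partial-isometry inequality of Lemma~\ref{lem:paroper}. Consequently all three bounds hold with $\Ptx$ replaced by $P$, and with the projected variance $\|(I-\Ptx)\TKL^{1/2}\|^{2\zeta+1}$ in the bias estimate replaced by $\|(I-P)\TKL^{1/2}\|^{2\zeta+1}$. Substituting these bounds into \eref{eq:errorDecompos}, I would reach exactly the intermediate inequality in the proof of Theorem~\ref{thm:comp}, except that $\|(I-P)\TKL^{1/2}\|^{2\zeta+1}$ now sits inside the projected-bias contribution uncontrolled.

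Next I would invoke the concentration machinery. Under \eref{eq:num} with $\lambda=\|\TK\|n^{-{1\over 2\zeta+\gamma+1}}$, Remark~\ref{rem:neuman} yields \eref{eq:concenTx}, Lemma~\ref{lem:sampleErr} yields \eref{eq:sampErr}, and Lemma~\ref{lem:sumY} yields \eref{eq:sumY}, each on an event of probability at least $1-\delta$. These three events control $\|\TKL^{1/2}\TXL^{-1/2}\|^2$, $\|\TXL^{-1/2}\TKL^{1/2}\|^2$, $\|\TKL^{-1/2}(\TX\FH-\SX^*\mby)\|_{\HK}$ and $\mcE_{\bf z}(0)$ exactly as before, and crucially none of them refers to $P$. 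This is precisely why the confidence weakens only to $1-3\delta$: the fourth event \eref{eq:concenTXt} of Lemma~\ref{lem:proErr}, used in Theorem~\ref{thm:comp} solely to turn $\|(I-\Ptx)\TKL^{1/2}\|^2$ into $3\lambda$ via Lemma~\ref{lemma:i-p}, is no longer invoked. Expanding the squared projected-bias term through $(a+b)^2\leq 2a^2+2b^2$ and absorbing constants into $\lesssim$ then gives the stated bound, with the sample term $n^{-{2\zeta+1\over 2\zeta+\gamma+1}}$ and the residual projected-variance term $\|(I-P)\TKL^{1/2}\|^{4\zeta+2}$ both appearing inside the final bracket.

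The main obstacle is the bookkeeping of the second paragraph: one must verify that every step in the proofs of Propositions~\ref{pro:bias1}, \ref{pro:samVar} and \ref{pro:compVar} — in particular the collapse of the $\TXL^{1/2}V(\cdots)V^*\TXL^{1/2}$ norms via $V^*V=I$ and Lemmas~\ref{lem:paroper} and \ref{lem:initialerror} — relies only on the partial-isometry structure of $V$ and not on the spectral decomposition of $\STX$. This is indeed the case, but it is the sole point at which the generalization could fail, so it merits an explicit check.
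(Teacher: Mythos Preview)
Your proposal is correct and is precisely what the paper does: the paper's own ``proof'' of Theorem~\ref{thm:genOper} is the single sentence ``Following the proof of Theorem~\ref{thm:comp}, we get the following result\ldots'', and you have simply spelled out what that means---rerunning Propositions~\ref{pro:bias2}, \ref{pro:samVar2}, \ref{pro:comVar2} with $P=VV^*$ in place of $\Ptx$, then applying the three concentration events \eref{eq:sampErr}, \eref{eq:concenTx}, \eref{eq:sumY} while omitting Lemma~\ref{lem:proErr}, which accounts both for the $1-3\delta$ confidence and for the residual $\|(I-P)\TKL^{1/2}\|^{4\zeta+2}$ term in the bound. Your caveat about checking that only the partial-isometry structure of $V$ is used in those propositions is the right point of care, and it holds.
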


Now let the orthogonal projection operator $P = \Ptx$, with $\mbtx$ sampled from the
training set using the approximate leverage scores (ALS) Nystr\"{o}m methods \cite{drineas2012fast,el2014fast,cohen2015uniform}. Combining with \cite[Lemma 7]{rudi2015less}, we get the following results for Algorithm \ref{alg:1} with ALS Nystr\"{o}m.

\begin{corollary}
  Under the same assumptions of Theorem \ref{thm:genOper}, let $P = \Ptx$, with $\mbtx$ sampled from the training set using  ALS Nystr\"{o}m methods. Then with probability at least $1-4\delta,$ ($0<\delta\leq 1/4$), \eref{eq:bound} holds, provided that
  \bea
  m \gtrsim n^{{\gamma \over 2\zeta+\gamma+1}} \log {n}.
  \eea
\end{corollary}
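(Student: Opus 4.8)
The plan is to specialize the general-projection bound of Theorem~\ref{thm:genOper} to the case $P=\Ptx$ with $\mbtx$ drawn by ALS Nyström sampling, and to control the single projection-dependent quantity appearing there, namely $\|(I-P)\TKL^{1/2}\|^{4\zeta+2}$ with $\lambda=\|\TK\|n^{-1/(2\zeta+\gamma+1)}$, by means of the leverage-score guarantee \cite[Lemma 7]{rudi2015less}. Since every other term in the bound of Theorem~\ref{thm:genOper} already coincides with the corresponding term of \eref{eq:bound}, it suffices to show that this projection term is of order $n^{-(2\zeta+1)/(2\zeta+\gamma+1)}$ under the stated subsampling level, so that it is absorbed into the sample-variance term of the same order.

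First I would invoke \cite[Lemma 7]{rudi2015less} at the regularization level $\lambda=\|\TK\|n^{-1/(2\zeta+\gamma+1)}$: for ALS Nyström sampling this yields, with probability at least $1-\delta$, a bound of the form $\|(I-P)\TKL^{1/2}\|^2\lesssim\lambda$, provided the number of subsampled points satisfies $m\gtrsim\mcN(\lambda)\log(1/\delta)$ up to the logarithmic factors intrinsic to the ALS construction. The essential point is that the ALS requirement is governed by the effective dimension $\mcN(\lambda)=\tr(\TK\TKL^{-1})$ rather than by $\mcN_{\infty}(\lambda)$; this is precisely what permits a smaller subsampling level than in the uniform-sampling analysis of Lemma~\ref{lem:proErr}, where $\mcN_{\infty}(\lambda)\leq\kappa^2\lambda^{-1}$ forces $m\gtrsim n^{1/(2\zeta+\gamma+1)}$.

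Next I would convert the effective-dimension requirement into the explicit subsampling level. Assumption~\ref{as:eigenvalues} gives $\mcN(\lambda)\leq c_{\gamma}\lambda^{-\gamma}$, and substituting $\lambda=\|\TK\|n^{-1/(2\zeta+\gamma+1)}$ yields $\mcN(\lambda)\lesssim n^{\gamma/(2\zeta+\gamma+1)}$. Hence the hypothesis $m\gtrsim n^{\gamma/(2\zeta+\gamma+1)}\log n$ is exactly what is needed to trigger the leverage-score bound. Feeding $\|(I-P)\TKL^{1/2}\|^2\lesssim\lambda$ back in gives $\|(I-P)\TKL^{1/2}\|^{4\zeta+2}\lesssim\lambda^{2\zeta+1}\simeq n^{-(2\zeta+1)/(2\zeta+\gamma+1)}$, so this summand is absorbed into the $n^{-(2\zeta+1)/(2\zeta+\gamma+1)}$ already present in the bound, and the estimate of Theorem~\ref{thm:genOper} collapses to \eref{eq:bound}. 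Finally, a union bound over the event of Theorem~\ref{thm:genOper} (probability $1-3\delta$) and the ALS event (probability $1-\delta$) produces the claimed confidence $1-4\delta$.

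The main obstacle is the correct application of the ALS guarantee: one has to verify that \cite[Lemma 7]{rudi2015less} indeed controls $\|(I-P)\TKL^{1/2}\|$ at the regularization level $\lambda=\|\TK\|n^{-1/(2\zeta+\gamma+1)}$ and that its subsampling requirement is genuinely dictated by $\mcN(\lambda)$, since once this $\lambda$-dependence of the projection error is in hand everything else reduces to the substitution and bookkeeping described above. A minor point to double-check is that the logarithmic factors produced by the ALS sampling scheme remain compatible with the single $\log n$ appearing in the subsampling hypothesis.
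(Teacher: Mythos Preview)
Your proposal is correct and follows exactly the approach the paper intends: it is precisely the combination of Theorem~\ref{thm:genOper} with \cite[Lemma~7]{rudi2015less}, where the latter supplies the projection bound $\|(I-\Ptx)\TKL^{1/2}\|^2\lesssim\lambda$ under a subsampling requirement governed by $\mcN(\lambda)$ rather than $\mcN_\infty(\lambda)$. The paper itself gives no further details beyond naming these two ingredients, so your write-up is in fact more complete than the paper's; your caveats about checking the precise form of \cite[Lemma~7]{rudi2015less} and the logarithmic factors are appropriate but do not affect the validity of the argument.
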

Consequently, one can easily prove Remark \ref{rem:theorem} (II). 

\subsection{SGM with Cross-Validation}\label{subsec:adaptive}
In this subsection, we introduce the cross-validation approach for tuning the step-size, and prove its statistical results.
We assume that $|y| \leq M$ and $|\FH(x)|\leq M$ almost surely for some $M>0.$
 Let $\bar{\bf z}=\{\bar{z}_1,\cdots, \bar{z}_{\bar{m}}\}$ be another sample,
 which we assume to be i.i.d. drawn from $\rho$ and moreover, is independently from ${\bf z}.$ Let $f_{{\bf z},\lambda,\J}(x) = \la \omega_{{\bf z},\lambda}, x \ra $, with $\omega_{{\bf z},\lambda}$ generated by the learning algorithm/procedure (\ref{alg:1}) with the constant step-size $\eta_t=\lambda$ and the index set $\J$.
 Let $\Lambda = \{\lambda_t\}_{t}$ be a finite set with $0<\lambda_t \leq \kappa^2$ for all $t$. Define the truncation operator $T_M:\LR \to \LR$ as
\bea
T_Mf (x) = (|f(x)|\wedge M) \sign f(x),
\eea
and the data-dependent choice parameter as
\bea
\hat{\lambda} = \argmin_{\lambda \in \Lambda} {1 \over \bar{m}}\sum_{i=1}^{\bar {m}} \left( T_M f_{{\bf z},\lambda,\J}(\bar{x_i}) - \bar{y}_i \right)^2.
\eea
The final learning estimator induced with the parameter $\hat{\lambda}$ in this subsection is given by
\bea
f^{tot} = T_M f_{{\bf z},{\hat{\lambda}},\J}.
\eea
Then \cite[Eq.(56)]{caponnetto2010cross} with probability at least $1-\delta,$
\bea
\|f^{tot} - \FH\|_{\rho}^2 \leq 2\|T_M f_{\bf z, \lambda^*,\J} - \FH\|_{\rho}^2 + {80M^2 \over \bar{m}} \log {2|\Lambda| \over \delta},
\eea
where
\bea
\lambda^* = \argmin_{\lambda \in \Lambda} \|T_M f_{{\bf z},{{\lambda}},\J} - \FH\|_{\rho}.
\eea
As a result, if $\eta \in \Lambda$ such that with probability at least $1-\delta,$
\bea
\mE_{\J}\|T_M f_{\bf z, \eta,\J} - \FH\|_{\rho}^2 \leq \epsilon,
\eea
then with probability at least $1-2\delta,$
\bea
\mE_{\J}\|f^{tot} - \FH\|_{\rho}^2 \leq 2\epsilon + {80M^2 \over \bar{m}} \log {2|\Lambda| \over \delta}.
\eea


\bibliography{sigd}

\begin{thebibliography}{10}

\bibitem{el2014fast}
Ahmed Alaoui and Michael~W Mahoney.
\newblock Fast randomized kernel ridge regression with statistical guarantees.
\newblock In {\em Advances in Neural Information Processing Systems}, pages
  775--783, 2015.

\bibitem{bauer2007regularization}
Frank Bauer, Sergei Pereverzev, and Lorenzo Rosasco.
\newblock On regularization algorithms in learning theory.
\newblock {\em Journal of Complexity}, 23(1):52--72, 2007.

\bibitem{blanchard2010optimal}
Gilles Blanchard and Nicole Kr{\"a}mer.
\newblock Optimal learning rates for kernel conjugate gradient regression.
\newblock In {\em Advances in Neural Information Processing Systems}, pages
  226--234, 2010.

\bibitem{camoriano2016nytro}
Raffaello Camoriano, Tomas Angles, Alessandro Rudi, and Lorenzo Rosasco.
\newblock {NYTRO}: When subsampling meets early stopping.
\newblock In {\em International Conference on Artificial Intelligence and
  Statistics}, pages 1403--1411, 2016.

\bibitem{caponnetto2007optimal}
Andrea Caponnetto and Ernesto De~Vito.
\newblock Optimal rates for the regularized least-squares algorithm.
\newblock {\em Foundations of Computational Mathematics}, 7(3):331--368, 2007.

\bibitem{caponnetto2010cross}
Andrea Caponnetto and Yuan Yao.
\newblock Cross-validation based adaptation for regularization operators in
  learning theory.
\newblock {\em Analysis and Applications}, 8(02):161--183, 2010.

\bibitem{cesa-bianchi2004}
Nicolo Cesa-Bianchi, Alex Conconi, and Claudio Gentile.
\newblock On the generalization ability of on-line learning algorithms.
\newblock {\em IEEE Transactions on Information Theory}, 50(9):2050--2057,
  2004.

\bibitem{cohen2015uniform}
Michael~B Cohen, Yin~Tat Lee, Cameron Musco, Christopher Musco, Richard Peng,
  and Aaron Sidford.
\newblock Uniform sampling for matrix approximation.
\newblock In {\em Proceedings of the 2015 Conference on Innovations in
  Theoretical Computer Science}, pages 181--190. ACM, 2015.

\bibitem{cortes2010impact}
Corinna Cortes, Mehryar Mohri, and Ameet Talwalkar.
\newblock On the impact of kernel approximation on learning accuracy.
\newblock In {\em Proceedings of the 13th International Workshop on Artificial
  Intelligence and Statistics}, pages 113--120, 2010.

\bibitem{cucker2007learning}
Felipe Cucker and Ding-Xuan Zhou.
\newblock {\em Learning Theory: an Approximation Theory Viewpoint}, volume~24.
\newblock Cambridge University Press, 2007.

\bibitem{dieuleveut2014non}
Aymeric Dieuleveut and Francis Bach.
\newblock Non-parametric stochastic approximation with large step sizes.
\newblock {\em Annals of Statistics}, 44(4):1363--1399, 2016.

\bibitem{dieuleveut2016harder}
Aymeric Dieuleveut, Nicolas Flammarion, and Francis Bach.
\newblock Harder, better, faster, stronger convergence rates for least-squares
  regression.
\newblock {\em arXiv preprint arXiv:1602.05419}, 2016.

\bibitem{drineas2012fast}
Petros Drineas, Malik Magdon-Ismail, Michael~W Mahoney, and David~P Woodruff.
\newblock Fast approximation of matrix coherence and statistical leverage.
\newblock {\em Journal of Machine Learning Research}, 13(Dec):3475--3506, 2012.

\bibitem{engl1996regularization}
Heinz~Werner Engl, Martin Hanke, and Andreas Neubauer.
\newblock {\em Regularization of inverse problems}, volume 375.
\newblock Springer Science \& Business Media, 1996.

\bibitem{friedman2001elements}
Jerome Friedman, Trevor Hastie, and Robert Tibshirani.
\newblock {\em The Elements of Statistical Learning}, volume~1.
\newblock Springer series in statistics Springer, Berlin, 2001.

\bibitem{furuta1989norm}
Takayuki Furuta.
\newblock Norm inequalities equivalent to l{\"o}wner-heinz theorem.
\newblock {\em Reviews in Mathematical Physics}, 1(01):135--137, 1989.

\bibitem{gittens2013revisiting}
Alex Gittens and Michael~W Mahoney.
\newblock Revisiting the {N}ystr{\"o}m method for improved large-scale machine
  learning.
\newblock {\em Journal of Machine Learning Research}, 28(3):567--575, 2013.

\bibitem{hsu2012random}
Daniel Hsu, Sham~M Kakade, and Tong Zhang.
\newblock Random design analysis of ridge regression.
\newblock In {\em Conference on Learning Theory}, pages 9--1, 2012.

\bibitem{jin2013improved}
Rong Jin, Tianbao Yang, Mehrdad Mahdavi, Yu-Feng Li, and Zhi-Hua Zhou.
\newblock Improved bounds for the {N}ystr{\"o}m method with application to
  kernel classification.
\newblock {\em IEEE Transactions on Information Theory}, 59(10):6939--6949,
  2013.

\bibitem{kumar2012sampling}
Sanjiv Kumar, Mehryar Mohri, and Ameet Talwalkar.
\newblock Sampling methods for the {N}ystr{\"o}m method.
\newblock {\em Journal of Machine Learning Research}, 13(Apr):981--1006, 2012.

\bibitem{lin2016optimal}
Junhong Lin and Lorenzo Rosasco.
\newblock Optimal rates for multi-pass stochastic gradient methods.
\newblock {\em Journal of Machine Learning Research}, 18(97):1--47, 2017.

\bibitem{lin2015iterative}
Junhong Lin, Lorenzo Rosasco, and Ding-Xuan Zhou.
\newblock Iterative regularization for learning with convex loss functions.
\newblock {\em Journal of Machine Learning Research}, 17(77):1--38, 2016.

\bibitem{lu2016large}
Jing Lu, Steven~CH Hoi, Jialei Wang, Peilin Zhao, and Zhi-Yong Liu.
\newblock Large scale online kernel learning.
\newblock {\em Journal of Machine Learning Research}, 17:1--43, 2016.

\bibitem{ma2017diving}
Siyuan Ma and Mikhail Belkin.
\newblock Diving into the shallows: a computational perspective on large-scale
  shallow learning.
\newblock {\em arXiv preprint arXiv:1703.10622}, 2017.

\bibitem{mathe2002moduli}
Peter Math{\'e} and Sergei~V Pereverzev.
\newblock Moduli of continuity for operator valued functions.
\newblock 2002.

\bibitem{minsker2011some}
Stanislav Minsker.
\newblock On some extensions of bernstein's inequality for self-adjoint
  operators.
\newblock {\em arXiv preprint arXiv:1112.5448}, 2011.

\bibitem{pinelis1986remarks}
IF~Pinelis and AI~Sakhanenko.
\newblock Remarks on inequalities for large deviation probabilities.
\newblock {\em Theory of Probability \& Its Applications}, 30(1):143--148,
  1986.

\bibitem{polyak1992acceleration}
Boris~T Polyak and Anatoli~B Juditsky.
\newblock Acceleration of stochastic approximation by averaging.
\newblock {\em SIAM Journal on Control and Optimization}, 30(4):838--855, 1992.

\bibitem{rahimi2007random}
Ali Rahimi, Benjamin Recht, et~al.
\newblock Random features for large-scale kernel machines.
\newblock In {\em Advances in Neural Information Processing Systems}, volume~3,
  page~5, 2007.

\bibitem{raskutti2014early}
Garvesh Raskutti, Martin~J Wainwright, and Bin Yu.
\newblock Early stopping and non-parametric regression: An optimal
  data-dependent stopping rule.
\newblock {\em Journal of Machine Learning Research}, 15(1):335--366, 2014.

\bibitem{robbins1951stochastic}
Herbert Robbins and Sutton Monro.
\newblock A stochastic approximation method.
\newblock {\em The Annals of Mathematical Statistics}, pages 400--407, 1951.

\bibitem{rosasco2015learning}
Lorenzo Rosasco and Silvia Villa.
\newblock Learning with incremental iterative regularization.
\newblock In {\em Advances in Neural Information Processing Systems}, pages
  1621--1629, 2015.

\bibitem{rudi2015less}
Alessandro Rudi, Raffaello Camoriano, and Lorenzo Rosasco.
\newblock Less is more: Nystr{\"o}m computational regularization.
\newblock In {\em Advances in Neural Information Processing Systems}, pages
  1648--1656, 2015.

\bibitem{schmidt2013minimizing}
Mark Schmidt, Nicolas Le~Roux, and Francis Bach.
\newblock Minimizing finite sums with the stochastic average gradient.
\newblock {\em Mathematical Programming}, 162(1-2):83--112, 2017.

\bibitem{shamir2013stochastic}
Ohad Shamir and Tong Zhang.
\newblock Stochastic gradient descent for non-smooth optimization: Convergence
  results and optimal averaging schemes.
\newblock In {\em International Conference on Machine Learning}, pages 71--79,
  2013.

\bibitem{shawe2004kernel}
John Shawe-Taylor and Nello Cristianini.
\newblock {\em Kernel methods for pattern analysis}.
\newblock Cambridge university press, 2004.

\bibitem{smale2007learning}
Steve Smale and Ding-Xuan Zhou.
\newblock Learning theory estimates via integral operators and their
  approximations.
\newblock {\em Constructive Approximation}, 26(2):153--172, 2007.

\bibitem{smola2000sparse}
AJ~Smola, B~Sch{\"o}lkopf, and P~Langley.
\newblock Sparse greedy matrix approximation for machine learning.
\newblock In {\em International Conference on Machine Learning}, pages
  911--911, 2000.

\bibitem{steinwart2008support}
Ingo Steinwart and Andreas Christmann.
\newblock {\em Support Vector Machines}.
\newblock Springer Science Business Media, 2008.

\bibitem{steinwart2009optimal}
Ingo Steinwart, Don~R Hush, Clint Scovel, et~al.
\newblock Optimal rates for regularized least squares regression.

\bibitem{tarres2014online}
Pierre Tarres and Yuan Yao.
\newblock Online learning as stochastic approximation of regularization paths:
  Optimality and almost-sure convergence.
\newblock {\em IEEE Transactions on Information Theory}, 60(9):5716--5735,
  2014.

\bibitem{tropp2012user}
Joel~A Tropp.
\newblock User-friendly tools for random matrices: An introduction.
\newblock Technical report, DTIC Document, 2012.

\bibitem{Tsybakov2008introduction}
Alexandre~B. Tsybakov.
\newblock {\em Introduction to Nonparametric Estimation}.
\newblock Springer Publishing Company, Incorporated, 1st edition, 2008.

\bibitem{williams2000using}
Christopher~KI Williams and Matthias Seeger.
\newblock Using the {N}ystr{\"o}m method to speed up kernel machines.
\newblock In {\em Advances in Neural Information Processing Systems}, pages
  661--667. MIT press, 2000.

\bibitem{yang2015randomized}
Yun Yang, Mert Pilanci, Martin~J Wainwright, et~al.
\newblock Randomized sketches for kernels: Fast and optimal nonparametric
  regression.
\newblock {\em The Annals of Statistics}, 45(3):991--1023, 2017.

\bibitem{yao2006dynamic}
Yuan Yao.
\newblock {\em A dynamic Theory of Learning}.
\newblock PhD thesis, University of California, Berkeley, 2006.

\bibitem{yao2007early}
Yuan Yao, Lorenzo Rosasco, and Andrea Caponnetto.
\newblock On early stopping in gradient descent learning.
\newblock {\em Constructive Approximation}, 26(2):289--315, 2007.

\bibitem{ying2008online}
Yiming Ying and Massimiliano Pontil.
\newblock Online gradient descent learning algorithms.
\newblock {\em Foundations of Computational Mathematics}, 8(5):561--596, 2008.

\bibitem{zhang2004solving}
Tong Zhang.
\newblock Solving large scale linear prediction problems using stochastic
  gradient descent algorithms.
\newblock In {\em International Conference on Machine learning}, page 116. ACM,
  2004.

\bibitem{zhang2005learning}
Tong Zhang.
\newblock Learning bounds for kernel regression using effective data
  dimensionality.
\newblock {\em Neural Computation}, 17(9):2077--2098, 2005.

\bibitem{zhang2005boosting}
Tong Zhang and Bin Yu.
\newblock Boosting with early stopping: convergence and consistency.
\newblock {\em Annals of Statistics}, pages 1538--1579, 2005.

\end{thebibliography}
\bibliographystyle{plain}
\appendix
\section{Proofs for Subsection \ref{subsec:pre}}

\begin{lemma}
  \label{lem:Bernstein}
  Let $w_1,\cdots,w_m$ be i.i.d random variables in a Hilbert space with norm $\|\cdot\|$. Suppose that
   there are two positive constants $B$ and $\sigma^2$ such that
   \be\label{bernsteinCondition}
   \mE [\|w_1 - \mE[w_1]\|^l] \leq {1 \over 2} l! B^{l-2} \sigma^2, \quad \forall l \geq 2.
   \ee
   Then for any $0< \delta <1$, the following holds with probability at least $1-\delta$,
  $$ \left\| {1 \over m} \sum_{k=1}^m w_m - \mE[w_1] \right\| \leq 2\left( {B \over m} + {\sigma \over \sqrt{ m }} \right) \log {2 \over \delta} .$$
In particular, \eref{bernsteinCondition} holds if
\be\label{bernsteinConditionB}
\|w_1\| \leq B/2 \ \mbox{ a.s.}, \quad \mbox{and } \mE [\|w_1\|^2] \leq \sigma^2.
\ee
\end{lemma}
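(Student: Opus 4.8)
The plan is to reduce the statement to the classical Bernstein inequality for Hilbert-space-valued random variables and then carry out an elementary inversion of the exponential tail bound. Concretely, I would first invoke the tail-probability form of the vector-valued Bernstein inequality: under the moment hypothesis \eref{bernsteinCondition}, for every $\epsilon>0$,
\be
\mathbb{P}\left( \left\| {1\over m}\sum_{k=1}^m w_k - \mE[w_1] \right\| \geq \epsilon \right) \leq 2\exp\left( - {m\epsilon^2 \over 2(\sigma^2 + B\epsilon)} \right).
\ee
This is the standard Hilbert-space analogue of the scalar Bernstein inequality (see, e.g., \cite{caponnetto2007optimal} and the references therein); its proof proceeds through a Chernoff/moment-generating-function argument exploiting the $2$-smoothness of the Hilbert norm. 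I regard establishing this exponential tail as the only substantive ingredient, and the single step on which I would lean on the literature rather than reprove from scratch.

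Granting the tail bound, I would convert it into the stated high-probability form. Writing $L = \log(2/\delta)$, the event inside the probability has mass at most $\delta$ as soon as $m\epsilon^2 - 2LB\epsilon - 2L\sigma^2 \geq 0$. Solving this quadratic and bounding its positive root by $\sqrt{a+b}\leq \sqrt a + \sqrt b$ gives
\be
\epsilon_* = {LB + \sqrt{L^2B^2 + 2mL\sigma^2} \over m} \leq {2LB \over m} + \sigma\sqrt{2L \over m}.
\ee
Since $\delta<1$ forces $2L = 2\log(2/\delta) > 2\log 2 > 1$, we have $\sqrt{2L}\leq 2L$, hence $\sigma\sqrt{2L/m} \leq 2\sigma L/\sqrt m$ and therefore $\epsilon_* \leq 2(B/m + \sigma/\sqrt m)L$. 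Choosing $\epsilon$ equal to this last quantity makes the quadratic nonnegative, so the tail bound applies with right-hand side at most $\delta$, and the complementary event has probability at least $1-\delta$; this is exactly the claimed inequality.

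Finally, for the ``in particular'' clause I would verify that \eref{bernsteinConditionB} implies \eref{bernsteinCondition} by a routine moment estimate. From $\|w_1\|\leq B/2$ almost surely one gets $\|w_1 - \mE[w_1]\| \leq \|w_1\| + \mE\|w_1\| \leq B$ almost surely, whence $\|w_1-\mE[w_1]\|^l \leq B^{l-2}\|w_1-\mE[w_1]\|^2$ for $l\geq 2$. Taking expectations and using the variance bound $\mE\|w_1-\mE[w_1]\|^2 = \mE\|w_1\|^2 - \|\mE[w_1]\|^2 \leq \mE\|w_1\|^2 \leq \sigma^2$ together with ${1\over 2}l! \geq 1$ yields $\mE\|w_1-\mE[w_1]\|^l \leq B^{l-2}\sigma^2 \leq {1\over 2}l! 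B^{l-2}\sigma^2$, as required. Thus the genuine difficulty is entirely concentrated in the Hilbert-space Bernstein tail bound; once that is granted, both the inversion and the sufficiency of \eref{bernsteinConditionB} are elementary and self-contained.
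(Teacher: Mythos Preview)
Your proposal is correct and matches the paper's approach: the paper itself does not prove this lemma but simply refers to \cite{caponnetto2007optimal} (and ultimately \cite{pinelis1986remarks}) for the Hilbert-space Bernstein tail bound, which is exactly the ingredient you isolate as the one substantive step. Your explicit inversion of the exponential tail and your verification that \eqref{bernsteinConditionB} implies \eqref{bernsteinCondition} are both routine and correctly carried out, and in fact supply more detail than the paper gives.
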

\begin{proof}
  We refer to \cite{caponnetto2007optimal} for a proof, which is based on the results in \cite{pinelis1986remarks}.
\end{proof}

\begin{proof}
  [Proof of Lemma \ref{lem:sampleErr}]
  The proof can be also found in \cite{lin2016optimal}.
  We will use Lemma \ref{lem:Bernstein} to prove this result.
  For all $i \in [n],$ let $w_i =   (\FH(x_i) - y_i) \TKL^{-{1\over 2}} K_{x_i}.$
Obviously,
\bea
(\TK + \lambda)^{-{1\over 2}}\left(\TX \FH - \SX^*{\bf y}\right) =  {1 \over n} \sum_{i=1}^n  w_i.
\eea
and from the definitions of $f_{\rho}$ (see \eref{regressionfunc}) and $\IK$, for any $\omega=\omega_i, i \in [n],$
\bea
\mE[w] = \mE_{x} [(\FH(x) - f_{\rho}(x)) \TKL^{-{1\over 2}} K_{x}] = \TKL^{-{1\over 2}} (\TK \FH - \IK^* f_{\rho}) = 0,
\eea
where for the last equality, we used \eref{eq:embeddingEqua}.
We next estimate the constants $B$ and $\sigma^2(w_1)$ in \eref{bernsteinCondition}.
Note that for any $l \geq 2,$ by using H\"{o}lder's inequality twice,
\bea
\int_Y (\FH(x) - y)^l d\rho(y|x) &\leq& \int_Y  2^{l-1} (|\FH(x)|^l + |y|^l) d\rho(y|x) \\
&\leq& 2^{l-1}  \left(\|\FH\|_{\infty}^l + \left(\int_{Y} |y|^{2l} d\rho(y|x)\right)^{1\over 2} \right) \\
&\leq& 2^{l-1} \left(\|\FH\|_{\infty}^l + \sqrt{l! M^{l} v} \right) \\
&\leq& {1 \over 2}  l! \left(2\|\FH\|_{\infty} + 2\sqrt{M}\right)^{l} \sqrt{  v},
\eea
where for the third inequality, we used Assumption \ref{as:noiseExp}, and $\sqrt{l!} \leq l!, a^l + b^l \leq (a+b)^l, \forall a,b \in \mR^+$, $v\geq1$ for the last inequality.
Thus,
\bea
\mE[\|w\|_{\HK}^l]
&=& \int_X \|\TKL^{-{1\over 2}} K_x\|^l \int_Y (\FH(x) - y)^l d\rho(y|x)  d\rho_X(x) \\
&\leq&  {1 \over 2}  l! \left(2\|\FH\|_{\infty} + 2\sqrt{M}\right)^{l} \sqrt{  v} \int_X  \|\TKL^{-{1\over 2}} K_x\|^l d \rho_X(x).
\eea
Using Assumption \eref{eq:HK} which imples
$$\|\TKL^{-{1\over 2}} K_x\|_{\HK} \leq {\|K_x\|_{\HK} \over \sqrt{\lambda}} \leq {\kappa \over \sqrt{\lambda}},$$
we get that
\bea
\mE[\|w\|_{\HK}^l]
 \leq
 {1 \over 2}  l! \left(2\|\FH\|_{\infty} + 2\sqrt{M}\right)^{l} \sqrt{  v} \left({\kappa \over \sqrt{\lambda}}\right)^{l-2}  \int_{X} \|\TKL^{-{1\over 2}} K_x\|_{\HK}^2 d\rho_{X}(x).
\eea
Using the fact that $\|\TKL^{-{1\over 2}} K_x\|_{\HK}^2 = \la \TKL^{-{1\over 2}} K_x, \TKL^{-{1\over 2}} K_x  \ra_{\HK} = \la \TKL^{-1} K_x, K_x  \ra_{\HK}$
we know that
\bea
\int_{X} \|\TKL^{-{1\over 2}} K_x\|_{\HK}^2 d\rho_{X}(x) = \mcN(\lambda) \leq c_{\gamma} \lambda^{-\gamma},
\eea
where for the last inequality, we used Assumption \ref{as:eigenvalues}.
Therefore,
\bea
\mE[\|w\|_{\HK}^l]
 \leq {1 \over 2}  l!  \left({2\kappa(\|\FH\|_{\infty} +\sqrt{M}) \over \sqrt{\lambda}}\right)^{l-2} \left(2\|\FH\|_{\infty} + 2\sqrt{M}\right)^{2} \sqrt{v} c_{\gamma} \lambda^{-\gamma}.
\eea
Applying Berstein inequality from Lemma \ref{lem:Bernstein} with $$B= {2\kappa(\|\FH\|_{\infty} +\sqrt{M}) \over \sqrt{\lambda}} \quad\mbox{and} \quad \sigma = 2\left(\|\FH\|_{\infty} + \sqrt{M}\right) \sqrt{ \sqrt{v} c_{\gamma} \lambda^{-\gamma} },$$ we get the desired result.
\end{proof}

\begin{lemma}
\label{lem:concentrSelfAdjoint}
  Let $\mcX_1, \cdots, \mcX_m$ be a sequence of independently and identically distributed self-adjoint Hilbert-Schmidt operators on a separable Hilbert space.
  Assume that $\mE [\mcX_1] = 0,$ and $\|\mcX_1\| \leq B$ almost surely for some $B>0$. Let $\mathcal{V}$ be a positive trace-class operator such that $\mE[\mcX_1^2] \preccurlyeq \mathcal{V}.$
Then with probability at least $1-\delta,$ ($\delta \in ]0,1[$), there holds
\bea
\left\| {1 \over m} \sum_{i=1}^m \mcX_i \right\| \leq {2B \beta \over 3m} + \sqrt{2\|\mathcal{V}\|\beta \over m }, \qquad \beta = \log {4 \tr \mathcal{V} \over \|\mathcal{V}\|\delta}.
\eea
\end{lemma}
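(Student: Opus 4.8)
The plan is to recognize this as the intrinsic-dimension (effective-rank) form of the operator Bernstein inequality, and to obtain the stated high-probability bound by inverting a tail estimate, in contrast with the vector-valued Lemma \ref{lem:Bernstein} where the summands commute. First I would normalize: set $\mcX_i' = \mcX_i/m$, so that $\mE[\mcX_i'] = 0$, $\|\mcX_i'\| \leq B/m =: L$ almost surely, and $\sum_{i=1}^m \mE[(\mcX_i')^2] \preccurlyeq \mcV/m =: W$ by the i.i.d. assumption together with $\mE[\mcX_1^2]\preccurlyeq\mcV$. Note that $\|W\| = \|\mcV\|/m$ and $\tr W = \tr\mcV/m$, so the intrinsic dimension $\tr W/\|W\| = \tr\mcV/\|\mcV\|$ is unchanged and finite precisely because $\mcV$ is trace-class. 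The target is then a deviation bound for $\|\sum_i \mcX_i'\|$.

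The core is the matrix Laplace-transform method adapted to trace-class operators. For $\theta>0$, Markov's inequality gives $\Pr[\lambda_{\max}(\sum_i\mcX_i') \geq t] \leq e^{-\theta t}\,\mE[\tr \exp(\theta\sum_i\mcX_i')]$. I would bound the trace moment generating function by Lieb's concavity theorem, which yields subadditivity of the cumulant generating function, $\mE[\tr\exp(\theta\sum_i\mcX_i')] \leq \tr\exp(\sum_i \log\mE[e^{\theta\mcX_i'}])$. For each zero-mean, norm-bounded summand the standard operator estimate $\log\mE[e^{\theta\mcX_i'}] \preccurlyeq g(\theta)\,\mE[(\mcX_i')^2]$ with $g(\theta) = (e^{\theta L}-\theta L - 1)/L^2$ reduces the problem to $\tr\exp(g(\theta) W)$. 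Here the decisive step is Tropp's intrinsic-dimension trace inequality, which replaces the ambient (infinite) dimension by $\tr W/\|W\| = \tr\mcV/\|\mcV\|$ and produces a prefactor equal to the intrinsic dimension times a scalar exponential; combined with a union bound over $\pm\sum_i\mcX_i'$ to pass from $\lambda_{\max}$ to the operator norm, this gives a tail of the shape $\Pr[\|\sum_i\mcX_i'\| \geq t] \leq 4\,\frac{\tr\mcV}{\|\mcV\|}\exp\!\big(\frac{-t^2/2}{\|W\| + Lt/3}\big)$.

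Finally I would invert this tail. Setting the right-hand side equal to $\delta$ gives $\frac{t^2/2}{\|W\| + Lt/3} = \beta$ with $\beta = \log\frac{4\tr\mcV}{\|\mcV\|\delta}$, that is $t^2 - \frac{2\beta L}{3}t - 2\beta\|W\| = 0$. Using the elementary implication that $t^2 \leq at + b$ forces $t \leq a + \sqrt{b}$ for $a,b\geq 0$, and substituting $L = B/m$ and $\|W\| = \|\mcV\|/m$, one obtains $t \leq \frac{2B\beta}{3m} + \sqrt{\frac{2\|\mcV\|\beta}{m}}$, which is exactly the claimed bound.

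I expect the main obstacle to be the rigorous transfer of the finite-dimensional matrix-concentration machinery (the trace exponential, Lieb's theorem, and especially the intrinsic-dimension reduction) to the infinite-dimensional separable-Hilbert-space setting. This is precisely where the trace-class hypothesis on $\mcV$ and the Hilbert-Schmidt hypothesis on the $\mcX_i$ are used, and where I would lean on the existing operator versions (e.g. those of Minsker and Tropp) rather than re-deriving them from scratch, so that the only genuinely self-contained computation is the tail inversion in the last step.
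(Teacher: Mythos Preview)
Your proposal is correct and follows essentially the same approach as the paper: both obtain the intrinsic-dimension Bernstein tail bound by invoking the operator-valued extensions due to Minsker and Tropp, and then invert the resulting quadratic to get the high-probability deviation; your elementary inequality $t^2\le at+b\Rightarrow t\le a+\sqrt{b}$ is equivalent to the paper's use of $\sqrt{a+b}\le\sqrt{a}+\sqrt{b}$ on the quadratic formula. The one small point you glossed over is that the intrinsic-dimension tail bound is only asserted for $t$ above a threshold (roughly $\sqrt{\|W\|}+L/3$), and the paper verifies that the chosen $t^*$ exceeds this threshold via $\beta>1$; you should add this one-line check.
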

\begin{proof}
 Following from the argument in \cite[Section 4]{minsker2011some}, we can generalize \cite[Theorem 7.3.1]{tropp2012user} from a sequence of self-adjoint matrices to a sequence of self-adjoint Hilbert-Schmidt operators on a separable Hilbert space, and get that for any $t \geq \sqrt{{\|\mcV\|\over m}} + {B\over 3m},$
  \be\label{eq:conf}
  \Pr\left( \left\| {1 \over m} \sum_{i=1}^m \mcX_i \right\| \geq t \right) \leq  {4\tr \mcV \over \|\mcV\|} \exp\left( {- mt^2 \over 2\|\mcV\| + 2Bt/3 }\right).
  \ee
  Rewriting
  \bea
  {4\tr \mcV \over \|\mcV\|} \exp\left( {- mt^2 \over 2\|\mcV\| + 2Bt/3 }\right) = \delta,
  \eea
 as a quadratic equation with respect to the variable $t$, and then solving the quadratic equation, we get
  \bea
  t_0 = {B\beta \over 3m} + \sqrt{\left({B \beta \over 3m}\right)^2 + {2\beta \|\mcV\| \over m}} \leq {2B\beta \over 3m} + \sqrt{{2\beta \|\mcV\| \over m}} := t^*,
  \eea
  where we used $\sqrt{a+b} \leq \sqrt{a} + \sqrt{b},\forall a,b>0.$ Note that $\beta>1$, and thus $t_0\geq \sqrt{{\|\mcV\|\over m}} + {B\over 3m}.$ By
  \bea
  \Pr\left( \left\| {1 \over m} \sum_{i=1}^m \mcX_i \right\| \geq t_* \right) \leq \Pr\left( \left\| {1 \over m} \sum_{i=1}^m \mcX_i \right\| \geq t_0 \right),
  \eea
  and applying \eref{eq:conf} to bound the left-hand side, one can get the desire result.
\end{proof}

\begin{proof}[Proof of Lemma \ref{lem:proErr}]
The proof is essentially given by \cite{rudi2015less}, see also \cite{hsu2012random} for the finite dimensional cases.
  We will use Lemma \ref{lem:concentrSelfAdjoint} to prove the result.
  Let $\mcX_i = \TKL^{-1/2} (\TK - \TK_{\tilde{x}_i}) \TKL^{-1/2} .$ Then $\TKL^{-1/2} (\TK - \TXt) \TKL^{-1/2} = {1\over m}\sum_{i=1}^m \mcX_i.$  Obviously, for any $\mcX= \mcX_i$, $\mE[\mcX]=0,$ and
  \bea
  \|\mcX\| \leq \mE\left[\|\TKL^{-1/2} \TK_{\tilde{x}} \TKL^{-1/2} \|\right] + \|\TKL^{-1/2} \TK_{\tilde{x}} \TKL^{-1/2}\| \leq 2\mcN_{\infty}(\lambda),
  \eea
  where for the last inequality, we used
  \bea
  \|\TKL^{-1/2} \TK_{\tilde{x}} \TKL^{-1/2}\| \leq \tr(\TKL^{-1/2} \TK_{\tilde{x}} \TKL^{-1/2}) = \tr(\TKL^{-1} \TK_{\tilde{x}}) = \la \TKL^{-1} K_{\tilde{x}}, K_{\tilde{x}} \ra_{\HK} \leq \mcN_{\infty}(\lambda).
  \eea
  Also, by $\mE(A - \mE A)^2 \preccurlyeq \mE A^2,$
  \bea
  \mE\mcX^2 &\preccurlyeq& \mE(\TKL^{-1/2} \TK_{\tilde{x}} \TKL^{-1/2})^2 = \mE [\la \TKL^{-1} K_{\tilde{x}}, K_{\tilde{x}} \ra_{\HK} \TKL^{-1/2} K_{\tilde{x}} \otimes  K_{\tilde{x}} \TKL^{-1/2}] \\
  &\preccurlyeq & \mcN_{\infty}(\lambda) \mE [ \TKL^{-1/2} K_{\tilde{x}} \otimes  K_{\tilde{x}} \TKL^{-1/2}] = \mcN_{\infty}(\lambda) \TKL^{-1} \TK,
  \eea
  $\mcN(\TKL^{-1} \TK) \leq \kappa^2 \lambda^{-1}$ implied by \eref{eq:HK},
  and $1 \geq \|\TKL^{-1} \TK\| = {\|\TK\| \over \|\TK\| + \lambda} \geq 1/2, $ since $\lambda\leq \|\TK\|.$
  Now, the first part of this lemma can be proved by applying Lemma \ref{lem:concentrSelfAdjoint}.
The second part follows directly from the first part by a simple calculation.
The proof is complete.
\end{proof}
\begin{proof}[Proof of Lemma \ref{lem:sumY}]
The proof can be also found in \cite{lin2016optimal}.
  Following from \eref{noiseExp},
  \bea
  \int_{Z} y^{2l} d\rho \leq {1 \over 2} l! M^{l-2} \cdot (2M^2 v), \qquad \forall l\in \mN,
  \eea
  and
  \bea
  \int_{Z} y^{2} d\rho \leq   M  v.
  \eea
  Therefore,
  \bea
  \int_{Z} |y^2 - \mE y^2|^l d\rho &\leq& \int_{Z} \max(|y|^{2l}, (\mE y^2)^l) d\rho \\
  &\leq& \int_{Z} (|y|^{2l}+ (\mE y^2)^l) d\rho \\
  &\leq& {1 \over 2} l! M^{l-2} \cdot (2M^2 v) +   (M v)^l \\
  &\leq& {1 \over 2} l! (M v)^{l-2} (2M v)^2,
  \eea
  where for the last inequality we used $v \geq 1.$
  Applying Lemma \ref{lem:Bernstein}, with $\omega_i = y_i^2$ for all $i\in[n]$, $B= M v$ and $\sigma = 2Mv,$ we know that with probability at least $1-\delta_3,$ there holds
  \bea
  {1 \over n}\sum_{i=1}^n y_i^2 - \int_{Z}y^2 d\rho \leq 2Mv\left( {1 \over n} + {2 \over \sqrt{n} } \right) \log {2 \over \delta_3}.
  \eea
The proof is complete.
\end{proof}

\begin{proof}[Proof of Lemma \ref{lemma:neumanns}]
  A simple calculation shows that
  \bea
  A^{1/2} B^{-1} A^{1/2} = (I - A^{-1/2} (A-B) A^{-1/2})^{-1}.
  \eea
  Let $L = A^{-1/2} (A-B) A^{-1/2}.$
  By the condition $\|A^{-1/2} (A-B) A^{-1/2}\| \leq c <1,$ we know that $\|L\|\leq c<1$, thus the Neumann series $\sum_{k=1}^{\infty} L^k$ converges and moreover,
  \bea
  \left\|\sum_{k=1}^{\infty} L^k \right\|  = \|(I - L)^{-1}\| \leq {1 \over 1-c},
  \eea
  which leads to the desired result.
\end{proof}

\begin{proof}[Proof of Lemma \ref{lemma:i-p}]
   Note that
 \bea
  \|(I- P)L\|^2
  &=& \lambda_{\max}\left(L^*(I- P)^2 L\right) = \lambda_{\max}\left(L^*(I- P) L\right) \\
  &=&\sup_{f:\|f\|_{\mcH}=1} \la L^*(I- P) L f, f \ra_{\mcH} \\
  & =& \sup_{f:\|f\|_{\mcH}=1} \la (I- P)L f,  L f \ra_{\mcH}.
 \eea
 For any $f \in \mcH,$ by using the fact that $P$ is an orthogonal projection operator onto the range of $S$ (which implies $PS=S$ and thus $S^* = S^* P$),
 \bea
\la S(S^* S + \lambda I)^{-1} S^* f,  f \ra_{\mcH}
&=& \| (S^* S  + \lambda I)^{-1/2} S^* f \|_{\mcK}^2 \\
&=& \| (S^* S  + \lambda I)^{-1/2} S^* P f \|_{\mcK}^2 \\
&\leq& \| (S^* S  + \lambda I)^{-1/2} S^*\|^2 \|P f \|_{\mcH}^2 \\
&\leq& \|P f \|_{\mcH}^2 = \la P f, f\ra_{\mcH}.
 \eea
We thus know that $ S(S^* S + \lambda I)^{-1} S^* \preccurlyeq P,$ and therefore,
$$I - P \preccurlyeq I - S(S^* S + \lambda I)^{-1} S^* = I - (SS^* + \lambda I)^{-1} SS^*= \lambda (SS^*  + \lambda I)^{-1}.$$
Consequently, we have
\bea
  \|(I- P)L\|^2
  & =& \sup_{f:\|f\|_{\mcH}=1} \la (I- P)L f, L f \ra_{\mcH} \\
  &\leq& \sup_{f:\|f\|_{\mcH}=1} \la \lambda ( SS^*  + \lambda I)^{-1} L f, L f \ra_{\mcH} \\
  &=& \lambda \| L^* (SS^*   + \lambda I)^{-1} L \| \\
  &=&  \lambda \| (SS^*   + \lambda I)^{-1/2} L L^* (SS^*   + \lambda I)^{-1/2} \| .
 \eea
\end{proof}

\begin{proof}[Proof of Lemma \ref{lem:initialerror}]
   Let $\{\sigma_i\}$ be the sequence of eigenvalues for $L.$ Since $L$ is positive, we have $0\leq \sigma_i \leq \|L\|$ for all $i$, and thus
   \bea
  \| \Pi_{k+1}^t(L) (L+ \lambda)^{\zeta}\| = \sup_{i} \prod_{l=k+1}^t (1 - \eta_l \sigma_i)(\sigma_i + \lambda)^{\zeta}.
  \eea
  For $0< \zeta \leq 1,$ we have $(\lambda+ \sigma_i)^{\zeta} \leq \lambda^{\zeta} + \sigma_i^{\zeta},$ and thus
   \bea
  \| \Pi_{k+1}^t(L) (L+ \lambda)^{\zeta}\| \leq \sup_{i} \prod_{l=k+1}^t (1 - \eta_l \sigma_i)\sigma_i + \lambda^{\zeta}.
  \eea
  We only need to bound the first term of the right-hand side of the above.
  Using the basic inequality
  \be\label{expx}
  1 + x \leq \mathrm{e}^{x} \qquad \mbox{for all } x \geq -1,
  \ee
  with $\eta_l\|L\| \leq 1$, we get
  \bea
  \sup_{i} \prod_{l=k+1}^t (1 - \eta_l \sigma_i)\sigma_i &\leq& \sup_i \exp\left\{ - \sigma_i \sum_{l=k+1}^t \eta_l\right\} \sigma_i^{\zeta} \\
  &\leq &\sup_{x \geq 0} \exp\left\{ - x  \sum_{l=k+1}^t \eta_l\right\} x^{\zeta}.
  \eea
  Using \eref{exppoly}, and from the above analysis, one can get the desired result \eref{initialerror_interm}.
\end{proof}
\begin{proof}[Proof of Lemma \ref{lem:estimate1}]
Note that
  $$ \sum_{k=1}^{t} k^{-\theta}  \leq 1 + \sum_{k=2}^t \int_{k-1}^k u^{-\theta} d u = 1  + \int_{1}^t u^{-\theta} d u = {t^{1-\theta} - \theta\over 1-\theta} ,
 $$
 which leads to the first part of the desired result.
 Similarly,
  \bea
   \sum_{k=1}^t k^{-\theta} \geq \sum_{k=1}^t \int_{k}^{k+1}u^{-\theta} d u = \int_{1}^{t+1} u^{-\theta} d u = {(t+1)^{1-\theta} - 1\over 1-\theta},
  \eea
  and by mean value theorem, $(t+1)^{1-\theta} - 1 \geq (1-\theta)t (t+1)^{-\theta} \geq (1-\theta)t^{1-\theta}/2. $
  This proves the second part of the desired result. The proof is complete.
\end{proof}
\begin{proof}[Proof of Lemma \ref{lem:estimate1a}]
  Note that
  \bea
  \sum_{k=1}^{t} k^{-\theta} = \sum_{k=1}^{t} k^{-1} k^{1-\theta} \leq t^{\max(1-\theta,0)} \sum_{k=1}^{t} k^{-1},
  \eea
  and
  \bea
  \sum_{k=1}^t k^{-1} \leq 1 + \sum_{k=2}^t \int_{k-1}^k u^{-1} du = 1 + \log t.
  \eea
\end{proof}
\begin{proof}[Proof of Lemma \ref{lem:estimate2}]
  Note that
  \bea
  \sum_{k=1}^{t-1} {1 \over t-k} k^{-q} = \sum_{k=1}^{t-1} {k^{1-q} \over (t-k)k} \leq t^{\max(1-q,0)} \sum_{k=1}^{t-1} {1 \over (t-k)k},
  \eea
 and that by Lemma \ref{lem:estimate1a},
   \bea
  \sum_{k=1}^{t-1} {1 \over (t-k)k} = {1 \over t}\sum_{k=1}^{t-1} \left({1 \over t-k} + {1 \over k}\right) = {2 \over t}\sum_{k=1}^{t-1} {1 \over k} \leq {2\over t} (1+\log t).
  \eea
\end{proof}

\section{Bounding the Empirical Risk}
This subsection is devoted to the proof of Lemma \ref{lemma:empriskB}, where the basic idea is from \cite{lin2016optimal}.
We begin with the following classical result in convex optimization.
\begin{lemma}\label{lemma:mfejer} Given any sample $\bf z,$ and $l\in \mN$, let $f \in \HKx$ be independent from $\J_l$, then
\be\label{mfejer}
\eta_l \left( \mcE_{\bf z}(f_l) - \mcE_{\bf z}(f) \right)
\leq  \|f_{l}- f\|_{\HK}^2 - \mE_{\J_l}\|f_{l+1}- f\|_{\HK}^2 + \eta_l^2 \kappa^2 \mcE_{\bf z}(f_l).
\ee
\end{lemma}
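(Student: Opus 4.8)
The plan is to prove \eref{mfejer} as a one-step ``almost-Fej\'er'' estimate in the $\HK$-norm, the standard device for analyzing SGM. I would first rewrite the update \eref{eq:alg1} as $f_{l+1} = f_l - \eta_l G_l$ with the (projected) stochastic gradient
\[
G_l = {1 \over b}\sum_{i=b(l-1)+1}^{bl} (f_l(x_{j_i}) - y_{j_i})\, \Ptx K_{x_{j_i}},
\]
and then expand
\[
\|f_{l+1} - f\|_{\HK}^2 = \|f_l - f\|_{\HK}^2 - 2\eta_l \la G_l, f_l - f\ra + \eta_l^2 \|G_l\|_{\HK}^2 .
\]
Taking $\mE_{\J_l}$ on both sides, I would use that $f_l$ depends only on $\J_1,\dots,\J_{l-1}$ and that $f$ is independent of $\J_l$ by hypothesis, so the conditional expectation acts only on $G_l$ and its square.

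For the cross term, since the indices in $\J_l$ are i.i.d.\ uniform on $[n]$, the conditional mean of the gradient is the full projected empirical gradient, $\mE_{\J_l}[G_l] = {1\over n}\sum_{k=1}^n (f_l(x_k)-y_k)\,\Ptx K_{x_k}$. The key structural step is that $f_l - f \in \HKx$ (both iterates lie in $\HKx$, and $f$ is assumed to), so $\Ptx(f_l - f) = f_l - f$ and, using $\Ptx = VV^*$ from \eref{eq:projOper} together with the reproducing property \eref{eq:reproduce},
\[
\la \Ptx K_{x_k}, f_l - f\ra = \la K_{x_k}, f_l - f\ra = f_l(x_k) - f(x_k).
\]
Convexity of $u \mapsto (u - y_k)^2$ then gives $(f_l(x_k)-y_k)(f_l(x_k)-f(x_k)) \ge {1\over 2}[(f_l(x_k)-y_k)^2 - (f(x_k)-y_k)^2]$; averaging over $k\in[n]$ yields $2\la \mE_{\J_l}[G_l], f_l - f\ra \ge \mcE_{\bf z}(f_l) - \mcE_{\bf z}(f)$, which exactly matches the $\eta_l$-coefficient on the left of \eref{mfejer}.

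For the variance term I would apply Jensen's inequality across the mini-batch, $\|G_l\|_{\HK}^2 \le {1\over b}\sum_i \|(f_l(x_{j_i})-y_{j_i})\Ptx K_{x_{j_i}}\|_{\HK}^2$, bound $\|\Ptx K_{x_{j_i}}\|_{\HK} \le \|K_{x_{j_i}}\|_{\HK} \le \kappa$ (projections are contractions, together with \eref{eq:HK}), and take the expectation to obtain $\mE_{\J_l}\|G_l\|_{\HK}^2 \le \kappa^2 \mcE_{\bf z}(f_l)$. Substituting the mean and variance bounds into the expanded identity and rearranging produces \eref{mfejer} directly.

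The argument is essentially bookkeeping, and I do not expect a genuine analytic obstacle; the points that require care are the measurability and independence structure that lets $\mE_{\J_l}$ pass through $f_l$ and $f$, the factor of $2$ from the quadratic loss matching the $\eta_l$ term, and the projection identity eliminating $\Ptx$ --- which is precisely where the hypothesis $f\in\HKx$ is used.
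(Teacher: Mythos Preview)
Your proposal is correct and follows essentially the same one-step Fej\'er argument as the paper. The only cosmetic differences are that the paper first pulls out the projection via $f_{l+1}-f=\Ptx\bigl(f_l-f-\eta_l\,\text{(unprojected gradient)}\bigr)$ and uses non-expansiveness of $\Ptx$, whereas you keep the projected gradient and invoke self-adjointness of $\Ptx$ on $f_l-f\in\HKx$; and the paper bounds the square term by Cauchy--Schwarz while you use Jensen, both yielding the identical estimate $\kappa^2\mcE_{\bf z}(f_l)$.
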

\begin{proof}
First note that both $f_{l}$ and $f$ are in $\HKx$. Thus, $f_l = \Ptx f_l$ and $f=\Ptx f.$
Subtracting both sides of \eref{eq:alg1} by $f$, and taking the square $\HK$-norm,
\bea
\|f_{t+1} - f\|_{\HK}^2 &=& \left\|\Ptx \left(f_t - f -  {\eta_t \over b} \sum_{i= b(t-1)+1}^{bt} (f_t(x_{j_i}) - y_{j_i}) K_{x_{j_i}}\right)\right\|_{\HK}^2 \\
&\leq& \left\| f_t - f -  {\eta_t \over b} \sum_{i= b(t-1)+1}^{bt} (f_t(x_{j_i}) - y_{j_i}) K_{x_{j_i}}\right\|_{\HK}^2.
\eea
Expanding the inner product of the left-hand side,
\bea
\|f_{l+1} - f\|_{\HK}^2 \leq \|f_{l} - f\|_{\HK}^2  + {\eta_l^2 \over b^2}  \left\| \sum_{i=b(l-1)+1}^{bl} ( f_l({x_{j_i}}) - y_{j_i}) K_{x_{j_i}} \right\|_{\HK}^2  \\
+ {2\eta_l \over b} \sum_{i=b(l-1)+1}^{bl} (f_l({x_{j_i}}) - y_{j_i}) \la f - f_l, K_{x_{j_i}} \ra_{\HK}.
\eea
By using the reproducing property \eref{eq:reproduce} which implies
$\la f_l,  K_{x_{j_i}} \ra_{\HK} = f_l(x_{j_i})$, we get
\bea
\|f_{l+1} - f\|_{\HK}^2 \leq \|f_{l} - f\|_{\HK}^2  + {\eta_l^2 \over b^2}  \left\| \sum_{i=b(l-1)+1}^{bl} ( f_l(x_{j_i}) - y_{j_i}) K_{x_{j_i}} \right\|_{\HK}^2  \\
+ {2\eta_l \over b} \sum_{i=b(l-1)+1}^{bl} (f_l(x_{j_i}) - y_{j_i}) (f(x_{j_i}) - f_l(x_{j_i})).
\eea
By Assumption \eref{eq:HK}, $\|K_{x_{j_i}}\|_{\HK} \leq \kappa$, and thus
\bea
\left\| \sum_{i=b(l-1)+1}^{bl} (\la f_l, K_{x_{j_i}} \ra_{\HK} - y_{j_i}) K_{x_{j_i}} \right\|_{\HK}^2
&\leq& \left(\sum_{i=b(l-1)+1}^{bl} |f_l(x_{j_i}) - y_{j_i}| \kappa\right)^2 \\
&\leq& \kappa^2 b \sum_{i=b(l-1)+1}^{bl} (f_l(x_{j_i}) - y_{j_i})^2,
\eea
where for the last inequality, we used Cauchy-Schwarz inequality.
Thus,
\bea
\|f_{l+1}- f\|_{\HK}^2
 \leq \|f_{l} - f \|_{\HK}^2  +  { \eta_l^2 \kappa^2 \over b} \sum_{i=b(l-1)+1}^{bl} (f_l(x_{j_i}) - y_{j_i})^2  \\
 + {2\eta_l \over b} \sum_{i=b(l-1)+1}^{bl} (f_l(x_{j_i}) - y_{j_i}) (f(x_{j_i}) - f_l(x_{j_i}) ).
\eea
Using the basic inequality $a(b-a) \leq (b^2 - a^2)/2,\forall a,b \in \mR,$
\bea
\|f_{l+1}- f\|_{\HK}^2
\leq  \|f_{l}- f\|_{\HK}^2  + { \eta_l^2\kappa^2 \over b} \sum_{i=b(l-1)+1}^{bl} (f_l(x_{j_i}) - y_{j_i})^2  \\
  + {\eta_l \over b} \sum_{i=b(l-1)+1}^{bl} \left( (f(x_{j_i})  - y_{j_i})^2 - ( f_l(x_{j_i}) - y_{j_i})^2 \right).
\eea
Noting that $f_{l}$ and $f$ are independent from $\J_l$, and taking the expectation on both sides with respect to $\J_l,$
\bea
\mE_{\J_l}\|f_{l+1}- f\|_{\HK}^2
\leq  \|f_{l}- f\|_{\HK}^2  + \eta_l^2 \kappa^2 \mcE_{\bf z}(f_l)  + \eta_l \left( \mcE_{\bf z}(f) - \mcE_{\bf z}(f_l) \right),
\eea
which leads to the desired result by rearranging terms. The proof is complete.
\end{proof}
Using the above lemma and a decomposition related to the weighted averages and the last iterates from \cite{shamir2013stochastic,lin2015iterative}, we can prove the following relationship.
\begin{lemma}\label{lemma:empiricalRelat}
  Let $\eta_1 \kappa^2 \leq 1/2$ for all $t \in \mN.$ Then
  \be\label{empiricalRelat}
   \eta_t \mE_{\J} [\mcE_{\bf z}(f_t)]  \leq 4\mcE_{\bf z}(0) {1 \over t} \sum_{l=1}^t \eta_l + 2\kappa^2  \sum_{k=1}^{t-1}{1 \over k(k+1)}  \sum_{i=t-k}^{t-1} \eta_i^2\mE_{\J} [\mcE_{\bf z}(f_i)].
\ee
\end{lemma}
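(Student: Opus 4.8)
The plan is to combine the one-step ``almost-Fej\'er'' inequality of Lemma~\ref{lemma:mfejer} with the suffix-averaging device of \cite{shamir2013stochastic}, but applied to the \emph{step-size weighted} empirical risks rather than to the risks themselves, so that the quantity $\eta_t\mE_{\J}[\mcE_{\bf z}(f_t)]$ on the left-hand side is produced exactly. Write $R_i:=\mcE_{\bf z}(f_i)$ and, for $1\le k\le t$, set $S_k:=\frac1k\sum_{i=t-k+1}^{t}\eta_i R_i$. Then $S_1=\eta_t R_t$ is precisely the target quantity, while $S_t=\frac1t\sum_{i=1}^{t}\eta_i R_i$ is the weighted average over the whole trajectory. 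A direct computation gives the telescoping identity $\eta_t R_t=S_1=S_t+\sum_{k=1}^{t-1}(S_k-S_{k+1})$ with $S_k-S_{k+1}=\frac{1}{k(k+1)}\sum_{i=t-k+1}^{t}\bigl(\eta_i R_i-\eta_{t-k}R_{t-k}\bigr)$. It then suffices to bound $\mE_{\J}[S_t]$ and each gap $\mE_{\J}[S_k-S_{k+1}]$ separately.

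For the average term I would apply Lemma~\ref{lemma:mfejer} with the comparator $f=0$ (which lies in $\HKx$ and is independent of every $\J_l$), take $\mE_{\J}$, and sum over $l=1,\dots,t$. The distance terms telescope, and since $f_1=0$ and $\mE_{\J}\|f_{t+1}\|_{\HK}^2\ge0$ one obtains $\sum_{l=1}^{t}\eta_l\mE_{\J}[R_l]\le\mcE_{\bf z}(0)\sum_{l=1}^{t}\eta_l+\kappa^2\sum_{l=1}^{t}\eta_l^2\mE_{\J}[R_l]$. Here the hypothesis $\eta_1\kappa^2\le1/2$, together with monotonicity of $\{\eta_l\}$, gives $\kappa^2\eta_l^2\le\frac12\eta_l$, so the last sum is absorbed into the left-hand side, yielding $\mE_{\J}[S_t]\le\frac{2\mcE_{\bf z}(0)}{t}\sum_{l=1}^{t}\eta_l$, which already matches (with a favourable constant) the first term of \eref{empiricalRelat}.

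For the gaps I would apply Lemma~\ref{lemma:mfejer} with the comparator $f=f_{t-k}$. This choice is admissible because the projection step forces every iterate into $\HKx$, and $f_{t-k}$ depends only on $\J_1,\dots,\J_{t-k-1}$, hence is independent of $\J_l$ for all $l\ge t-k$; so the inequality holds conditionally on $\J_1,\dots,\J_{t-k-1}$ and survives $\mE_{\J}$. Summing over $l=t-k+1,\dots,t$ telescopes the distance terms down to $\mE_{\J}\|f_{t-k+1}-f_{t-k}\|_{\HK}^2$ (the remaining boundary term being nonpositive), and this is controlled by $\kappa^2\eta_{t-k}^2\mE_{\J}[R_{t-k}]$ via the same Cauchy--Schwarz estimate used inside the proof of Lemma~\ref{lemma:mfejer}. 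Using $\eta_{t-k}\ge\eta_i$ to replace $\eta_{t-k}R_{t-k}$ by $\eta_i R_{t-k}$ in the numerator of $S_k-S_{k+1}$, one arrives at $\mE_{\J}[S_k-S_{k+1}]\le\frac{\kappa^2}{k(k+1)}\sum_{i=t-k}^{t}\eta_i^2\mE_{\J}[R_i]$.

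Summing these gap bounds over $k$ and adding the bound on $S_t$ gives everything except the precise window and constant in the second term: the endpoint contributions $\kappa^2\eta_t^2\mE_{\J}[R_t]$ (the $i=t$ terms) occur with total coefficient $\sum_{k=1}^{t-1}\frac1{k(k+1)}=1-\frac1t<1$, so by $\eta_t\kappa^2\le1/2$ they are bounded by $\frac12\eta_t\mE_{\J}[R_t]$ and can be moved to the left-hand side; this simultaneously trims each inner window from $\sum_{i=t-k}^{t}$ to $\sum_{i=t-k}^{t-1}$ and produces the constants $4$ and $2$ of \eref{empiricalRelat} after multiplying through by $2$. I expect the part requiring the most care to be exactly this step-size bookkeeping: one must weight the suffix averages by $\eta_i$ so the telescoping is exact, invoke monotonicity of $\{\eta_l\}$ at the right places, and use $\eta_l\kappa^2\le1/2$ for \emph{both} absorptions (in $S_t$ and in the final endpoint cleanup). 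The remainder is the mechanical summation already prepared by Lemma~\ref{lemma:mfejer}.
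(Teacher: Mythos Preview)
Your proposal is correct and follows essentially the same approach as the paper: the same suffix-averaging telescoping identity, the same two applications of Lemma~\ref{lemma:mfejer} (with $f=0$ for the global average and $f=f_{t-k}$ for the gaps), the same use of monotonicity $\eta_{t-k}\ge\eta_i$ to pass from $\eta_i R_i-\eta_{t-k}R_{t-k}$ to $\eta_i(R_i-R_{t-k})$, and the same endpoint absorption of the $i=t$ contribution via $\eta_t\kappa^2\le 1/2$. The only cosmetic difference is that you sum Lemma~\ref{lemma:mfejer} from $l=t-k+1$ and bound the leftover boundary term $\mE_{\J}\|f_{t-k+1}-f_{t-k}\|_{\HK}^2\le\kappa^2\eta_{t-k}^2\mE_{\J}[R_{t-k}]$ separately, whereas the paper simply starts the sum at $l=t-k$ (whose contribution is identically zero on the left and gives precisely that boundary term on the right); the two are equivalent.
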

\begin{proof}
 For $k=1, \cdots, t-1$,
  \bea
  && {1 \over k} \sum_{i=t-k+1}^{t}  \eta_i \mE_{\J}[\mcE_{\bf z}(f_i)] - {1 \over k+1} \sum_{i=t-k}^t \eta_i \mE_{\J}[\mcE_{\bf z}(f_i)] \\
   &=& {1 \over k(k+1)} \left\{ (k+1)\sum_{i=t-k+1}^{t} \eta_i \mE_{\J}[\mcE_{\bf z}(f_i)] - k \sum_{i=t-k}^t \eta_i \mE_{\J}[\mcE_{\bf z}(f_i)] \right\}\\
 & =& {1 \over k(k+1)} \sum_{i=t-k+1}^{t} (\eta_i \mE_{\J}[\mcE_{\bf z}(f_i)] -  \eta_{t-k} \mE_{\J}[\mcE_{\bf z}(f_{t-k})]) .
  \eea
  Summing over $k=1, \cdots, t-1$, and rearranging terms, we get \cite{lin2015iterative}
\bea
   \eta_t \mE_{\J}[\mcE_{\bf z}(f_t)]  = {1 \over t} \sum_{i=1}^t \eta_i \mE_{\J} [\mcE_{\bf z}(f_i)] + \sum_{k=1}^{t-1} {1 \over k(k+1)} \sum_{i=t-k+1}^{t} (\eta_i \mE_{\J} [\mcE_{\bf z}(f_i)] -  \eta_{t-k} \mE_{\J}[\mcE_{\bf z}(f_{t-k})]) .
\eea
Since $\{\eta_t\}_t$ is decreasing and $\mE_{\J}[\mcE_{\bf z}(f_{t-k})]$ is non-negative, the above can be relaxed as
\be\label{decomposition}
   \eta_t \mE_{\J} [\mcE_{\bf z}(f_t)]  \leq {1 \over t} \sum_{i=1}^t \eta_i \mE_{\J}[\mcE_{\bf z}(f_i)] + \sum_{k=1}^{t-1} {1 \over k(k+1)} \sum_{i=t-k+1}^{t} \eta_i  \mE_{\J}[\mcE_{\bf z}(f_i) -  \mcE_{\bf z}(f_{t-k})] .
\ee
In the rest of the proof, we will upper bound the last two terms of the above.

To bound the first term of the right side of \eref{decomposition}, we apply Lemma \ref{lemma:mfejer} with $f=0$ to get
\bea
\eta_l \mE_{\J} \left( \mcE_{\bf z}(f_l) - \mcE_{\bf z}(0) \right)
\leq  \mE_{\J} [\|f_{l}\|_{\HK}^2 - \|f_{l+1}\|_{\HK}^2] + \eta_l^2 \kappa^2 \mE_{\J}[\mcE_{\bf z}(f_l)].
\eea
Rearranging terms,
\bea
\eta_l (1 - \eta_l \kappa^2) \mE_{\J}[ \mcE_{\bf z}(f_l)]
\leq  \mE_{\J} [\|f_{l}\|_{\HK}^2 - \|f_{l+1}\|_{\HK}^2] + \eta_l \mcE_{\bf z}(0).
\eea
 It thus follows from the above and  $\eta_l \kappa^2 \leq 1/2$ that
\bea
\eta_l \mE_{\J}[\mcE_{\bf z}(f_l)]/2  \leq \mE_{\J}[\|f_{l}\|_{\HK}^2  - \|f_{l+1}\|_{\HK}^2]    + \eta_l \mcE_{\bf z}(0).
\eea
Summing up over $l=1,\cdots,t,$
\bea
\sum_{l=1}^t \eta_l \mE_{\J}[\mcE_{\bf z}(f_l)]/2  \leq  \mE_{\J}[\|f_1\|_{\HK}^2  - \|f_{t+1}\|_{\HK}^2]    + \mcE_{\bf z}(0) \sum_{l=1}^t \eta_l .
\eea
Introducing with $f_1=0, \|f_{t+1}\|_{\HK}^2\geq 0$, and then multiplying both sides by $2/t,$ we get
\be\label{averageBound}
{1 \over t} \sum_{l=1}^t \eta_l \mE_{\J}[\mcE_{\bf z}(f_l)]  \leq  2 \mcE_{\bf z}(0) {1 \over t} \sum_{l=1}^t \eta_l .
\ee

It remains to bound the last term of \eref{decomposition}. Let $k \in [t-1]$ and $i\in \{t-k,\cdots, t\}.$ Note that given the sample $\bf z,$ $f_i$ is depending only on $\J_1,\cdots, \J_{i-1}$ when $i>1$ and $f_1=0.$
Thus, we can apply Lemma \ref{lemma:mfejer} with $f=f_{t-k}$ to derive
\bea
\eta_i \left( \mcE_{\bf z}(f_i) - \mcE_{\bf z}(f_{t-k}) \right)
\leq   \|f_{i}-f_{t-k}\|_{\HK}^2 - \mE_{\J_i}\|f_{i+1}-f_{t-k}\|_{\HK}^2 + \eta_i^2 \kappa^2 \mcE_{\bf z}(f_i).
\eea
Therefore,
\bea
\eta_i \mE_{\J}\left[ \mcE_{\bf z}(f_i) - \mcE_{\bf z}(f_{t-k}) \right]
\leq   \mE_{\J} [\|f_{i}-f_{t-k}\|_{\HK}^2 -\|f_{i+1}-f_{t-k}\|_{\HK}^2] + \eta_i^2 \kappa^2 \mE_{\J} [\mcE_{\bf z}(f_i)].
\eea
Summing up over $i=t-k,\cdots, t,$
\bea
\sum_{i=t-k}^t \eta_i \mE_{\J} \left [ \mcE_{\bf z}(f_i) - \mcE_{\bf z}(f_{t-k}) \right]
\leq   \kappa^2 \sum_{i=t-k}^t \eta_i^2 \mE_{\J} [\mcE_{\bf z}(f_i)].
\eea
Note that the left hand side is exactly $\sum_{i=t-k+1}^t \eta_i \mE_{\J}  \left[ \mcE_{\bf z}(f_i) - \mcE_{\bf z}(f_{t-k}) \right]$.
We thus know that the last term of \eref{decomposition} can be upper bounded by
\bea
&&\kappa^2  \sum_{k=1}^{t-1}{1 \over k(k+1)}  \sum_{i=t-k}^t \eta_i^2\mE_{\J} [\mcE_{\bf z}(f_i)] \\
&=& \kappa^2  \sum_{k=1}^{t-1}{1 \over k(k+1)}  \sum_{i=t-k}^{t-1} \eta_i^2\mE_{\J} [\mcE_{\bf z}(f_i)] + \kappa^2 \eta_t^2\mE_{\J} [\mcE_{\bf z}(f_t)] \sum_{k=1}^{t-1}{1 \over k(k+1)}.
\eea
Using the fact that
\bea
\sum_{k=1}^{t-1}{1 \over k(k+1)} = \sum_{k=1}^{t-1} \left({1 \over k} - {1 \over k+1}\right) = 1 - {1\over t} \leq 1,
\eea
and $\kappa^2 \eta_t \leq 1/2,$
we get that the last term of \eref{decomposition} can be bounded as
\bea
&&\sum_{k=1}^{t-1} {1 \over k(k+1)} \sum_{i=t-k+1}^{t} \eta_i ( \mE_{\J}[\mcE_{\bf z}(f_i)] -  \mE_{\J}[\mcE_{\bf z}(f_{t-k})]) \\
&\leq &\kappa^2  \sum_{k=1}^{t-1}{1 \over k(k+1)}  \sum_{i=t-k}^{t-1} \eta_i^2\mE_{\J} [\mcE_{\bf z}(f_i)] +  \eta_t\mE_{\J} [\mcE_{\bf z}(f_t)]/2.
\eea
Plugging the above and \eref{averageBound} into the decomposition \eref{decomposition}, and rearranging terms
\bea
   \eta_t \mE_{\J}[\mcE_{\bf z}(f_t)]/2  \leq 2\mcE_{\bf z}(0){1 \over t} \sum_{l=1}^t \eta_l + \kappa^2  \sum_{k=1}^{t-1}{1 \over k(k+1)}  \sum_{i=t-k}^{t-1} \eta_i^2\mE_{\J} [\mcE_{\bf z}(f_i)],
\eea
which leads to the desired result by multiplying both sides by $2$. The proof is complete.
\end{proof}

We also need to the following lemma, whose proof can be done by using an induction argument.
\begin{lemma}
  \label{lemma:induction}
  Let $\{u_t\}_{t=1}^T$, $\{A_t\}_{t=1}^T$ and  $\{B_t\}_{t=1}^T$ be three sequences of non-negative numbers such that
  $u_1 \leq A_1$ and
  \be\label{inductionAssu}
  u_t \leq A_t  + B_t \sup_{i \in [t-1]} u_i,\qquad \forall t\in\{2,3,\cdots, T\}.
  \ee
  Let $\sup_{t \in [T]} B_t \leq B < 1.$
  Then for all $t \in [T],$
  \be\label{inductionConse}
  \sup_{k \in [t]} u_t \leq   {1 \over 1 - B} \sup_{k \in [t]} A_k.
  \ee
\end{lemma}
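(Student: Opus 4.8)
The plan is to prove this by induction on $t$, exploiting the fact that $\sup_{k\in[t]}u_k = \max\left(\sup_{k\in[t-1]}u_k,\ u_t\right)$ so that the inductive hypothesis controls everything except the single new term $u_t$, which we then bound using the recursive assumption \eref{inductionAssu}.

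For the base case $t=1$, the hypothesis $u_1 \leq A_1$ together with $\frac{1}{1-B} \geq 1$ (since $0 \leq B < 1$) immediately gives $\sup_{k\in[1]}u_k = u_1 \leq A_1 \leq \frac{1}{1-B}\sup_{k\in[1]}A_k$. For the inductive step, I would assume $\sup_{k\in[t-1]}u_k \leq \frac{1}{1-B}\sup_{k\in[t-1]}A_k$ and treat the two parts of $\sup_{k\in[t]}u_k$ separately. The first part is handled directly: $\sup_{k\in[t-1]}u_k \leq \frac{1}{1-B}\sup_{k\in[t-1]}A_k \leq \frac{1}{1-B}\sup_{k\in[t]}A_k$, since enlarging the index set can only increase the supremum of the non-negative sequence $\{A_k\}$.

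The crux is bounding $u_t$. Using \eref{inductionAssu} and $B_t \leq B$, followed by the inductive hypothesis, I would write
\bea
u_t \leq A_t + B_t \sup_{i\in[t-1]}u_i \leq A_t + B\cdot \frac{1}{1-B}\sup_{k\in[t-1]}A_k \leq \left(1 + \frac{B}{1-B}\right)\sup_{k\in[t]}A_k = \frac{1}{1-B}\sup_{k\in[t]}A_k,
\eea
where the final equality is the key algebraic identity $1 + \frac{B}{1-B} = \frac{1}{1-B}$, and the middle step uses $A_t \leq \sup_{k\in[t]}A_k$. Since both $\sup_{k\in[t-1]}u_k$ and $u_t$ are now bounded by the same quantity $\frac{1}{1-B}\sup_{k\in[t]}A_k$, their maximum $\sup_{k\in[t]}u_k$ is as well, closing the induction and yielding \eref{inductionConse}.

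There is no real analytic obstacle here; the only subtlety to keep straight is the bookkeeping of supremum index sets — the inductive hypothesis is stated over $[t-1]$ while the target is over $[t]$, so one must repeatedly invoke monotonicity of the supremum in the index set (valid because all terms are non-negative) to pass from $\sup_{[t-1]}A_k$ to $\sup_{[t]}A_k$. The condition $B<1$ is essential: it is exactly what makes $\frac{1}{1-B}$ finite and what allows the geometric absorption of the recursive term.
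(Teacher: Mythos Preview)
Your proof is correct and follows essentially the same induction argument as the paper's own proof: both establish the base case from $u_1\le A_1$ and $B<1$, assume the bound on $\sup_{k\in[t-1]}u_k$, plug the inductive hypothesis into \eref{inductionAssu}, and collapse $1+\tfrac{B}{1-B}=\tfrac{1}{1-B}$ to close the induction. The only cosmetic difference is that you explicitly split $\sup_{k\in[t]}u_k$ into $\max(\sup_{k\in[t-1]}u_k,\,u_t)$, whereas the paper bounds $u_t$ and then states the consequence for the supremum directly.
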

\begin{proof}
  When $t=1,$ \eref{inductionConse} holds trivially since $u_1 \leq A_1$ and $B< 1$. Now assume for some $t \in \mN$ with $ 2\leq  t \leq T,$
  \bea
  \sup_{i \in [t-1]} u_i \leq {1 \over 1 - B} \sup_{i \in [t-1]} A_i.
  \eea
  Then, by \eref{inductionAssu}, the above hypothesis,  and $B_t \leq B$, we have
  \bea
  u_t \leq A_t  + B_t \sup_{i \in [t-1]} u_i \leq A_t + {B_t \over 1 - B} \sup_{i \in [t-1]} A_i \leq \sup_{i \in [t]} A_i \left(1 + {B_t \over 1 - B} \right) \leq \sup_{i \in [t]} A_i {1 \over 1 - B}.
  \eea
  Consequently,
  \bea
  \sup_{k \in [t]} u_t \leq   {1 \over 1 - B} \sup_{k \in [t]} A_k,
  \eea
  thereby showing that indeed \eref{inductionConse} holds for $t$.
  By mathematical induction, \eref{inductionConse} holds for every $t\in [T].$
  The proof is complete.
\end{proof}
Now we can bound $\mE_{\J}[\mcE_{\bf z}(f_k)]$ as follows.

\begin{proof}[Proof of Lemma \ref{lemma:empriskB}]
By Lemma \ref{lemma:empiricalRelat}, we have \eref{empiricalRelat}. Dividing both sides by $\eta_t$, we can relax the inequality as
\bea
    \mE_{\J}[\mcE_{\bf z}(f_t)]  \leq 4\mcE_{\bf z}(0) {1 \over \eta_t t} \sum_{l=1}^t \eta_l + 2\kappa^2 {1 \over \eta_t}  \sum_{k=1}^{t-1}{1 \over k(k+1)}  \sum_{i=t-k}^{t-1} \eta_i^2 \sup_{i \in [t-1]}\mE_{\J} [\mcE_{\bf z}(f_i)].
\eea
In Lemma \ref{lemma:induction},
we let  $u_t = \mE_{\J}[\mcE_{\bf z}(f_t)]$, $A_t = 4\mcE_{\bf z}(0){1 \over \eta_t t} \sum_{l=1}^t \eta_l$ and $$B_t = 2\kappa^2 {1 \over \eta_t}  \sum_{k=1}^{t-1}{1 \over k(k+1)}  \sum_{i=t-k}^{t-1} \eta_i^2.$$
Condition \eref{empriskBCon} guarantees that $\sup_{t \in [T]} B_t \leq 1/2.$ Also, $u_1 \leq A_1$ as $f_1=0.$
 Thus, \eref{inductionConse} holds, and the desired result follows by plugging with $B=1/2.$
The proof is complete.
\end{proof}
\end{document}